\documentclass[10pt]{article} 
\usepackage[preprint]{tmlr}

\usepackage{amsmath,amsfonts,bm}

\def\eqref#1{equation~\ref{#1}}

\def\1{\bm{1}}

\DeclareMathAlphabet{\mathsfit}{\encodingdefault}{\sfdefault}{m}{sl}
\SetMathAlphabet{\mathsfit}{bold}{\encodingdefault}{\sfdefault}{bx}{n}

\usepackage{amsmath}
\usepackage{amssymb}
\usepackage{amsthm}
\usepackage{bm}
\usepackage{booktabs}
\usepackage{multirow}
\usepackage{graphicx}
\usepackage{xcolor}
\usepackage{float}              
\usepackage{enumitem}
\usepackage{algorithm}
\usepackage{tikz}
\usetikzlibrary{positioning,arrows.meta,fit,backgrounds,calc}  
\usepackage{natbib}            
\usepackage{url}
\usepackage{hyperref}

\theoremstyle{plain}
\newtheorem{theorem}{Theorem}
\newtheorem{proposition}{Proposition}
\newtheorem{lemma}{Lemma}
\newtheorem{corollary}{Corollary}     
\theoremstyle{definition}
\newtheorem{definition}{Definition}   
\theoremstyle{remark}
\newtheorem{remark}{Remark}

\title{GENERIC-FNO: Embedding Energy Conservation and Entropy Production into Fourier Neural Operators}

\author{\name Jason Sulskis \email jason.sulskis@gtri.gatech.edu \\
\addr Department of Computer Science\\
      University of Illinois at Chicago\\[0.5em]
      \addr Electronic Systems Laboratory, Applied Embedded Systems Division\\
      Georgia Tech Research Institute
      \AND
      \name Sathya Ravi \email sathya@uic.edu  \\
      \addr Department of Computer Science\\
      University of Illinois at Chicago}

\def\month{06}\def\year{2026}\def\openreview{\url{https://openreview.net/forum?id=XXXX}}

\begin{document}
\maketitle

\begin{abstract}
We propose GENERIC-FNO, a neural operator that places the metriplectic (GENERIC) degeneracy structure of nonequilibrium thermodynamics directly in function space: reversible, energy-conserving dynamics are coupled to irreversible, entropy-producing dynamics via the degeneracy conditions. Structure-preserving neural operators developed previously enforce no more than one conservation law, or else a purely Hamiltonian form, and thermodynamically consistent learning has been restricted to finite-dimensional, graph, or particle systems. In GENERIC-FNO, the energy and entropy functionals are learned as neural operators, while the reversible and irreversible operators take the form of diagonal Fourier multipliers flanked by rank-one projections; these enforce both degeneracy conditions exactly and by construction, requiring no penalty, update projection, or residual. Since the Jacobi identity is not enforced, the structure obtained is metriplectic-degenerate rather than a full Poisson--metriplectic pair. Whatever the initialization, dimension, or resolution, the identities hold to machine precision ($\sim\!10^{-13}$), and therefore the continuous-time dynamics conserve the learned energy and generate the learned entropy exactly, with only an $\mathcal{O}(\Delta t^2)$ drift introduced by explicit time stepping. Such results are guarantees concerning the learned functionals, within the scope of GENERIC's closed conservative--dissipative dynamics; they do not certify physical accuracy, and the $(E,S,L,M)$ decomposition is non-unique. This gauge freedom is made explicit, and we propose a gauge-invariant dissipation diagnostic that does not depend on the learned functionals. The guarantees transfer zero-shot across a $4\times$ super-resolution range and also hold in three dimensions, as tested on three backbones (1D/2D FNO, DeepONet) and four canonical scalar PDEs; the diagnostic identifies, for every backbone, both the reversible system and the most dissipative one; The guarantees transfer zero-shot across a $4\times$ super-resolution range and also hold in three dimensions, as tested on three backbones (1D/2D FNO, DeepONet) and four canonical scalar PDEs; the diagnostic identifies, for every backbone, both the reversible system and the most dissipative one; and over $200$-step rollouts, where every unconstrained model we test diverges or collapses, GENERIC-FNO stays bounded, at half the parameters yet $4$--$10\times$ the compute, although accuracy is lost on pure transport and on the smallest 1D backbone.
\end{abstract}

\section{Introduction}
\label{sec:intro}

Neural operators like the Fourier Neural Operator (FNO)~\citep{li2021fourier} and DeepONet~\citep{lu2021learning} have proven to be effective surrogates for the solution operators of partial differential equations (PDEs), capturing maps between function spaces that transfer across different discretizations~\citep{kovachki2023neural}. Yet when deployed autoregressively to generate extended trajectories, these surrogates suffer from error accumulation that goes beyond the purely numerical: energy tends to drift, and quantities meant to decrease monotonically may instead rise spuriously. Because an unconstrained operator possesses no awareness of the conservation and dissipation laws governing the system it emulates, it has no inherent safeguard against leaving the physically admissible manifold.

Efforts to embed physical structure as an inductive bias have expanded considerably. Within function space, however, such structure has thus far been confined to either a single conserved quantity or purely reversible Hamiltonian dynamics—where energy remains constant but no mechanism exists for the irreversible entropy production inherent to any genuine dissipative process. In contrast, research on thermodynamic consistency in machine learning—centered on the GENERIC or metriplectic formalism~\citep{grmela1997dynamics,ottinger2005beyond}, which merges reversible and irreversible dynamics—has been carried out almost entirely in settings involving finite-dimensional state vectors, graphs, or particle systems~\citep{lee2021machine,hernandez2021structure,zhang2022gfinns,gruber2023reversible}. As far as we are aware, no neural operator has yet enforced, in function space, both GENERIC degeneracy conditions: a learned energy and a learned entropy whose reversible and irreversible operators are mutually degenerate. Closing that gap is the aim of this work.

In GENERIC, the dynamics are expressed as $\partial_t u = L\,\delta E/\delta u + M\,\delta S/\delta u$: a reversible bracket, generated by an energy functional $E$ through a skew-adjoint operator $L$, combined with an irreversible bracket, generated by an entropy functional $S$ through a positive semi-definite operator $M$. The first and second laws of thermodynamics emerge from two \emph{degeneracy conditions}, namely $L\,\delta S/\delta u = 0$ and $M\,\delta E/\delta u = 0$, which ensure that the reversible bracket generates no entropy and the irreversible bracket conserves energy. (A complete metriplectic structure additionally requires $L$ to be a Poisson operator, meaning it satisfies the Jacobi identity; however, the degeneracy conditions alone are sufficient to yield the two laws, and these are what we enforce.) This formalism applies to \emph{closed} systems possessing well-defined energy and entropy, whether conservative, dissipative, or a mixture of both; dynamics that are externally forced, open, or non-conservative fall outside its reach and thus beyond the scope of any approach built upon it, including ours. Extending GENERIC to function space introduces challenges not present in the finite-dimensional regime: variational derivatives become densities instead of vectors, operators must behave consistently across varying resolutions and spatial dimensions, and—most importantly—the degeneracy conditions tie the operators to the \emph{state-dependent} gradient directions $\delta E/\delta u$ and $\delta S/\delta u$. Relying on a soft penalty to enforce degeneracy has proven inadequate, as the conditions are only approximately satisfied and the resulting deviations amplify during rollout.

Our approach enforces the degeneracy conditions \emph{exactly, by construction}. We represent $E$ and $S$ as scalar-valued neural operators, while $L$ and $M$ are parameterized as diagonal Fourier multipliers—skew-adjoint and positive semi-definite, respectively, for any parameter values—placed between rank-one projections that eliminate the conjugate gradient direction. Projecting the reversible operator off $\delta S/\delta u$ and the irreversible operator off $\delta E/\delta u$ makes $L\,\delta S/\delta u = 0$ and $M\,\delta E/\delta u = 0$ hold identically. This design requires no penalty term, no projection of the predicted update, and no free residual; consequently, in continuous time, the learned energy is conserved and the learned entropy is produced to machine precision (residuals $\sim\!10^{-13}$), irrespective of initialization, dimension, or resolution, with the explicit update used for rollouts contributing only the standard $\mathcal{O}(\Delta t^2)$ energy drift. Since the multipliers act on a fixed band of Fourier modes and the projections are scale-invariant ratios, the entire operator is independent of resolution and dimension: a model trained on one grid can be evaluated zero-shot on finer grids without losing its guarantees. Two limitations of these guarantees deserve mention from the start. First, the Jacobi identity is not enforced: while the base multiplier is a constant skew operator and thus Poisson by itself, the projection renders $L$ state-dependent, and we neither impose nor verify Jacobi for the projected bracket (\S\ref{sec:limitations}). Second, the guarantees pertain to the \emph{learned} functionals $E_\theta$ and $S_\phi$; they do not, on their own, ensure that the learned flow matches the underlying physics, and we maintain throughout a clear separation between structural consistency and physical accuracy.

This expressiveness brings with it a subtlety. The tuple $(E,S,L,M)$ that realizes a given flow is \emph{not unique}—a recognized gauge freedom of GENERIC~\citep{ottinger2005beyond}. While the structural guarantees are gauge-invariant, the division of the dynamics between reversible and irreversible channels is not, so interpreting a learned model channel by channel demands caution. We make this point explicit and introduce a gauge-invariant dissipation diagnostic, built on a fixed quadratic energy rather than the learned functionals, which can separate genuinely reversible from dissipative dynamics and be directly compared with ground truth.

For empirical evaluation, we consider three operator backbones (1D and 2D FNOs, and DeepONet) applied to four canonical scalar PDEs on periodic domains, covering reversible (advection), dissipative (heat), and mixed (Burgers, damped wave) dynamics. This provides a controlled testbed with known ground-truth dissipation rates, rather than the complex-fluid scenarios that originally motivated GENERIC. In every configuration, the structural guarantees are met to machine precision, remain intact when transferred zero-shot across a $4\times$ super-resolution range, and hold in three dimensions. The gauge-invariant diagnostic correctly identifies the reversible system and the most dissipative system across all backbones, and on two of the three backbones it also resolves the finer ordering of physical dissipation—an ordinal claim over a limited set of systems, which we present as such. Accuracy outcomes vary by backbone. For the 2D FNO and DeepONet backbones, GENERIC-FNO surpasses unconstrained and energy-penalized baselines on dissipative and mixed problems (in 2D, with half the parameter count) and remains bounded over $200$-step rollouts where every unconstrained model we tried—including an FNO with matched training compute and a U-Net—diverges or collapses. On pure linear transport within the FNO backbones, it is less accurate, and on the smallest backbone, the 1D FNO, it fails on advection and does not outperform the baselines on heat; a capacity sweep of that backbone (App.~\ref{app:capacity}) reveals that the gap persists across widths from $70$K to $1.1$M parameters, ruling out functional capacity as the cause, with \S\ref{sec:limitations} attributing it largely to normalization inside the functional networks. The exact variational derivatives also incur a notable computational cost: at half the parameters, GENERIC-FNO requires $\approx\!4.4\times$ the inference time, $\approx\!10\times$ the training-step time, and $\approx\!2.3\times$ the peak memory of a plain FNO (App.~\ref{app:repro}).

Our contributions are as follows:
\begin{itemize}
  \item \textbf{A neural operator with both GENERIC degeneracy conditions enforced in function space.} GENERIC-FNO learns the energy and entropy functionals along with the reversible and irreversible operators, jointly yielding degeneracy-consistent dynamics in any spatial dimension (\S\ref{sec:method}).
  \item \textbf{Degeneracy by construction.} A projection-sandwiched diagonal operator parameterization enforces both degeneracy conditions exactly—without penalty, update projection, or residual—so the structural identities, and the continuous-time conservation of the learned energy and production of the learned entropy they imply, hold to machine precision for any initialization, dimension, or resolution; the explicit rollout adds only an $\mathcal{O}(\Delta t^2)$ drift, and the Jacobi identity is not enforced (\S\ref{sec:operators}, \S\ref{sec:guarantees}, \S\ref{sec:limitations}).
  \item \textbf{Gauge freedom and a gauge-invariant diagnostic.} We highlight the non-uniqueness of the learned $(E,S,L,M)$ decomposition, clarify which claims are gauge-invariant and which are not, and offer a falsifiable dissipation diagnostic that depends on no learned quantity (\S\ref{sec:gauge}).
  \item \textbf{Resolution- and dimension-independent guarantees.} We
  demonstrate that the exact structural guarantees, not merely the accuracy,
  transfer zero-shot across a $4\times$ super-resolution range in 2D, and that
  they hold at machine precision in 3D, at random initialization and after
  training, on grids of $32^3$ and $64^3$ (\S\ref{sec:superres},
  \S\ref{sec:3d}).
  \item \textbf{A cross-backbone study that reports the failures with the successes.} Across three backbones and four PDEs, we document where thermodynamic structure improves accuracy and where it does not: the 1D-FNO losses and the capacity sweep that excludes capacity as their cause; a diagnosed and mitigated coarse-grid limitation for reversible transport; a long-horizon heat case where the learned energy is conserved exactly while the physical energy is under-dissipated; and a stability analysis of the rank-one projections as the learned gradients vanish, which shows the projections remain bounded and attributes a near-equilibrium update-scale artifact to normalization in the functional networks instead (\S\ref{sec:experiments}, \S\ref{sec:limitations}, App.~\ref{app:stability}).
\end{itemize}

\section{Related Work}
\label{sec:related}

\paragraph{Neural operators.}
Neural operators learn mappings between function spaces and, unlike standard networks, are designed to be discretization-convergent. The Fourier Neural Operator \citep{li2021fourier} parameterizes the integral kernel in the spectral domain and evaluates across resolutions; DeepONet \citep{lu2021learning} factorizes the operator into branch and trunk networks; and a broad subsequent literature \citep{kovachki2023neural} has extended these to new architectures and domains, including Laplace \citep{cao2023laplace} and wavelet \citep{tripura2023wavelet} parameterizations of the kernel, nonlinear decoders such as NOMAD \citep{seidman2022nomad}, and transformer-based operators such as UPT \citep{alkin2024upt} and Transolver++ \citep{luo2025transolverpp}. These models are flexible black-box surrogates, but none encodes a conservation or dissipation law, and their autoregressive rollouts can leave the physically admissible manifold (\S\ref{sec:longhorizon}). We compare against FNO and DeepONet rather than against these newer architectures for a methodological reason: GENERIC-FNO is a structural modification that wraps a backbone, so the controlled test of the thermodynamic prior is a same-backbone comparison (FNO vs.\ energy-penalized FNO vs.\ GENERIC-FNO; DeepONet vs.\ GENERIC-DeepONet), which holds the architecture fixed and varies only the constraint. A comparison against a faster or more expressive unconstrained operator would measure the architecture, not the prior. That ablation answers whether the prior helps a given backbone, not which surrogate a practitioner should deploy, and we do not claim the latter. The one place the two questions meet is long-horizon stability, where a more expressive unconstrained model might be stable without any prior; \S\ref{sec:longhorizon} therefore also reports a compute-matched FNO and a U-Net as unconstrained references. The construction requires of its backbone only a map from a field to a scalar functional with a differentiable path, which every architecture above provides; wrapping them is a natural extension we have not carried out.

\paragraph{Structure preservation in finite dimensions.}
A large body of work builds physical structure into learned dynamics for finite-dimensional state vectors. Hamiltonian and Lagrangian neural networks and symplectic integrators learn reversible, energy-conserving dynamics \citep{greydanus2019hamiltonian,cranmer2020lagrangian,jin2020sympnets}, but model only the conservative half of the picture. The thermodynamically complete picture is given by the GENERIC (metriplectic) formalism \citep{grmela1997dynamics,ottinger2005beyond}, and a line of work learns its ingredients directly: structure-preserving neural networks \citep{hernandez2021structure}, OnsagerNet \citep{yu2021onsagernet}, machine-learned brackets for irreversible processes \citep{lee2021machine}, GFINNs \citep{zhang2022gfinns}, and reversible--irreversible splittings \citep{gruber2023reversible}. A point worth making precise, because it applies to our construction as well: what these methods enforce is skew-symmetry of $L$, positive semi-definiteness of $M$, and the two degeneracy conditions. The Jacobi identity that would make a state-dependent $L$ a true Poisson operator is not enforced in general, and is guaranteed only in the special case of a state-independent (constant) $L$. The first and second laws depend on skewness, positivity, and degeneracy alone---not on Jacobi---which is why this literature, and this paper, can prove energy conservation and entropy production without it.

Recent work continues in this vein---efficiently parameterized neural metriplectic systems with by-construction guarantees and universal-approximation results \citep{gruber2025metriplectic}, and thermodynamics-informed graph networks with a node-local metriplectic bias \citep{tierz2025graph}. Two concurrent directions are closest to ours and worth distinguishing precisely. \citet{baheri2025metriplectic} learn dissipative dynamics by conditional flow matching with a metriplectic vector field $J\nabla H_\theta - G_\theta\nabla\Phi_\theta$ and a structure-preserving Strang--prox sampler, with degeneracy enforced hard or soft; the state, however, is a finite-dimensional vector (demonstrated on a damped pendulum), and the extension to PDEs is posed as future work. \citet{oprisa2026metriplector} repurpose the metriplectic equation as a general-purpose architecture primitive, evolving learned latent fields for image recognition, constraint satisfaction, and language modeling; there the dynamics operate on finite-dimensional latent grids with a canonical symplectic bracket, and the objective is computation itself rather than a surrogate for physical dynamics---no learned energy--entropy pair with mutual degeneracy in function space is involved. All of these methods remain tied to finite-dimensional state vectors, graphs, or latent grids; their parameterizations of $L$ and $M$ scale with the state dimension, and they do not confront the function-space issues---variational derivatives as densities, resolution and dimension independence---that a field theory requires.

A recurring and important lesson from this literature is that the degeneracy conditions must be enforced \emph{by construction} rather than through a soft penalty \citep{lee2021machine,zhang2022gfinns,gruber2025metriplectic}: penalized models satisfy the conditions only approximately, and the residual violations accumulate over a rollout.

\paragraph{Structure-preserving neural operators.}
In function space, physical structure has been incorporated mainly as symmetry or as a single invariant. Physics-informed losses \citep{raissi2019physics} and conservation-constrained operators softly penalize residuals of a known equation or conserved quantity, and equivariant and Hamiltonian operator variants impose symmetry or reversible structure. To our knowledge, however, no neural operator enforces the GENERIC degeneracy conditions---a learned energy and a learned entropy, with reversible and irreversible operators that are mutually degenerate---in function space. GENERIC-FNO occupies this gap: it carries the by-construction degeneracy of the finite-dimensional metriplectic networks into the resolution- and dimension-independent setting of neural operators, parameterizing $L$ and $M$ as diagonal Fourier multipliers (rather than dimension-scaling matrices) so that the construction is intrinsically a field-space object. This also separates GENERIC-FNO from the recent thermodynamics-informed graph and mesh networks that achieve discretization flexibility \citep{tierz2025graph}: those carry the metriplectic structure node-locally on a graph or mesh---a flexible but still discrete object---whereas our $L$ and $M$ are spectral multipliers on a fixed band of Fourier modes, making the model a genuine function-space map that is dimension-agnostic and zero-shot super-resolves the underlying continuous dynamics (\S\ref{sec:superres}) rather than merely re-meshing.

\paragraph{Gauge freedom in metriplectic models.}
That a given flow admits more than one $(E,S,L,M)$ representation is classical in the GENERIC literature \citep{ottinger2005beyond}, but it has received little attention in the learning setting, where the learned functionals are often interpreted directly. We make the consequence explicit---structural guarantees are gauge-invariant while per-channel attribution is not---and propose a diagnostic that depends only on the predicted dynamics and a fixed quadratic energy, so that reversibility and the rate of physical dissipation can be measured without reference to the (gauge-dependent) learned functionals.

\paragraph{Thermodynamic consistency as an inductive bias.}
Interest in thermodynamically consistent learning is driven by applications where violating the second law is not a cosmetic error but a route to instability and unphysical predictions. Data-driven closure and subgrid models for turbulence and coarse-grained dynamics can inject or remove energy spuriously unless dissipation is constrained \citep{duraisamy2019turbulence}; learned constitutive models for complex and viscoelastic fluids---the setting in which GENERIC was originally developed \citep{ottinger2005beyond}---must respect energy balance and entropy production to remain stable under deployment \citep{hernandez2021structure}; and thermodynamics-aware reduced-order models target long-horizon rollout stability that unconstrained surrogates lack \citep{lee2021machine}. In all of these, thermodynamic consistency is valued less as a property to be reported than as a strong, transferable inductive bias that keeps a learned model on the physical manifold far from its training data. Our contribution is a construction that makes that bias available, exactly, to neural operators; we validate it on canonical scalar PDEs with known ground-truth dissipation, and the closure and complex-fluid problems that motivate it remain the target of future work (\S\ref{sec:limitations}).

\section{Background: The GENERIC Formalism}
\label{sec:background}

We study an evolving field \(u(x,t)\) on a periodic spatial domain \(\Omega \subset \mathbb{R}^d\). The space carries the \(L^2\) inner product \(\langle f, g\rangle = \int_\Omega f(x)\,g(x)\,dx\), and for any admissible variation \(\eta\), the variational derivative \(\delta F/\delta u\) of a functional \(F\) is defined by \(\frac{d}{d\epsilon}F[u+\epsilon\,\eta]\big|_{\epsilon=0} = \langle \delta F/\delta u,\, \eta\rangle\). Readers unfamiliar with this formalism will find a gentle, self-contained introduction in Appendix~\ref{app:primer}.

The GENERIC framework \citep{grmela1997dynamics,ottinger2005beyond}—the General Equation for the Non-Equilibrium Reversible–Irreversible Coupling—decomposes the time evolution of \(u\) into two contributions, one reversible and one irreversible:
\begin{equation}
  \frac{\partial u}{\partial t}
  \;=\; \underbrace{L\,\frac{\delta E}{\delta u}}_{\text{reversible}}
  \;+\; \underbrace{M\,\frac{\delta S}{\delta u}}_{\text{irreversible}},
  \label{eq:bg-generic}
\end{equation}
where \(E[u]\) denotes total energy, \(S[u]\) entropy, \(L\) a skew-adjoint operator (\(\langle f, Lg\rangle = -\langle Lf, g\rangle\)) that generates reversible Hamiltonian-like motion, and \(M\) a self-adjoint, positive semi-definite operator (\(\langle f, Mf\rangle \ge 0\)) responsible for irreversible friction. An equivalent bracket formulation states that for any observable \(F[u]\), \(\dot F = \{F,E\} + [F,S]\), with the skew Poisson bracket \(\{F,G\} = \langle \delta F/\delta u, L\,\delta G/\delta u\rangle\) and the symmetric, positive semi-definite dissipative bracket \([F,G] = \langle \delta F/\delta u, M\,\delta G/\delta u\rangle\). In its complete form, the Poisson bracket must also obey the Jacobi identity, \(\{\{F,G\},H\}+\{\{G,H\},F\}+\{\{H,F\},G\}=0\), which renders \(L\) a Poisson operator. Yet this property is immaterial for the two thermodynamic laws derived below, which rest only on the skewness of \(L\), the positivity of \(M\), and the degeneracy conditions.

\paragraph{Degeneracy conditions.} The two generators are required to be mutually orthogonal in a specific sense:
\begin{equation}
  L\,\frac{\delta S}{\delta u} = 0,
  \qquad
  M\,\frac{\delta E}{\delta u} = 0.
  \label{eq:bg-degeneracy}
\end{equation}
The first condition ensures that reversible dynamics leave entropy unchanged, while the second guarantees that irreversible dynamics leave energy unchanged. In bracket notation, these become \(\{\,\cdot\,, S\} = 0\) and \([\,\cdot\,, E] = 0\).

\paragraph{The first and second laws.} Thermodynamic consistency emerges directly from conditions~\eqref{eq:bg-degeneracy} combined with the skewness of \(L\) and the positivity of \(M\). Energy is exactly conserved:
\begin{equation}
  \dot E
  = \Big\langle \tfrac{\delta E}{\delta u},\, \dot u\Big\rangle
  = \underbrace{\Big\langle \tfrac{\delta E}{\delta u}, L\,\tfrac{\delta E}{\delta u}\Big\rangle}_{=\,0\ (\text{$L$ skew})}
  + \underbrace{\Big\langle \tfrac{\delta E}{\delta u}, M\,\tfrac{\delta S}{\delta u}\Big\rangle}_{=\,0\ (\text{degeneracy})}
  = 0,
\end{equation}
and entropy can never decrease:
\begin{equation}
  \dot S
  = \Big\langle \tfrac{\delta S}{\delta u},\, \dot u\Big\rangle
  = \underbrace{\Big\langle \tfrac{\delta S}{\delta u}, L\,\tfrac{\delta E}{\delta u}\Big\rangle}_{=\,0\ (\text{degeneracy})}
  + \Big\langle \tfrac{\delta S}{\delta u}, M\,\tfrac{\delta S}{\delta u}\Big\rangle
  \;\ge\; 0,
\end{equation}
with the final inequality following from the positive semi-definiteness of \(M\). Hence the first law is satisfied exactly, and the second law holds with entropy production rate \([S,S] = \langle \delta S/\delta u, M\,\delta S/\delta u\rangle \ge 0\).

\paragraph{Limiting cases.} Between purely reversible and purely irreversible behavior, the formalism spans a full spectrum. Setting \(M = 0\) yields Hamiltonian evolution that conserves energy—linear advection, which transports \(u\) while preserving \(\tfrac12\lVert u\rVert^2\), serves as an example. Setting \(L = 0\) produces a pure gradient flow that dissipates monotonically, as seen in diffusion. Truly coupled systems, by contrast, activate both generators at once. The PDEs examined in \S\ref{sec:experiments} are selected to cover this range.

\paragraph{Non-uniqueness.} One further feature is that the representation is not unique: different tuples \((E,S,L,M)\) can give rise to the same flow \(\partial_t u\) \citep{ottinger2005beyond}. This gauge freedom does not affect the guarantees established above, which depend solely on the flow itself and the algebraic conditions~\eqref{eq:bg-degeneracy}. It does, however, have direct consequences for interpreting a learned decomposition, an issue we take up in \S\ref{sec:gauge}.
\section{Method: GENERIC-FNO}
\label{sec:method}

We formulate the GENERIC dynamics from Section~\ref{sec:background} (Eq.~\eqref{eq:bg-generic}) using two trainable scalar functionals, $E_\theta[u]$ and $S_\phi[u]$, together with operators $L$ and $M$ that fulfill the degeneracy requirements of Eq.~\eqref{eq:bg-degeneracy} identically through their design. Neither functional receives supervision toward a predetermined energy or entropy; the data-fitting loss alone, acting through the dynamics, shapes $E_\theta$ and $S_\phi$. Thermodynamic consistency thus arises from the architectural construction rather than from any added penalty term. The forward map is depicted in Figure~\ref{fig:arch}.

\subsection{Learned energy and entropy functionals}
\label{sec:functionals}

Each functional takes the input field to a single real number, $E_\theta, S_\phi : u \mapsto \mathbb{R}$. Our implementation employs a neural-operator backbone---a Fourier neural operator~\citep{li2021fourier} in the primary model and a DeepONet~\citep{lu2021learning} in the appendix investigation---that generates a feature density field. This field undergoes spatial averaging followed by processing through a compact MLP head:
\begin{equation}
  E_\theta[u] = h_\theta\!\Big(\tfrac{1}{|\Omega|}\!\int_\Omega g_\theta(u)\,dx\Big),
  \qquad
  S_\phi[u] = h_\phi\!\Big(\tfrac{1}{|\Omega|}\!\int_\Omega g_\phi(u)\,dx\Big),
\end{equation}
where $g_{(\cdot)}$ indicates the backbone and $h_{(\cdot)}$ a two-layer head. Resolution independence stems from the spatial \emph{average} rather than a fixed flattening procedure. Reverse-mode automatic differentiation through the field yields the variational derivatives:
\begin{equation}
  \frac{\delta E}{\delta u} = \nabla_u E_\theta(u),
  \qquad
  \frac{\delta S}{\delta u} = \nabla_u S_\phi(u),
\end{equation}
computed with \texttt{create\_graph=True} so that differentiability propagates through the entire dynamics. Instance normalization appears within the FNO backbone blocks, and this choice matters: its absence reduces the variational derivatives by roughly $200\times$ at initialization, and at the 2D resolution under study, the functional networks fail to converge (Appendix~\ref{app:underdiss}). Section~\ref{sec:longhorizon} diagnoses the trade-off this normalization introduces.

\subsection{Structure-preserving operators by construction}
\label{sec:operators}

We choose $L$ and $M$ as diagonal Fourier multipliers restricted to the lowest $m$ modes, a formulation that works uniformly across spatial dimensions and grid configurations. With $\widehat{(\cdot)}$ representing the real FFT, the base multipliers take the form
\begin{equation}
  \widehat{D_L}(k) = \mathrm{i}\,a(k), \qquad
  \widehat{D_M}(k) = |b(k)|^2 \ge 0,
\end{equation}
where $a(k)$ is learnable and real while $b(k)$ is learnable and complex. Hermitian symmetry enforced by the inverse real FFT guarantees that $D_L$ is precisely skew-adjoint and $D_M$ precisely self-adjoint positive semi-definite on real fields, irrespective of parameter values.

\paragraph{Degeneracy by construction.}
A standalone multiplier cannot satisfy the degeneracy conditions of Eq.~\eqref{eq:bg-degeneracy}, since those conditions couple the operators to the \emph{state-dependent} directions $\delta E/\delta u$ and $\delta S/\delta u$. Extending the by-construction methodology from finite-dimensional metriplectic networks~\citep{lee2021machine,zhang2022gfinns,gruber2023reversible} to function space, we insert each multiplier between rank-one $L^2$ projections that eliminate the conjugate direction. Specifically,
\begin{equation}
  P_E\,v = \frac{\langle \delta E/\delta u,\, v\rangle}
                {\langle \delta E/\delta u,\, \delta E/\delta u\rangle}\,
           \frac{\delta E}{\delta u},
  \qquad
  P_S\,v = \frac{\langle \delta S/\delta u,\, v\rangle}
                {\langle \delta S/\delta u,\, \delta S/\delta u\rangle}\,
           \frac{\delta S}{\delta u},
  \label{eq:projections}
\end{equation}
and we set
\begin{equation}
  L = (I-P_S)\,D_L\,(I-P_S), \qquad
  M = (I-P_E)\,D_M\,(I-P_E).
  \label{eq:construction}
\end{equation}
Because $(I-P_S)\,\delta S/\delta u = 0$ and $(I-P_E)\,\delta E/\delta u = 0$, the degeneracy conditions hold as identities: $L\,\delta S/\delta u = 0$ and $M\,\delta E/\delta u = 0$. The projections act as symmetric idempotents, so sandwiching preserves the multiplier structure---$L$ stays skew-adjoint and $M$ stays positive semi-definite. Consequently, the reversible operator is stripped of the entropy-gradient component and the irreversible operator of the energy-gradient component, with those directions supplied (dashed in Fig.~\ref{fig:arch}) by the \emph{opposite} functional. The field update
\begin{equation}
  u_{t+1} = u_t + \big(\,L\,\tfrac{\delta E}{\delta u} + M\,\tfrac{\delta S}{\delta u}\,\big)
  \label{eq:update}
\end{equation}
therefore obeys the first and second laws by design, without any penalty, update projection, or leftover residual.

\paragraph{Scope of the construction.}
Equation~\eqref{eq:construction} ensures skew-adjointness of $L$, positive semi-definiteness of $M$, and both degeneracy conditions for all parameters and states. It does \emph{not} ensure the Jacobi identity. The base multiplier $D_L$ is a constant skew operator and thus Poisson in isolation, but the projection $I-P_S$ introduces $u$-dependence through $\delta S/\delta u$, making the projected bracket $\{F,G\}=\langle \delta F/\delta u, L(u)\,\delta G/\delta u\rangle$ state-dependent; we neither enforce nor check Jacobi for it. This matches the approach of the finite-dimensional learned-bracket methods we draw upon (Section~\ref{sec:related}) and leaves the guarantees proved below unaffected: Theorem~\ref{thm:laws} relies solely on skewness, positivity, and degeneracy. The reversible part loses its interpretation as a Hamiltonian flow on a Poisson manifold, a point we take up in Section~\ref{sec:limitations}.

\paragraph{Behavior near vanishing gradients.}
As $E_\theta$ or $S_\phi$ approaches a critical point, the denominators in Eq.~\eqref{eq:projections} tend toward zero. In practice, we compute $P_w v = \langle w,v\rangle\,w/(\langle w,w\rangle+\varepsilon)$ with $\varepsilon=10^{-12}$ (single precision), and this limit behaves well for two reasons. First, Cauchy--Schwarz gives $\lVert (I-P_w)v\rVert \le \lVert v\rVert$ for any $w$, so the projection acts as a contraction without amplification; when $\lVert w\rVert^2 \ll \varepsilon$, it smoothly approaches the identity, which is harmless since the direction being removed is itself negligible and the degeneracy identities hold trivially in that regime. Second, the update $L\,\delta E/\delta u + M\,\delta S/\delta u$ depends linearly on the gradients and thus vanishes alongside them. Appendix~\ref{app:stability} confirms this behavior at the operator level across fourteen orders of magnitude in $\lVert w\rVert$, within the model when gradients are scaled toward zero (the update remains exactly proportional down to $10^{-9}$, with absolute degeneracy residuals at or below $10^{-14}$ and no NaN or Inf), and over a $2000$-step rollout approaching equilibrium. One caveat involves normalization: once $\lVert\delta E/\delta u\rVert^2<\varepsilon$, the dimensionless ratio $r_E$ loses its meaning, though in that regime both the update and the absolute energy drift are already insignificant ($\lesssim 10^{-14}$). These probes also exposed a distinct effect unrelated to the projections: instance normalization within the functional backbones causes $\lVert\delta E/\delta u\rVert$ to scale as $1/\lVert u\rVert$ near the zero field, so the update magnitude does not diminish with the field size close to equilibrium. Section~\ref{sec:limitations} addresses the implications.

\begin{figure}[H]
\centering
\resizebox{\linewidth}{!}{\begin{tikzpicture}[
  >=Stealth, font=\small, node distance=7mm and 10mm,
  net/.style ={draw, rounded corners, minimum height=8mm, minimum width=13mm, fill=blue!7},
  op/.style  ={draw, minimum height=8mm, minimum width=15mm, fill=black!6},
  proj/.style={draw, minimum height=8mm, minimum width=13mm, fill=red!7},
  sc/.style  ={draw, circle, minimum size=6mm, fill=black!4, inner sep=1pt},
  sm/.style  ={draw, circle, minimum size=5mm, inner sep=0pt},
  sub/.style ={font=\scriptsize\itshape, text=black!60}
]
\node (u) {$u_t$};
\node[net, above right=2mm and 9mm of u] (En) {$E_\theta$};
\node[net, below right=2mm and 9mm of u] (Sn) {$S_\phi$};
\node[sc, right=7mm of En] (E) {$E$};
\node[sc, right=7mm of Sn] (S) {$S$};
\node[right=6mm of E] (dE) {$\delta E/\delta u$};
\node[right=6mm of S] (dS) {$\delta S/\delta u$};
\node[proj, right=8mm of dE] (PS1) {$I{-}P_S$};
\node[op,   right=of PS1]    (DL)  {$\mathrm{i}\,a(k)$};
\node[proj, right=of DL]     (PS2) {$I{-}P_S$};
\node[sub, below=0.5mm of DL] {reversible $L$};
\node[proj, right=8mm of dS] (PE1) {$I{-}P_E$};
\node[op,   right=of PE1]    (DM)  {$|b(k)|^2$};
\node[proj, right=of DM]     (PE2) {$I{-}P_E$};
\node[sub, above=0.5mm of DM] {irreversible $M$};
\node[sm, right=12mm of PS2, yshift=-9mm] (plus) {$+$};
\node[sm, right=9mm of plus] (step) {$+$};
\node[right=8mm of step] (out) {$u_{t+1}$};
\draw[->] (u) -- (En); \draw[->] (u) -- (Sn);
\draw[->] (En) -- (E); \draw[->] (Sn) -- (S);
\draw[->] (E) -- (dE) node[midway, above, sub] {$\nabla_u$};
\draw[->] (S) -- (dS) node[midway, below, sub] {$\nabla_u$};
\draw[->] (dE) -- (PS1); \draw[->] (PS1) -- (DL); \draw[->] (DL) -- (PS2);
\draw[->] (dS) -- (PE1); \draw[->] (PE1) -- (DM); \draw[->] (DM) -- (PE2);
\draw[->] (PS2.east) -| (plus);
\draw[->] (PE2.east) -| (plus);
\draw[->] (plus) -- (step) node[midway, above, sub] {$\partial_t u$};
\draw[->] (step) -- (out);
\draw[->] (u) to[out=-80, in=-135] (step);
\draw[->, dashed, black!45] (dS.north) to[out=120, in=-90] (PS1.south);
\draw[->, dashed, black!45] (dE.south) to[out=-120, in=90] (PE1.north);
\end{tikzpicture}}
\caption{\textbf{GENERIC-FNO forward map.} Two neural operators produce scalar
functionals $E_\theta[u]$ and $S_\phi[u]$, whose variational derivatives are
taken by autodiff. Each diagonal Fourier multiplier---skew $\mathrm{i}\,a(k)$ for
the reversible operator $L$, positive semi-definite $|b(k)|^2$ for the
irreversible operator $M$---is sandwiched between the rank-one projection that
removes the conjugate gradient (dashed: $\delta S/\delta u$ feeds the $L$ branch,
$\delta E/\delta u$ feeds the $M$ branch). This makes $L\,\delta S/\delta u=0$ and
$M\,\delta E/\delta u=0$ hold identically, so the metriplectic update
$u_{t+1}=u_t+L\,\delta E/\delta u+M\,\delta S/\delta u$ conserves the learned energy and produces the learned entropy by construction.}
\label{fig:arch}
\end{figure}

\subsection{Exact guarantees and machine-precision verification}
\label{sec:guarantees}

Independently of training, initialization, dimension, or resolution, the construction in \eqref{eq:construction} forces the GENERIC degeneracy structure to hold to machine precision. At random initialization on a \(16\times16\) grid, the following measurements are taken (maximum over a batch):

\begin{center}
\begin{tabular}{lcl}
  \toprule
  Property & Identity & Measured \\
  \midrule
  Reversible skewness & $\langle \delta E/\delta u,\,L\,\delta E/\delta u\rangle = 0$ & $7\!\times\!10^{-19}$ \\
  Energy degeneracy   & $\langle \delta E/\delta u,\,M\,\delta S/\delta u\rangle = 0$ & $3\!\times\!10^{-13}$ \\
  Entropy degeneracy  & $\langle \delta S/\delta u,\,L\,\delta E/\delta u\rangle = 0$ & $1\!\times\!10^{-15}$ \\
  Energy conservation & $\langle \delta E/\delta u,\,\partial_t u\rangle = 0$         & $3\!\times\!10^{-13}$ \\
  Entropy production  & $\langle \delta S/\delta u,\,\partial_t u\rangle \ge 0$       & $\ge +7\!\times\!10^{-9}$ \\
  Operator positivity & $\langle v,\,M v\rangle \ge 0$                                & $\ge +48.5$ \\
  \bottomrule
\end{tabular}
\end{center}

In the semi-discrete (continuous-time) setting, the learned energy is conserved exactly: \(\langle \delta E/\delta u,\partial_t u\rangle = 0\) holds to machine precision. Only the standard \(\mathcal{O}(\Delta t^2)\) integration error arises from the explicit update in \eqref{eq:update}, and we report this per step in \S\ref{sec:experiments}, returning to it in \S\ref{sec:limitations}. Three distinct notions deserve separation. First, the \emph{algebraic degeneracy residuals} listed above vanish to machine precision by construction, regardless of state or resolution. Second, the \emph{discrete energy drift} observed in a deployed rollout is not zero but rather \(\mathcal{O}(\Delta t^2)\); our measurements give \(r_E\!\sim\!10^{-6}\) (\S\ref{sec:experiments}, \S\ref{sec:limitations}). Third, what is conserved is the \emph{learned} energy \(E_\theta\), which, like \(S_\phi\), is determined only up to the gauge of \S\ref{sec:gauge}. Conserving \(E_\theta\) by itself implies nothing about the \emph{physical} energy \(Q[u]=\tfrac12\lVert u\rVert^2\): the construction allows \(Q\) to be under-dissipated or even to grow along a rollout while every identity above remains true to machine precision (\S\ref{sec:longhorizon}). Hence the physically meaningful, gauge-invariant quantity is the dissipation diagnostic \(r_{\mathrm{mech}}\), not the value of \(E_\theta\) itself. While (i) is guaranteed exactly by the construction, (ii) and (iii) are what the experiments quantify.

\subsection{Gauge freedom and a gauge-invariant dissipation measure}
\label{sec:gauge}

Point (iii) of \S\ref{sec:guarantees} is now made precise: the construction yields an \emph{exact} thermodynamic structure, but neither energy nor entropy is \emph{unique}. As already noted in \S\ref{sec:background}, the tuple \((E,S,L,M)\) that realizes a given flow is not unique~\citep{ottinger2005beyond}; consequently, the learned \(E_\theta,S_\phi\)—and any quantity derived from them—are fixed only up to this gauge. The exact structural identities of \S\ref{sec:guarantees} are gauge-invariant, depending solely on the flow and the algebraic degeneracy conditions; the relative weighting of the reversible and irreversible channels, however, is not. This motivates separating two types of diagnostics. The \emph{mechanism} is captured by the (gauge-dependent) dissipative fraction \(\rho_M = \lVert M\,\delta S/\delta u\rVert / (\lVert L\,\delta E/\delta u\rVert + \lVert M\,\delta S/\delta u\rVert)\) and the normalized learned-entropy production \(r_S = \langle \delta S/\delta u,\partial_t u\rangle / (\lVert \delta S/\delta u\rVert\,\lVert \partial_t u\rVert)\), both reported only in the appendix. The \emph{thermodynamic behavior} reported in the main text is gauge-invariant: using the fixed quadratic energy \(Q[u]=\tfrac12\lVert u\rVert^2\), which involves no learned quantity, we measure the normalized dissipation rate

\begin{equation}
  r_{\mathrm{mech}} = \frac{-\langle u,\partial_t u\rangle}
                           {\lVert u\rVert\,\lVert \partial_t u\rVert} \in [-1,1],
  \label{eq:rmech}
\end{equation}

which is \(>0\) for genuine dissipation of physical energy and \(\approx 0\) for reversible dynamics. As a ground-truth reference, we compute the analogous relative rate \(\Pi^\star = (Q[u]-Q[u^\star_{t+1}])/Q[u]\) from the reference trajectory, which depends only on the data. Since \(r_{\mathrm{mech}}\) is bounded and unaffected by single-step prediction error, it remains reliable where pointwise accuracy deteriorates. The reversible case provides the decisive test of thermodynamic consistency: a faithful operator must introduce \emph{no} spurious dissipation when \(\Pi^\star\!\approx\!0\), and this test is gauge-invariant because reversible data admits no positive entropy-production representation. In \S\ref{sec:experiments}, we demonstrate that \(|r_{\mathrm{mech}}|\) nearly vanishes on the reversible benchmark and that heat is the most dissipative system in every backbone. The reversible test functions as a pass/fail criterion and bears the weight of the argument; the finer ordering is a rank statement over a handful of systems per backbone, reported with that limitation acknowledged.

\subsection{Discretization, resolution independence, and complexity}
\label{sec:discretization}

All operators operate via the FFT and the \(L^2\) inner product on the grid. The multipliers \(a(k),b(k)\) are defined on a fixed band of \(m\) low-frequency modes and applied to the corresponding modes of whatever grid the input resides on; the projections are scale-invariant ratios. As a result, the entire map is independent of resolution and dimension: a model trained at one grid can evaluate zero-shot at finer grids, and the same code runs unchanged in 1D, 2D, and 3D. Each projection requires two inner products and a scaled subtraction (\(\mathcal{O}(N)\) for \(N\) grid points), which is negligible compared to the FFTs. The method's real cost lies elsewhere: computing the variational derivatives \(\delta E/\delta u\) and \(\delta S/\delta u\) requires two reverse-mode passes per step, and training backpropagates through them (a double-backward). At \(128\times128\), this makes inference \(\approx4.4\times\), a training step \(\approx10\times\), and peak training memory \(\approx2.3\times\) that of a plain FNO, while using half the parameter count (Appendix~\ref{app:repro}, Table~\ref{tab:compute}). This overhead is inherent to computing exact variational derivatives rather than an implementation artifact.
\section{Experiments}
\label{sec:experiments}

\subsection{Setup}
\label{sec:setup}

\paragraph{PDEs.}
Our evaluation covers four PDEs spanning the reversible–irreversible spectrum introduced in \S\ref{sec:background}. The heat equation, a purely dissipative gradient flow ($L\!\to\!0$), is given by
\begin{equation}
  \partial_t u = \nu\,\nabla^2 u,
  \label{eq:heat}
\end{equation}
and monotonically dissipates $\tfrac12\lVert u\rVert^2$. In contrast, linear advection is reversible ($M\!\to\!0$),
\begin{equation}
  \partial_t u + c\,(\partial_x u + \partial_y u) = 0,
  \label{eq:adv}
\end{equation}
which transports $u$ while conserving $\tfrac12\lVert u\rVert^2$ exactly, serving as a clean, fully observed reversible test case. The viscous Burgers equation mixes nonlinear transport with diffusion,
\begin{equation}
  \partial_t u + u\,(\partial_x u + \partial_y u) = \nu\,\nabla^2 u .
  \label{eq:burg}
\end{equation}
We present the 2D forms here; the 1D analogues used in the appendix are identical except for a single spatial derivative. The fourth PDE, a damped wave, is second-order in time and is handled separately in \S\ref{sec:accuracy}.

\paragraph{Data, backbones, and baselines.}
Trajectories are generated spectrally from initial conditions that are band-limited random fields. We examine three operator backbones: a 2D FNO (our main model), a 1D FNO, and a 1D DeepONet. For the FNO backbones, GENERIC-FNO is compared against an unconstrained FNO and an energy-penalized variant (EP-FNO) that incorporates a soft energy-conservation loss; for DeepONet, GENERIC-DeepONet is compared with an unconstrained DeepONet. GENERIC models receive no supervision on $E$ or $S$. All results are reported as mean~$\pm$~std over five seeds, with both data and initialization resampled each time. Since five seeds provide an indicative rather than precise spread, we also note, for conclusions resting on a mean difference, how many seeds support it (models trained under the same seed share data but not initialization, Appendix~\ref{app:repro}), and we rely on standard deviations only when the effect is large relative to them.

\paragraph{Metrics.}
We report relative $L^2$ error for both one-step and 10-step autoregressive rollouts. For GENERIC models, we additionally report the structural residual $r_E = |\langle \delta E/\delta u, \partial_t u\rangle| / (\lVert\delta E/\delta u\rVert\,\lVert\partial_t u\rVert)$ (zero iff energy is conserved by the step) and the gauge-invariant dissipation rate $r_{\mathrm{mech}}$ from Eq.~\eqref{eq:rmech}, comparing these against the ground-truth rate $\Pi^\star$.

\subsection{Predictive accuracy}
\label{sec:accuracy}

\begin{table}[H]
\centering
\caption{\textbf{2D FNO, five-seed accuracy}, with the persistence baseline
($u_{t+1}=u_t$). GENERIC-FNO uses fewer than half the parameters of the learned
baselines. Bold = best learned model. On this short horizon the dynamics are
slow: persistence is within noise of the model of record on heat, better than
every learned model on Burgers, and beaten only by FNO on advection; the
discriminating comparison is the $200$-step rollout of \S\ref{sec:longhorizon},
where persistence fails. Among the learned models GENERIC beats FNO on heat in
three of five seeds and EP-FNO in five, wins Burgers (and the damped wave,
reported in prose) in every seed, and loses advection in every seed.}
\label{tab:acc-2d}
\begin{tabular}{llccc}
\toprule
PDE & Model & Params & $L^2$ (1-step) & $L^2$ (rollout) \\
\midrule
\multirow{4}{*}{Heat}
 & Persistence & -- & $0.017$ & $0.097$ \\
 & FNO         & 2.10M & $0.019\pm0.005$ & $0.107\pm0.029$ \\
 & EP-FNO      & 2.10M & $0.021\pm0.003$ & $0.118\pm0.015$ \\
 & GENERIC-FNO & 1.00M & $\mathbf{0.018\pm0.001}$ & $\mathbf{0.092\pm0.003}$ \\
\midrule
\multirow{4}{*}{Advection}
 & Persistence & -- & $0.032$ & $0.177$ \\
 & FNO         & 2.10M & $\mathbf{0.023\pm0.002}$ & $\mathbf{0.125\pm0.011}$ \\
 & EP-FNO      & 2.10M & $0.024\pm0.002$ & $0.129\pm0.009$ \\
 & GENERIC-FNO & 1.00M & $0.046\pm0.009$ & $0.198\pm0.011$ \\
\midrule
\multirow{4}{*}{Burgers}
 & Persistence & -- & $0.003$ & $0.016$ \\
 & FNO         & 2.10M & $0.029\pm0.007$ & $0.154\pm0.033$ \\
 & EP-FNO      & 2.10M & $0.025\pm0.002$ & $0.135\pm0.012$ \\
 & GENERIC-FNO & 1.00M & $\mathbf{0.005\pm0.001}$ & $\mathbf{0.028\pm0.004}$ \\
\bottomrule
\end{tabular}
\end{table}

\begin{table}[H]
\centering
\caption{\textbf{1D DeepONet, five-seed accuracy}, with the persistence baseline.
The unconstrained DeepONet is indistinguishable from persistence on every PDE
(and remains so when grown to $175$K parameters, Appendix~\ref{app:donmatch}):
it does not learn to advance the field. GENERIC-DeepONet does, beating
persistence by $8\times$ on heat and $9\times$ on advection. It carries the
extra $E,S$ branch, so the parameter-matched ablation of
Appendix~\ref{app:donmatch} controls for size; the DeepONet functionals carry
no normalization, and GENERIC-DeepONet is the only model in
Tables~\ref{tab:acc-2d}--\ref{tab:acc-don} that beats persistence on heat by an
order of magnitude (\S\ref{sec:limitations}).}
\label{tab:acc-don}
\begin{tabular}{llccc}
\toprule
PDE & Model & Params & $L^2$ (1-step) & $L^2$ (rollout) \\
\midrule
\multirow{3}{*}{Heat}
 & Persistence      & --   & $0.010$ & $0.054$ \\
 & DeepONet         & 58K  & $0.010\pm0.001$ & $0.055\pm0.004$ \\
 & GENERIC-DeepONet & 175K & $\mathbf{0.001\pm0.000}$ & $\mathbf{0.007\pm0.001}$ \\
\midrule
\multirow{3}{*}{Advection}
 & Persistence      & --   & $0.037$ & $0.201$ \\
 & DeepONet         & 58K  & $0.036\pm0.002$ & $0.197\pm0.010$ \\
 & GENERIC-DeepONet & 175K & $\mathbf{0.004\pm0.001}$ & $\mathbf{0.022\pm0.006}$ \\
\midrule
\multirow{3}{*}{Burgers}
 & Persistence      & --   & $0.004$ & $0.019$ \\
 & DeepONet         & 58K  & $0.004\pm0.000$ & $0.019\pm0.002$ \\
 & GENERIC-DeepONet & 175K & $\mathbf{0.003\pm0.000}$ & $\mathbf{0.016\pm0.001}$ \\
\bottomrule
\end{tabular}
\end{table}

\begin{figure}[H]
\centering
\includegraphics[width=\linewidth]{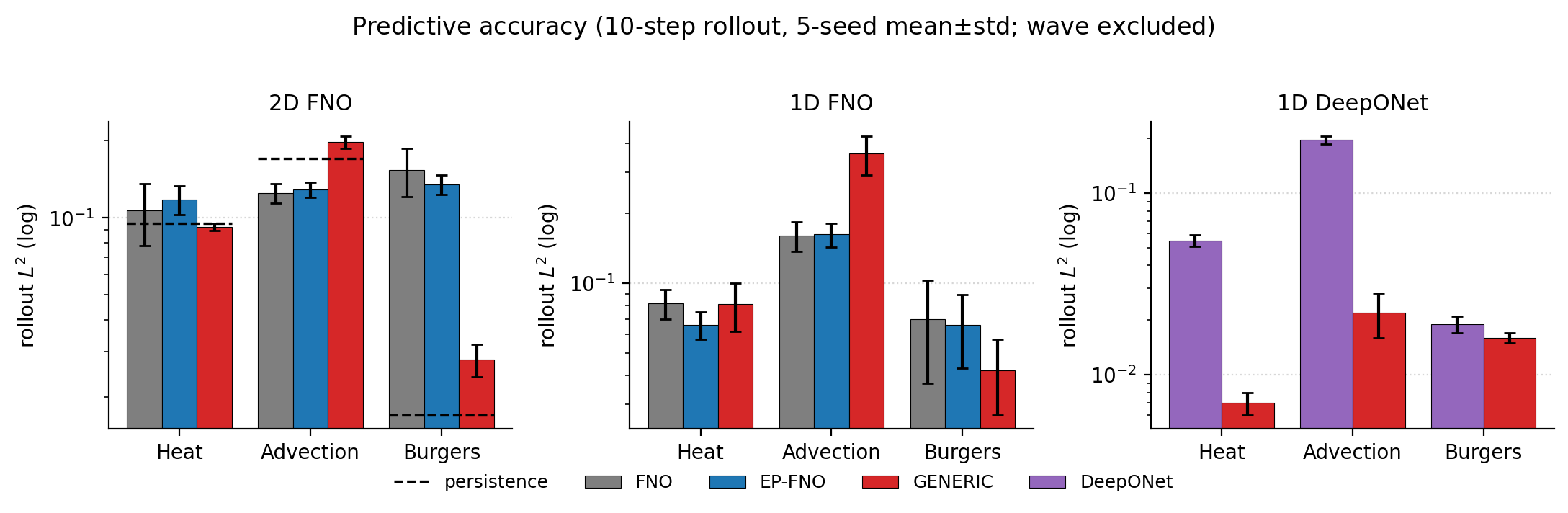}
\caption{Rollout $L^2$ error with five-seed error bars, one panel per backbone
(2D FNO, 1D FNO, 1D DeepONet), wave excluded. GENERIC (red) versus
unconstrained and energy-penalized baselines; dashed ticks mark the persistence
baseline where measured. Visual companion to
Tables~\ref{tab:acc-2d}--\ref{tab:acc-don}; bars are log-scaled.}
\label{fig:accuracy}
\end{figure}

On the 2D FNO backbone (Table~\ref{tab:acc-2d}, Fig.~\ref{fig:accuracy}), GENERIC-FNO surpasses both learned baselines on the dissipative and mixed problems while employing fewer than half the parameters, although two of these gains require scrutiny against the persistence row. In the Burgers case, rollout error declines from $0.154$ (FNO) and $0.135$ (EP-FNO) to $0.028$ consistently across every seed; persistence registers $0.017$, so each baseline performs worse than simply leaving the field untouched, whereas GENERIC-FNO approaches that reference level. For heat, GENERIC-FNO outperforms FNO in three of five seeds and EP-FNO in all five, with a variance an order of magnitude tighter ($0.092\pm0.003$ vs.\ $0.107\pm0.029$); the average difference stems from one outlier FNO seed, and persistence ($0.095$) falls within the noise of all three learned models, making across-seed stability the defensible conclusion rather than any accuracy improvement. Because the time step here is short, the field evolves minimally over ten steps, so these tables effectively gauge how well each model avoids degrading a slowly varying field; the comparison that genuinely separates the models is the $200$-step rollout of \S\ref{sec:longhorizon}, where the true field sheds $86\%$ of its energy, persistence collapses, the baselines blow up, and GENERIC-FNO remains contained. Additionally, on heat GENERIC-FNO's physical-energy tracking error runs $2$--$3\times$ higher than FNO's even though its $L^2$ is lower, and it raises $Q=\tfrac12\|u\|^2$ on as many as $13\%$ of rollout steps (FNO: up to $10\%$, in a single seed); \S\ref{sec:longhorizon} pursues this further. The sole unambiguous accuracy penalty occurs in advection, where rollout error climbs to $0.198$ against $0.125$ for the unconstrained FNO across all seeds, and only FNO manages to beat persistence ($0.170$). This we view as the inherent cost of imposing the constraint rather than a shortcoming: advection is perfectly reversible, so GENERIC-FNO is \emph{by design} unable to exploit its irreversible channel---there its dissipation diagnostic remains at $|r_{\mathrm{mech}}|\le0.035$ (\S\ref{sec:thermo})---thereby sacrificing the additional fitting freedom that a structurally unconstrained operator can devote to pure transport. The error gap and the near-zero $r_{\mathrm{mech}}$ express the same underlying phenomenon: when faced with a flow that generates no entropy, the model refuses to fabricate any. Turning to the DeepONet study (Table~\ref{tab:acc-don}), the construction transfers successfully to a non-spectral backbone: GENERIC-DeepONet surpasses persistence by $8\times$ on heat and $9\times$ on advection, while the unconstrained DeepONet baseline never even reaches persistence (Table~\ref{tab:acc-don}); notably, the advection penalty observed on the FNO backbone vanishes here, implying that this cost belongs to the spectral backbone rather than to the GENERIC framework. These DeepONet results are \emph{not} parameter-controlled, though: GENERIC-DeepONet includes the extra $E,S$ branch and is thus larger than the baseline ($175$K vs.\ $58$K), so by themselves they cannot isolate structure from capacity. Hence our parameter-controlled evidence comes in two forms: the 2D FNO comparison, where GENERIC-FNO takes Burgers (and heat in three of five seeds) at \emph{under half} the baseline's parameter count ($1.00$M vs.\ $2.10$M), and a budget-matched DeepONet ablation (Appendix~\ref{app:donmatch})---when trained at equal budget, both near $\approx\!58$K and near $\approx\!175$K, GENERIC-DeepONet achieves lower rollout error for every PDE at both budgets, and enlarging the baseline to $175$K fails to narrow the gap. The prominent DeepONet figures in Table~\ref{tab:acc-don} therefore stem from the construction itself rather than its greater parameter count. The DeepONet functionals also lack normalization, and GENERIC-DeepONet stands as the sole model in Tables~\ref{tab:acc-2d}--\ref{tab:acc-don} to beat persistence on heat by an order of magnitude (\S\ref{sec:limitations}).

\paragraph{Where the constraint loses: the 1D FNO.}
The lower-capacity 1D FNO (Table~\ref{tab:acc-1d}; complete table in Appendix~\ref{app:1dfno}) tells a different story: GENERIC-FNO secures Burgers in every seed ($-40\%$ vs.\ FNO), matches FNO on heat but falls behind EP-FNO, and loses advection across all seeds. Two investigations pinpoint the reason. A capacity sweep (Appendix~\ref{app:capacity}; functional width varied over $17$K--$1.1$M parameters) reveals that the GENERIC/FNO error ratio behaves non-monotonically with width and plateaus above $70$K, ruling out capacity as the culprit. A normalization ablation (Appendix~\ref{app:underdiss}) instead attributes these losses largely to the instance normalization embedded in the functional networks: in 1D, where training proceeds without it, its removal cuts GENERIC-FNO's error by one to two orders of magnitude on each PDE (heat $0.081\!\to\!0.002$, advection $0.362\!\to\!0.005$). That same normalization proves necessary for training the 2D functionals (\S\ref{sec:limitations}), which explains why the model of record keeps it.

\begin{table}[H]
\centering
\caption{\textbf{1D FNO, five-seed rollout accuracy} (relative $L^2$; one-step
errors in Appendix~\ref{app:1dfno}) with the persistence baseline.
GENERIC-FNO: $47$K parameters; baselines: $71$K. Bold = best learned model.
On this horizon every learned model is worse than persistence on heat and
Burgers, and only FNO beats it on advection; among the learned models GENERIC
wins Burgers in every seed, ties FNO on heat while losing to EP-FNO, and loses
advection in every seed.}
\label{tab:acc-1d}
\begin{tabular}{lcccc}
\toprule
PDE & Persistence & FNO & EP-FNO & GENERIC-FNO \\
\midrule
Heat      & $0.054$ & $0.082\pm0.012$ & $\mathbf{0.066\pm0.009}$ & $0.081\pm0.019$ \\
Advection & $0.201$ & $\mathbf{0.160\pm0.023}$ & $0.162\pm0.019$ & $0.362\pm0.069$ \\
Burgers   & $0.019$ & $0.070\pm0.033$ & $0.066\pm0.023$ & $\mathbf{0.042\pm0.015}$ \\
\bottomrule
\end{tabular}
\end{table}

\paragraph{Damped wave.}  
The damped wave equation, \(\partial_{tt} u = c^2\nabla^2 u - \gamma\,\partial_t u\), was also examined; it couples reversible propagation with irreversible damping, a regime for which GENERIC is specifically intended. In two dimensions, GENERIC-FNO produced a rollout error of \(0.105\pm0.008\), whereas FNO reached \(0.188\pm0.026\)—a \(44\%\) reduction across five seeds—while using only half the parameters. Yet against a persistence baseline of 0.103, this outcome indicates that GENERIC-FNO merely matches persistence, while FNO falls well short; among the reversible–irreversible problems tested, this is GENERIC-FNO's strongest relative advantage. The wave equation is omitted from the tables because it is second order in time, and since only the field \(u\) is observed—not its velocity \(\partial_t u\)—the one-step map is non-Markovian in the observed state. At the 2D resolution, spatial structure provides enough information about the velocity for all models to learn it, but in 1D with \(n_x=64\), the partial observability is so pronounced that every model, including the unconstrained FNO, degrades to a rollout error near 0.48 (Appendix~\ref{app:1dfno}). Consequently, the favorable, well-posed 2D result is reported in prose, and the wave equation is left out of the headline tables.

\subsection{Exact structure and zero-shot super-resolution}  
\label{sec:superres}  

Across all configurations above, the degeneracy identities from Section~\ref{sec:guarantees} are satisfied to machine precision, with residuals near \(10^{-13}\), and the discrete per-step energy residual of the explicit update is \(r_E \sim 10^{-6}\)–\(10^{-8}\). Since the operators work on a fixed band of Fourier modes, the projections are scale-invariant, and the variational derivatives are taken as \(L^2\) gradients (Appendix~\ref{app:setting}), these guarantees—not merely the accuracy—carry over to other resolutions. Table~\ref{tab:superres} and Figure~\ref{fig:resolution} show results from training at \(64\times64\) and testing zero-shot up to \(256\times256\) on band-limited data, where the maximum frequency is held constant so the continuous dynamics remain identical across grids. GENERIC-FNO's rollout accuracy stays flat or improves over the \(4\times\) range and is \(4\)–\(8\) times better than FNO at every grid, while the structural residual remains \(r_E \lesssim 10^{-5}\). The reversible diagnostic \(r_{\mathrm{mech}}\) also stays near zero for advection at all resolutions, confirming that the thermodynamic structure is preserved in the zero-shot setting. Advection's accuracy does decline at the coarsest grid for the reversible operator, but this stems from the explicit integrator—an effect we isolate and address in Section~\ref{sec:limitations}—rather than from any breakdown of the structure.

\begin{table}[H]
\centering
\caption{\textbf{Zero-shot super-resolution.} Trained at $64$, evaluated zero-shot
to $256$ (rollout $L^2$). GENERIC-FNO accuracy is flat-to-improving and the
structural residual $r_E$ stays negligible; the model is genuinely
resolution-invariant in both accuracy and guarantees.}
\label{tab:superres}
\begin{tabular}{llccccc}
\toprule
 & & \multicolumn{5}{c}{evaluation resolution} \\
\cmidrule(lr){3-7}
PDE & Model & 64 & 96 & 128 & 192 & 256 \\
\midrule
\multirow{2}{*}{Heat}
 & FNO         & 0.112 & 0.097 & 0.099 & 0.091 & 0.099 \\
 & GENERIC-FNO & $\mathbf{0.040}$ & $\mathbf{0.026}$ & $\mathbf{0.023}$ & $\mathbf{0.022}$ & $\mathbf{0.023}$ \\
\midrule
\multirow{2}{*}{Burgers}
 & FNO         & 0.107 & 0.145 & 0.131 & 0.164 & 0.124 \\
 & GENERIC-FNO & $\mathbf{0.033}$ & $\mathbf{0.023}$ & $\mathbf{0.021}$ & $\mathbf{0.019}$ & $\mathbf{0.015}$ \\
\midrule
\multicolumn{2}{l}{GENERIC-FNO $r_E$ (range)} & \multicolumn{5}{c}{$3\!\times\!10^{-7}\ \to\ 4\!\times\!10^{-5}$} \\
\bottomrule
\end{tabular}
\end{table}

\begin{figure}[H]
\centering
\includegraphics[width=\linewidth]{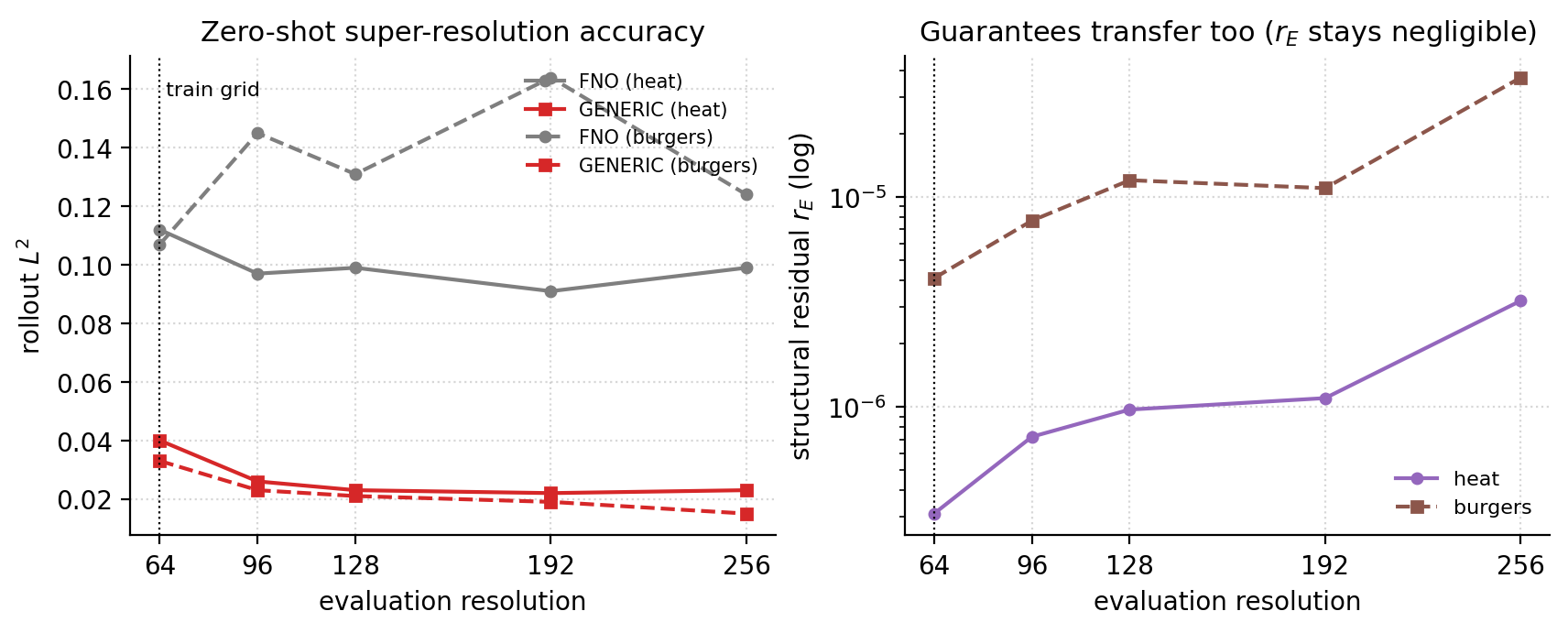}
\caption{Zero-shot super-resolution (trained at $64$).
\emph{Top:} rollout $L^2$ versus evaluation resolution for FNO and GENERIC-FNO on
heat and Burgers (flat-to-improving across the $4\times$ range). \emph{Bottom:}
the structural residual $r_E$ and energy drift $|\mathrm{d}E/\mathrm{step}|$ of
GENERIC-FNO versus resolution (log scale), confirming the guarantees, not only
the accuracy, transfer zero-shot. Dotted line marks the training grid.}
\label{fig:resolution}
\end{figure}

\subsection{Structural verification in three dimensions}
\label{sec:3d}

Because the construction in \S\ref{sec:operators} is dimension-agnostic at the level of code---one extra FFT axis carries the 2D model over to 3D---the structural identities can be tested without any ground truth. For a 3D GENERIC-FNO (functional width $16$, $6$ modes, $3$ layers; operator band $6$), the six identities of \S\ref{sec:guarantees} are reported in Table~\ref{tab:3d-identities}. They are evaluated in double precision on eight random band-limited fields at $32^3$, and on the same fields zero-padded at $64^3$, both at random initialization and on the weights of a small 3D heat model trained at $32^3$ ($48$ trajectories, $60$ epochs). Every identity holds to machine precision on either grid, trained or not. Larger absolute residuals appear for the trained model solely because its gradients are $\sim\!50\times$ larger; the normalized residual $r_E$ falls in the range $10^{-8}$--$10^{-10}$ throughout.

\begin{table}[H]
\centering
\caption{\textbf{Structural identities in 3D} (double precision, maximum or
minimum over a batch of eight band-limited fields). The same fields are
evaluated at $32^3$ and zero-padded to $64^3$; ``trained'' is the 3D heat model
below.}
\label{tab:3d-identities}
\begin{tabular}{llcccc}
\toprule
 & & \multicolumn{2}{c}{random init} & \multicolumn{2}{c}{trained} \\
\cmidrule(lr){3-4}\cmidrule(lr){5-6}
Property & Identity & $32^3$ & $64^3$ & $32^3$ & $64^3$ \\
\midrule
Reversible skewness & $\langle \delta E, L\,\delta E\rangle=0$ & $8\!\times\!10^{-22}$ & $3\!\times\!10^{-21}$ & $3\!\times\!10^{-22}$ & $4\!\times\!10^{-21}$ \\
Energy degeneracy   & $\langle \delta E, M\,\delta S\rangle=0$ & $3\!\times\!10^{-14}$ & $3\!\times\!10^{-14}$ & $2\!\times\!10^{-11}$ & $2\!\times\!10^{-11}$ \\
Entropy degeneracy  & $\langle \delta S, L\,\delta E\rangle=0$ & $2\!\times\!10^{-14}$ & $2\!\times\!10^{-14}$ & $3\!\times\!10^{-16}$ & $3\!\times\!10^{-16}$ \\
Energy conservation & $\langle \delta E, \partial_t u\rangle=0$ & $3\!\times\!10^{-14}$ & $3\!\times\!10^{-14}$ & $2\!\times\!10^{-11}$ & $2\!\times\!10^{-11}$ \\
Entropy production  & $\langle \delta S, \partial_t u\rangle\ge0$ & $\ge+5\!\times\!10^{-6}$ & $\ge+4\!\times\!10^{-5}$ & $\ge+2\!\times\!10^{-3}$ & $\ge+2\!\times\!10^{-2}$ \\
Operator positivity & $\langle v, Mv\rangle\ge0$ & $\ge+275$ & $\ge+271$ & $\ge+275$ & $\ge+277$ \\
\midrule
Normalized $r_E$ (max) & & $2\!\times\!10^{-9}$ & $2\!\times\!10^{-10}$ & $1\!\times\!10^{-8}$ & $1\!\times\!10^{-9}$ \\
$\lVert\partial_t u\rVert$ (mean) & & $3.7\!\times\!10^{-3}$ & $1.1\!\times\!10^{-2}$ & $0.19$ & $0.55$ \\
\bottomrule
\end{tabular}
\end{table}

\paragraph{Proposition~\ref{prop:resinv} checked directly.}
Taking the variational derivatives as $L^2$ gradients---the autodiff gradient scaled by $N^3/|\Omega|$, App.~\ref{app:setting}---the $64^3$ update matches the prolonged $32^3$ update to $8.5\%$ of its norm at random initialization and to $7.1\%$ on the trained model, with the update-function norms matching to $0.3\%$ and $0.7\%$. Aliasing of the pointwise nonlinearities in $E_\theta,S_\phi$ accounts for this residual (\S\ref{sec:limitations}). If the scaling is dropped, the update function contracts by a factor $0.125=1/2^3$ with each refinement and the step ceases to commute; this is precisely why the scaling enters the proposition as a hypothesis. Under either convention, the identities of Table~\ref{tab:3d-identities} remain valid.

\paragraph{Trained 3D heat model.}
Measured against the exact spectral heat solution at $32^3$ ($\nu=0.02$, $\Delta t=0.005$, ten-step rollout on $16$ held-out trajectories), the 3D GENERIC-FNO achieves a one-step error of $0.0023$ and a rollout error of $0.0128$, whereas a 3D FNO of comparable size reaches $0.0189$ and $0.107$; at every rollout step, $r_E\le 9\!\times\!10^{-8}$. As in 2D (\S\ref{sec:accuracy}), the persistence baseline ($0.0125$) is competitive with the constrained model over this short horizon and far ahead of the unconstrained one. The aim of this run is to verify the identities on trained weights in 3D rather than to add an accuracy claim.

\subsection{Gauge-Invariant Dissipation}
\label{sec:thermo}

Table~\ref{tab:gauge} and Figure~\ref{fig:dissipation} report the gauge-invariant dissipation rate \(r_{\mathrm{mech}}\) against the ground-truth rate \(\Pi^\star\) across all three backbones. The load-bearing result is a pass/fail test, not a correlation: on the reversible advection system, the model neither dissipates nor injects physical energy beyond \(|r_{\mathrm{mech}}|\le 0.035\) in any seed of any backbone. This sits an order of magnitude below the least-dissipative irreversible system and a factor of \(10\)–\(40\) below heat, while \(r_{\mathrm{mech}}\) is large and positive for heat in every backbone and every seed. The sign of the small advection values is not resolved (2D FNO: \(-0.007\pm0.014\)), and need not be: the construction conserves the \emph{learned} energy \(E_\theta\), not \(Q\), so on a reversible flow \(r_{\mathrm{mech}}\) measures the gauge mismatch between \(E_\theta\) and \(Q\) and can carry either sign. Reversible data admits no positive entropy-production representation, so the near-vanishing of \(|r_{\mathrm{mech}}|\) is the one thermodynamic statement that cannot be a gauge artifact.

A second, weaker observation concerns ordering. Heat has the largest \(r_{\mathrm{mech}}\) in every backbone and every seed. Between Burgers and advection, the picture depends on the backbone: DeepONet separates them cleanly (\(+0.26\) vs.\ \(+0.01\), five of five seeds), the 2D FNO does so in the mean and in four of five seeds with overlapping spreads (\(+0.008\pm0.010\) vs.\ \(-0.007\pm0.014\)), and the 1D FNO does not resolve them (\(+0.017\pm0.024\) vs.\ \(+0.018\pm0.026\); three of five seeds). Stated plainly, a rank correlation over three points per backbone carries little power on its own, and on two backbones the two least-dissipative systems lie within the noise floor of the diagnostic. Pooled over the nine (backbone, PDE) means, the Spearman correlation with \(\Pi^\star\) is \(\rho=0.73\) (exact permutation \(p=0.016\); Appendix~\ref{app:figures}, Fig.~\ref{fig:rmech_ordering}), and even that is a rank result over a small set of canonical systems. The reversible test is decisive; the separation of heat from the rest is robust; the Burgers-versus-advection ordering is resolved only where the model's dissipation on Burgers is well above that floor. We do not claim more than this.

We compare \(r_{\mathrm{mech}}\) and \(\Pi^\star\) \emph{ordinally}, not by magnitude: \(r_{\mathrm{mech}}\in[-1,1]\) is a normalized alignment (a cosine between \(u\) and \(\partial_t u\)), whereas \(\Pi^\star\) is a fractional one-step change of \(Q\), so the two carry different normalizations and are not expected to agree in absolute value (e.g., heat on the 2D FNO: \(r_{\mathrm{mech}}=0.31\) vs.\ \(\Pi^\star=0.031\)); only the sign and the ranking are claimed. A related caveat: a positive \(r_{\mathrm{mech}}\) records a decay of the fixed energy \(Q=\tfrac12\|u\|^2\)—it would flag \emph{any} norm-shrinking update. This is meaningful here because for the scalar PDEs we study, \(Q\) \emph{is} the physical mechanical energy that the true dynamics conserve (advection) or dissipate (heat, Burgers), so its decay is genuine physical dissipation rather than an artifact; on a system where \(Q\) is not the relevant invariant, \(r_{\mathrm{mech}}\) would be redefined with the appropriate energy. The damped wave is excluded from this table for exactly that reason: observed through \(u\) alone, the decay of \(Q\) on the observed field is exchange with the unobserved velocity, not dissipation, and its \(\Pi^\star\) does not have a stable sign across seeds. Because both quantities reference only the fixed energy, the conclusions are independent of the gauge-dependent learned decomposition.

An illustrative single-rollout view of the learned \(E_\theta\) and \(S_\phi\) appears in Appendix~\ref{app:figures} (Fig.~\ref{fig:interpretability}); being gauge-dependent, it is shown only as interpretive color.

\begin{table}[H]
\centering
\caption{\textbf{Gauge-invariant dissipation} (five-seed mean~$\pm$~std).
$r_{\mathrm{mech}}\!\approx\!0$ identifies the reversible system and heat is
the most dissipative in every backbone and seed; the Burgers-versus-advection
ordering is resolved on DeepONet and the 2D FNO (five and four of five seeds)
and not on the 1D FNO. Pooled over the nine means, Spearman $\rho=0.73$
($p=0.016$). The two columns use different normalizations---$r_{\mathrm{mech}}$
is a cosine rate, $\Pi^\star$ a fractional one-step energy change---so they are
compared ordinally, not by magnitude.}
\label{tab:gauge}
\begin{tabular}{llcc}
\toprule
Backbone & PDE & $r_{\mathrm{mech}}$ & $\Pi^\star$ (truth) \\
\midrule
\multirow{3}{*}{2D FNO}
 & Heat      & $+0.314\pm0.078$ & $+3.1\!\times\!10^{-2}$ \\
 & Advection & $-0.007\pm0.014$ & $+1.5\!\times\!10^{-7}$ \\
 & Burgers   & $+0.008\pm0.010$ & $+1.2\!\times\!10^{-3}$ \\
\midrule
\multirow{3}{*}{1D FNO}
 & Heat      & $+0.295\pm0.087$ & $+1.6\!\times\!10^{-2}$ \\
 & Advection & $+0.018\pm0.026$ & $+9.0\!\times\!10^{-8}$ \\
 & Burgers   & $+0.017\pm0.024$ & $+1.5\!\times\!10^{-3}$ \\
\midrule
\multirow{3}{*}{1D DeepONet}
 & Heat      & $+0.811\pm0.025$ & $+1.6\!\times\!10^{-2}$ \\
 & Advection & $+0.009\pm0.010$ & $+7.9\!\times\!10^{-8}$ \\
 & Burgers   & $+0.259\pm0.039$ & $+1.3\!\times\!10^{-3}$ \\
\bottomrule
\end{tabular}
\end{table}

\begin{figure}[H]
\centering
\includegraphics[width=0.85\linewidth]{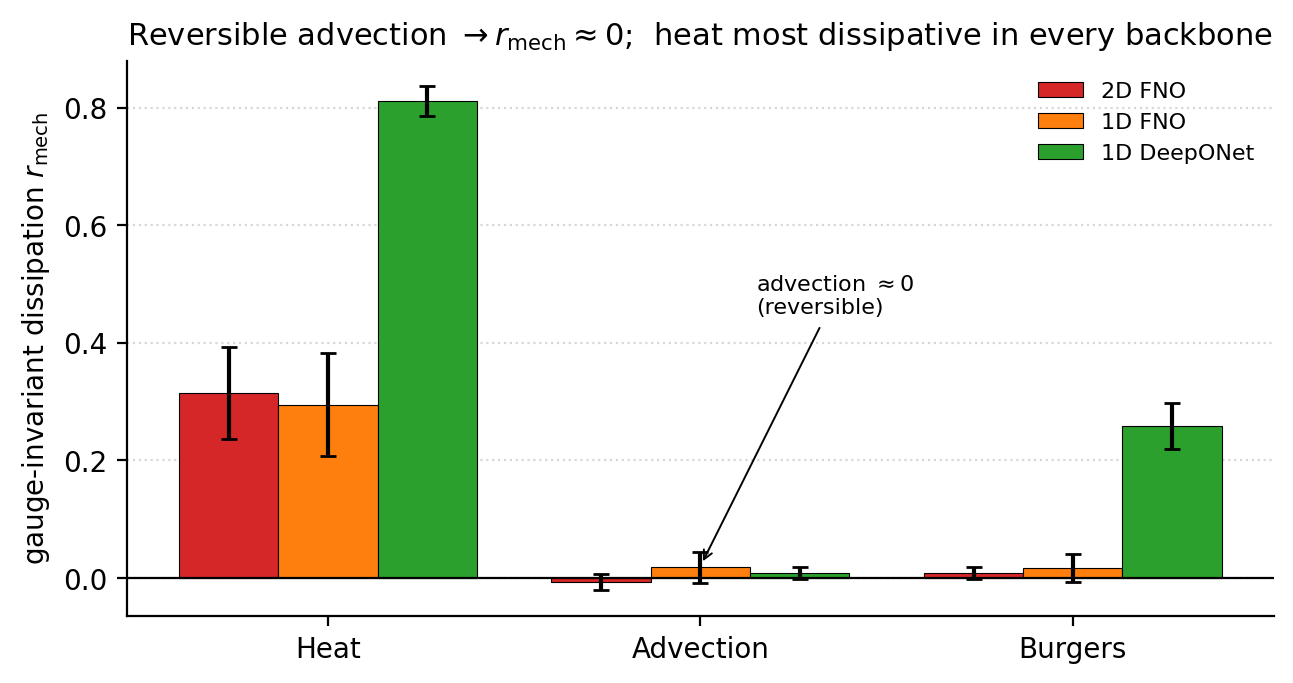}
\caption{Gauge-invariant dissipation $r_{\mathrm{mech}}$
(five-seed mean~$\pm$~std) per PDE, grouped by backbone. Advection sits at
$\approx 0$ ($|r_{\mathrm{mech}}|\le0.035$ in every seed) and heat is clearly
positive in every backbone; Burgers is separated from advection on DeepONet and
the 2D FNO but not on the 1D FNO. Computed from a fixed quadratic energy, so it
is independent of the learned (gauge-dependent) $E,S$.}
\label{fig:dissipation}
\end{figure}

\subsection{Long-Horizon Stability}
\label{sec:longhorizon}

Stability far from the training data is what motivates the thermodynamic
structure, and we assess it by rolling each model out for $200$ steps---over
$13\times$ the $15$-step training horizon---on held-out trajectories. For these
runs all five models are trained under one lighter protocol per PDE than that
of Section~\ref{sec:accuracy}, so short-horizon errors exceed those in
Table~\ref{tab:acc-2d}, although the across-model comparison stays controlled,
with a single run per cell. Beyond the paper's own baselines we include two
unconstrained models chosen to test whether a more expressive architecture
would be stable without a prior: FNO-L, an FNO sized so that its measured
training-step time under this protocol matches GENERIC-FNO's ($37.8$M
parameters), and a residual U-Net with periodic padding. We report relative
rollout error at the training horizon and at $200$ steps
(Table~\ref{tab:longhorizon}), together with the physical energy
$Q=\tfrac12\|u\|^2$ traced along the rollout (Fig.~\ref{fig:long_horizon}).
This is also the horizon at which persistence breaks down in energy: the true
heat field sheds $86\%$ of its $Q$ over those $200$ steps, so the identity
map---competitive in the ten-step tables of Section~\ref{sec:accuracy}---holds
an energy far from the truth here, and we report it as a row.

\begin{table}[H]
\centering
\caption{\textbf{Long-horizon rollout stability} (single run per cell;
training horizon $15$ steps; error normalized by the initial reference norm,
so a prediction that decays to zero scores exactly $1$). Two unconstrained
baselines are added to the paper's: FNO-L, an FNO sized so its measured
training-step time under this protocol matches GENERIC-FNO's ($385$ vs.\
$303$\,ms), and a residual U-Net. On heat and Burgers every unconstrained
model diverges by $200$ steps, and the larger ones diverge faster; GENERIC-FNO
stays bounded and holds $Q$ near its initial value---which is also why its
error is close to persistence there (see text). On advection, EP-FNO and the
U-Net reach $L^2\!\approx\!1$ by collapsing the field to zero ($Q/Q_0\to0.02$);
GENERIC-FNO is the only model that retains the energy.}
\label{tab:longhorizon}
\begin{tabular}{llccccc}
\toprule
PDE & Model & Params & step (ms) & $L^2$@15 & $L^2$@200 & $Q/Q_0$@200 \\
\midrule
\multirow{6}{*}{Heat (truth $Q/Q_0$: 0.14)}
 & Persistence & --     & --  & 0.252 & 0.852 & 1.00 \\
 & FNO         & 2.10M  & 61  & 1.102 & 8.809 & 40.9 \\
 & EP-FNO      & 2.10M  & 62  & 1.001 & 6.845 & 24.8 \\
 & FNO-L       & 37.8M  & 385 & 4.602 & 61.0  & $2\!\times\!10^{3}$ \\
 & U-Net       & 1.93M  & 124 & 2.378 & $>10^{15}$ & $>10^{30}$ \\
 & GENERIC-FNO & 1.00M  & 303 & 0.257 & \textbf{0.813} & 0.89 \\
\midrule
\multirow{6}{*}{Advection (truth: 1.00)}
 & Persistence & --     & --  & 0.472 & 1.451 & 1.00 \\
 & FNO         & 2.10M  & 63  & 1.285 & 10.37 & 84.6 \\
 & EP-FNO      & 2.10M  & 63  & 1.015 & 1.012 & 0.02 \\
 & FNO-L       & 37.8M  & 388 & 2.794 & 32.37 & 853 \\
 & U-Net       & 1.93M  & 125 & 1.008 & 1.012 & 0.02 \\
 & GENERIC-FNO & 1.00M  & 302 & 0.420 & 1.654 & \textbf{1.14} \\
\midrule
\multirow{6}{*}{Burgers (truth: 0.61)}
 & Persistence & --     & --  & 0.040 & 0.347 & 1.00 \\
 & FNO         & 2.10M  & 63  & 1.139 & 4.283 & 8.8 \\
 & EP-FNO      & 2.10M  & 63  & 1.525 & 11.94 & 67.2 \\
 & FNO-L       & 37.8M  & 387 & 1.482 & 10.54 & 51.9 \\
 & U-Net       & 1.93M  & 125 & 1.508 & 2.178 & 1.89 \\
 & GENERIC-FNO & 1.00M  & 304 & 0.054 & \textbf{0.364} & 1.02 \\
\bottomrule
\end{tabular}
\end{table}
On the dissipative and mixed problems every unconstrained model exits the
admissible set, and the larger ones exit it faster. After $200$ steps the
FNO's physical energy has climbed to roughly $41\times$ its starting value on
heat and $9\times$ on Burgers; the compute-matched FNO-L reaches about
$2000\times$ and $50\times$; the U-Net blows up numerically on heat and
roughly doubles its energy on Burgers; and the energy penalty delays but does
not prevent the drift ($25\times$ and $67\times$). GENERIC-FNO is the only
model whose energy remains of order one throughout ($0.89$ and $1.02$), and
its rollout error is bounded accordingly ($0.81$ and $0.36$). We state plainly
what that boundedness is: those errors sit close to the persistence values
($0.85$ and $0.35$), because GENERIC-FNO holds $Q$ near its initial value while
the true field relaxes to $0.14$ and $0.61$ of it. The constraint prevents the
spurious energy injection that destroys every unconstrained model here; it
does not by itself supply the physical dissipation, a point we return to
below. Extra unconstrained capacity does not help---FNO-L is worse than the
FNO it enlarges on all three PDEs.

For reversible advection, where the true energy stays exactly constant, the
rollout error is not the discriminating quantity, and the table shows why.
Under our normalization a prediction that has decayed to zero scores exactly
$1$, and that is what EP-FNO and the U-Net do ($Q/Q_0\to0.02$): they attain
the lowest advection errors by annihilating the field. FNO and FNO-L instead
inflate it ($85\times$ and $850\times$). GENERIC-FNO is the only model that
retains the energy, holding $Q$ to within $14\%$ over $200$ steps at the cost
of transport phase error ($1.65$, comparable to persistence at $1.45$). Its
deviation takes the form of a slow, steady upward drift
(Fig.~\ref{fig:long_horizon}), which reflects the integrated effect of the
slightly negative mean $r_{\mathrm{mech}}$ reported for advection in
Table~\ref{tab:gauge}: although the learned energy is conserved exactly at
every step, its gauge mismatch with $Q$ accumulates. What the structural prior
governs is the learned energy---neither the phase nor $Q$ itself.

Long-time heat most clearly reveals what the construction guarantees and what
it does not. GENERIC-FNO under-dissipates: rather than relaxing toward the true
$0.14$, its physical energy levels off near $0.9$, and that plateau is the
whole of its residual $0.81$ error. Staying bounded is not the same as being
correct: the construction preserves the learned energy $E_\theta$ to machine
precision at each step, yet nothing in it restricts the physical energy $Q$,
and in a 1D replication of the experiment $Q$ not only plateaus but rises above
its initial value (Appendix~\ref{app:underdiss}). The underlying cause is
architectural, not structural: instance normalization within the $E,S$
backbones renders the learned functionals nearly scale-invariant in $u$, so
the update magnitude cannot diminish as the field relaxes, preventing the
model from converging (\S\ref{sec:limitations}, Appendix~\ref{app:stability}).
Removing that normalization in an ablation---with identical parameter count
and protocol---yields exact tracking of the heat decay across all $200$ steps
and no energy-increasing updates (Appendix~\ref{app:underdiss}); nevertheless,
the same normalization is essential for training the 2D functionals at all, so
we keep the reviewed model as the model of record and leave a normalization
that satisfies both requirements to future work.

\begin{figure}[t]
\centering
\includegraphics[width=\linewidth]{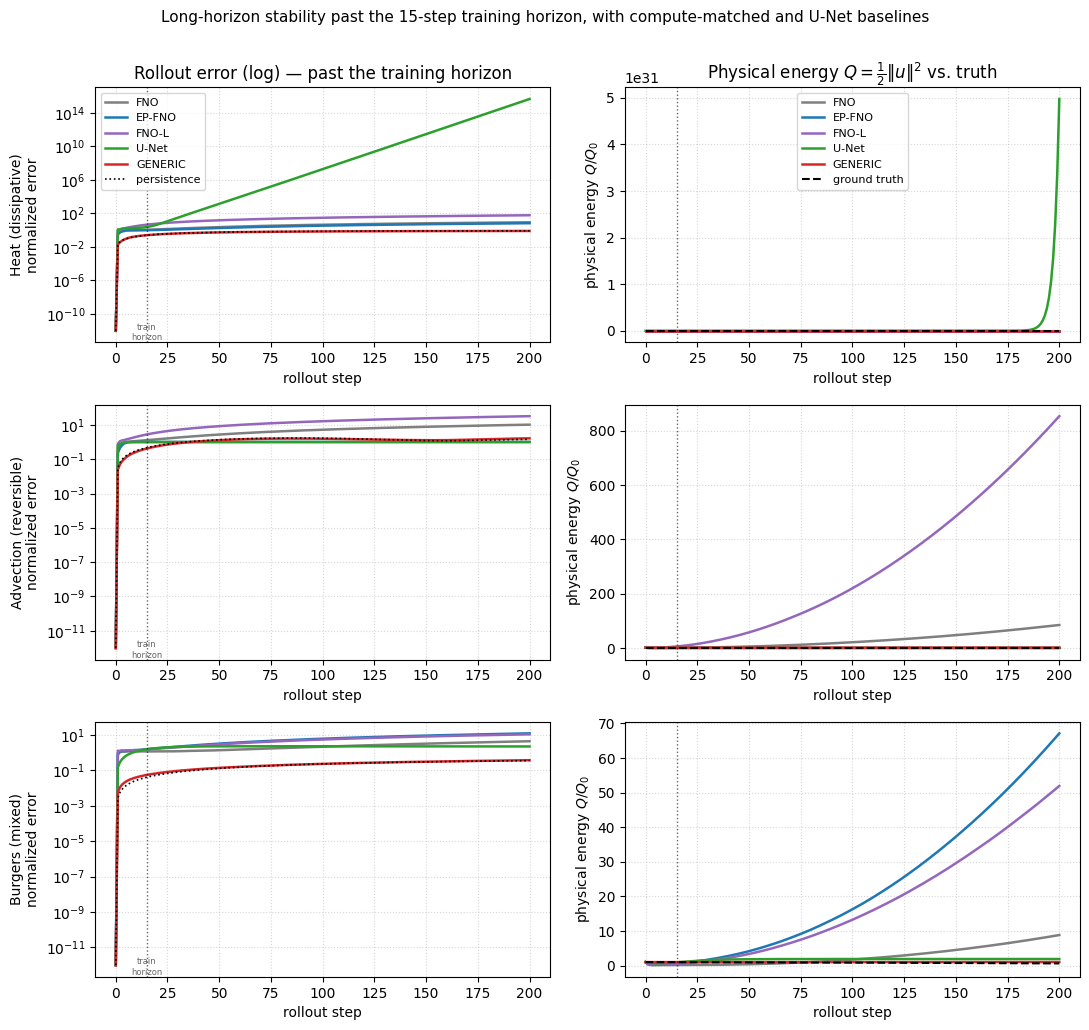}
\caption{\textbf{Long-horizon rollouts} ($200$ steps; dotted vertical line marks the
$15$-step training horizon; both axes log-scaled, so curves that leave the axes have
diverged). Five models plus persistence (dotted). \emph{Left:} rollout error vs.\
step---every unconstrained model diverges on heat and Burgers, the compute-matched
FNO-L and the U-Net faster than the small FNO, while GENERIC-FNO stays bounded near the
persistence level. \emph{Right:} physical energy $Q=\tfrac12\|u\|^2$ relative to its
initial value, against the ground truth (dashed). FNO's energy grows to
$\approx\!41\times$ on heat and $\approx\!9\times$ on Burgers, FNO-L to
$\approx\!2000\times$ and $\approx\!50\times$, and the U-Net blows up numerically on heat;
on advection EP-FNO and the U-Net collapse the field to zero ($Q/Q_0\to0.02$) while FNO
and FNO-L inflate it. GENERIC-FNO is the only model whose energy stays of order one on
all three, holding the constant true energy on advection to within $\approx\!14\%$ (a
slow upward drift, the integrated gauge mismatch between $E_\theta$ and $Q$). On heat it
under-dissipates---its energy plateaus near $0.9$ above the true decay to $0.14$---so it
is bounded but not correct at long times (see text and Appendix~\ref{app:underdiss}).}
\label{fig:long_horizon}
\end{figure}

\section{Limitations and Scope}
\label{sec:limitations}

We state explicitly where the method is constrained, both to guide its use and because several of these limitations illuminate the construction itself.

\paragraph{The Jacobi identity is not enforced.} The construction guarantees exact skew-adjointness of \(L\), positive semi-definiteness of \(M\), and both degeneracy conditions (Section \ref{sec:operators}); it does not, however, ensure that \(L\) is a verified Poisson operator. The base multiplier \(D_L\) is constant and therefore Poisson by itself, but the projection makes \(L\) state-dependent, and we neither impose nor verify the Jacobi identity for the projected bracket. The first and second laws proved in Appendix \ref{app:proofs} rely only on skewness, positivity, and degeneracy, so no result in this paper depends on Jacobi. What is lost is the geometric interpretation of the reversible channel as a Hamiltonian flow on a Poisson manifold. This same limitation applies to finite-dimensional learned-bracket methods with state-dependent \(L\) (Section \ref{sec:related}); resolving it in function space remains an open problem.

\paragraph{Short-horizon benchmarks are near-persistence.} With the time steps used in Tables \ref{tab:acc-2d} and \ref{tab:acc-1d}, the fields change little over the ten-step evaluation horizon, making the persistence map \(u_{t+1}=u_t\) competitive or superior. In 2D, persistence is within noise of the model of record on heat and outperforms every learned model on Burgers; in 1D, it beats every learned FNO model on both heat and Burgers; and the unconstrained DeepONet baseline is indistinguishable from persistence on all PDEs. The ten-step tables thus measure how well a model avoids corrupting a slowly varying field rather than how well it advances one, and we do not base the accuracy claim on them. The comparisons that actually differentiate the models are the \(200\)-step rollouts in Section \ref{sec:longhorizon}, where the true heat field loses \(86\%\) of its energy, persistence fails in energy, every unconstrained model diverges or collapses (including a compute-matched FNO and a U-Net), and GENERIC-FNO remains bounded, together with the DeepONet study (Table \ref{tab:acc-don}), where the constrained model outperforms persistence on heat by an order of magnitude. Benchmarks with larger time steps, or with evaluation horizons chosen so that persistence is far from the truth, would make the short-horizon tables more informative; we have not regenerated the data to do so.

\paragraph{Reversible transport and the explicit integrator.} The clearest accuracy cost of the constraint appears on pure linear transport with the FNO backbone, where GENERIC-FNO is less accurate than an unconstrained FNO in every seed (Table \ref{tab:acc-2d}), and where FNO is the only model that beats persistence. A sharper manifestation occurs in the super-resolution study: at the coarsest training grid, advection rollout error is anomalously high and worsens as the grid coarsens (Table \ref{tab:euler-rk4}). We traced this issue to the explicit integrator rather than to the structure. Explicit Euler applied to the skew operator yields a per-step amplitude growth \(\propto\sqrt{1+(\omega\Delta t)^2}\); the pointwise nonlinearities in the \(E,S\) networks inject high-frequency harmonics that alias on a coarse grid, raising the effective \(\omega\), and because the reversible regime has \(M\!\approx\!0\), there is no dissipation to damp the aliased modes. The structural diagnostics remain clean throughout (\(r_{\mathrm{mech}}\approx 0\), degeneracy \(\sim\!10^{-13}\)), confirming an integration effect rather than a breakdown of degeneracy. A fourth-order Runge–Kutta integrator with anti-aliased nonlinearities substantially mitigates the blow-up but raises the per-step energy residual from \(\sim\!10^{-7}\) to \(\sim\!10^{-3}\) (Table \ref{tab:euler-rk4}); we retain explicit Euler as the model of record so that the discrete update inherits the degeneracy exactly, and we note that discrete-gradient or projection-based integrators offer a concrete path to both accuracy and the exact discrete guarantee.

\begin{table}[t]
\centering
\caption{\textbf{Advection under explicit Euler vs.\ RK4 + anti-aliasing}
(rollout $L^2$, trained at $64$). RK4 roughly halves the coarse-grid blow-up but
raises the per-step energy residual $r_E$ from $\sim\!10^{-7}$ to
$\sim\!10^{-3}$. We keep Euler so the discrete update conserves the learned
energy to machine precision.}
\label{tab:euler-rk4}
\begin{tabular}{lccccc}
\toprule
 & FNO & \multicolumn{2}{c}{GENERIC (Euler)} & \multicolumn{2}{c}{GENERIC (RK4 + AA)} \\
\cmidrule(lr){2-2}\cmidrule(lr){3-4}\cmidrule(lr){5-6}
res & $L^2$ & $L^2$ & $r_E$ & $L^2$ & $r_E$ \\
\midrule
64 (train) & 0.174 & 0.606 & $2\!\times\!10^{-8}$ & 0.341 & $2.9\!\times\!10^{-3}$ \\
128        & 0.134 & 0.154 & $9\!\times\!10^{-8}$ & 0.153 & $1.2\!\times\!10^{-3}$ \\
256        & 0.117 & 0.161 & $1\!\times\!10^{-7}$ & 0.159 & $8\!\times\!10^{-5}$ \\
\bottomrule
\end{tabular}
\end{table}

\paragraph{Operator expressivity limits the reversible channel.} The reversible part of the architecture is built from deliberately simple operators—diagonal Fourier multipliers restricted to a low-frequency band and sandwiched behind rank-one projections—and this simplicity is what bounds how expressive that part can be on pure transport. It is precisely here that we locate the 2D advection loss. To test whether that loss stems from insufficient capacity, we sweep the width of the 1D backbone (Appendix~\ref{app:capacity}), varying the functional networks from \(17\)K to \(1.1\)M parameters under the paper's 1D protocol. Across all three PDEs, the GENERIC/FNO rollout-error ratio is non-monotone in width and essentially flat once the networks exceed roughly \(70\)K parameters, so enlarging \(E_\theta\) or \(S_\phi\) does not close any gap; the losses are not a capacity effect. Importantly, the exact degeneracy does not hinge on the diagonal form of the multiplier: for any bounded \(D\), the projection sandwich preserves skew-adjointness and positive semi-definiteness (Lemma~\ref{lem:sandwich}, Proposition~\ref{prop:degeneracy}). Consequently, richer structure-preserving parameterizations—multi-rank or block-banded multipliers—would affect only expressivity, never exactness, and we defer them to future work.

\paragraph{Normalization in the functional networks.} Training the \(E,S\) backbones requires instance normalization. Without it, the variational derivatives are \(\sim\!200\times\) smaller at initialization (Appendix~\ref{app:stability}, probe (e)), and at the 2D resolution studied, the model never departs from the identity map (Appendix~\ref{app:underdiss}). But normalization carries a price. Because each sample is normalized by its own statistics, the learned functionals become nearly scale-invariant in \(u\); as a result, \(\lVert\delta E/\delta u\rVert\) grows like \(1/\lVert u\rVert\) near the zero field, and the update cannot shrink as the field relaxes. A heat gradient flow requires an entropy that is quadratic in \(u\), which a scale-invariant functional cannot represent across amplitudes. This is the diagnosed cause of the long-time heat under-dissipation reported in \S\ref{sec:longhorizon} and accounts for most of the 1D-FNO accuracy deficit. In 1D, where the networks are small enough to train without normalization, removing it cuts GENERIC-FNO's rollout error by one to two orders of magnitude on every PDE—heat \(0.081\!\to\!0.002\), advection \(0.362\!\to\!0.005\), Burgers \(0.042\!\to\!0.011\) (five seeds, Appendix~\ref{app:underdiss})—and eliminates the physical-energy monotonicity violations. The DeepONet backbone, whose functionals carry no normalization, shows the same signature (Table~\ref{tab:acc-don}). Because the same change prevents training at 2D scale, it is not a drop-in fix; we keep the reviewed model as the model of record and leave a normalization that trains at scale without imposing scale-invariance in \(u\) to future work.

\paragraph{Partial observation.} GENERIC-FNO models a first-order, Markovian map in the observed field. Systems that are not Markovian in the observed state—most directly, second-order dynamics such as the wave equation when only the displacement is observed—are only approximately representable. This is benign in 2D but defeats every model, ours and the unconstrained baseline alike, at the coarse 1D grid (\S\ref{sec:accuracy}). Lifting to an augmented state (e.g., \((u, \partial_t u)\)) is a natural extension we do not pursue here.

\paragraph{Computational cost.} Exact variational derivatives require two reverse-mode passes per step, and training backpropagates through them. At \(128^2\), inference is \(\approx\!4.4\times\), a training step \(\approx\!10\times\), and peak memory \(\approx\!2.3\times\) a plain FNO, at half the parameters (Appendix~\ref{app:repro}). For DeepONet, the second functional branch also makes GENERIC-DeepONet larger than its baseline (175K vs.\ 58K); the parameter-matched ablation of Appendix~\ref{app:donmatch} controls for this.

\paragraph{Gauge-dependence of interpretation.} As established in \S\ref{sec:gauge}, the learned \((E, S, L, M)\) decomposition is not unique. We therefore make no claim about the absolute learned energy or entropy, nor about per-channel quantities such as \(\rho_M\), which vary with the gauge. Our thermodynamic claims are restricted to gauge-invariant quantities: the exact structural identities and the diagnostic \(r_{\mathrm{mech}}\). For these, we claim the reversibility test, the separation of heat from the other systems in every backbone, and the Burgers-versus-advection ordering only where it is resolved (\S\ref{sec:thermo})—not a calibrated dissipation rate.

\paragraph{Scope of the formalism, and of the study.} Two scope statements must be kept separate. First, the scope of GENERIC itself: the formalism describes closed systems with a well-defined energy and entropy—conservative, dissipative, or coupled. Non-conservative, externally forced, or open systems lie outside it, and hence outside any method built on it, including ours. Applied to such data, the architecture would still impose an \((E, S, L, M)\) split and report machine-precision degeneracy residuals, because those residuals are algebraic properties of the parameterization, not a test of whether the data possess metriplectic structure (see Broader Impact). Second, the scope of this study: scalar fields on periodic domains for a small set of canonical PDEs spanning the reversible–irreversible spectrum. These are a deliberate controlled testbed—each has a known ground-truth dissipation rate, which is what makes the falsifiable reversibility test of \S\ref{sec:thermo} and the exact super-resolution check of \S\ref{sec:superres} possible—not the target application. Vector and multi-field systems, non-periodic boundaries, and the complex-fluid and coarse-graining problems that motivate GENERIC are the intended use; the structural identities are verified in three dimensions (\S\ref{sec:3d}), but 3D accuracy at application scale is not studied here. Demonstrating the construction on those problems is the main direction we leave open.
\section{Conclusion}
\label{sec:conclusion}
We present GENERIC-FNO, a neural operator that, to the best of our knowledge,
is the first to impose both GENERIC degeneracy conditions of nonequilibrium
thermodynamics directly in function space. Energy and entropy functionals are
learned as neural operators, while the reversible and irreversible operators
are parameterized as diagonal Fourier multipliers bounded by rank-one
projections. This parameterization makes the degeneracy conditions---and hence
conservation of the learned energy and production of the learned
entropy---hold exactly and by construction, requiring neither penalty terms,
nor projection of the predicted update, nor any residual freedom. Because the
Jacobi identity for the state-dependent reversible bracket is not imposed, the
structure qualifies as metriplectic-degenerate rather than a complete
Poisson--metriplectic pair.

Regardless of initialization, spatial dimension, or resolution, the structural
identities are satisfied to machine precision: they generalize zero-shot across
a $4\times$ super-resolution range in 2D and hold in three dimensions, at
random initialization and after training. Our analysis further uncovers a gauge
freedom in the learned $(E,S,L,M)$ decomposition, allowing us to separate
gauge-invariant results from those that depend on the gauge choice. We also
introduce a dissipation diagnostic that is both falsifiable and
gauge-invariant: it detects reversibility and pinpoints the most dissipative
system within each backbone, all without consulting the learned functionals.
Empirically, results vary by backbone across three operator architectures and
four PDEs, and we document this variation in full. The prior improves accuracy
on dissipative and mixed problems for the 2D FNO and DeepONet
backbones---achieving this in 2D with half the parameters, albeit at roughly
$4$--$10\times$ the computational cost of a plain FNO---and keeps the
$200$-step rollouts bounded where unconstrained baselines diverge, while it
underperforms on pure linear transport and, for the small 1D backbone, on
advection, offering no benefit on heat. A capacity sweep indicates these losses
do not stem from limited capacity, and an ablation traces them to the operator
parameterization in 2D transport and to a normalization artifact in the 1D
functional networks.

The method's limitations each suggest a concrete avenue for future work:
developing richer structure-preserving operator parameterizations beyond
diagonal multipliers, with exactness already ensured by the projection
sandwich; designing a normalization for the functional networks that permits
training at scale without enforcing scale-invariance, which an ablation
indicates would eliminate the diagnosed near-equilibrium under-dissipation;
adopting structure-preserving higher-order integrators that maintain exact
discrete degeneracy; using augmented observed states for second-order
dynamics; and extending to vector- and multi-field systems on non-periodic
domains. Still, the most promising direction is the one that originally
motivated this formalism: applying exactly thermodynamically consistent neural
operators to closure and complex-fluid problems, where remaining on the
physical manifold far from the training data determines whether a surrogate
proves stable or useless.
\section*{Broader Impact}

GENERIC-FNO guarantees exact conservation of a learned energy and non-negative production of a learned entropy. This guarantee is structural, of the model's own functionals, and does not certify physical correctness. A particular failure mode follows directly from the design and deserves explicit acknowledgment: when applied to a non-conservative, externally forced, or open system—outside GENERIC's scope of closed conservative–dissipative dynamics—the architecture will nonetheless impose an (E, S, L, M) decomposition and report machine-precision degeneracy residuals, since those residuals are an algebraic consequence of the parameterization and hold for any data whatsoever. Exact degeneracy, therefore, should never be interpreted as evidence that the learned dynamics are physically valid. This paper itself demonstrates that gap within its domain: in the long-horizon heat experiment of Section ref{sec:longhorizon}, the learned energy remains conserved to machine precision at every step, even as the physical energy fails to relax to its true limit. We particularly warn against using the machine-precision guarantee as a mark of correctness in safety-critical simulation or engineering decisions, such as those in structural, aerospace, or process-safety contexts. The intended benefit is improved stability and physical plausibility for learned surrogates operating within GENERIC's domain; deploying the method on real systems demands independent verification that the target system falls within that class and that the learned dynamics align with the governing equations, boundary conditions, and operating regime. The guarantees we establish are necessary structural conditions, not replacements for such validation.
\bibliography{tmlr}
\bibliographystyle{tmlr}

\appendix
\section{Mathematical Background and Proofs}
\label{app:proofs}

This appendix makes the claims of \S\ref{sec:background}--\S\ref{sec:gauge}
precise and self-contained. Throughout, all operators are understood pointwise in
the state $u$: $L=L(u)$ and $M=M(u)$ depend on $u$ through the gradients $\delta
E/\delta u$ and $\delta S/\delta u$, and every statement below is asserted at a
fixed but arbitrary $u$.

\subsection{A gentle introduction to GENERIC for non-specialists}
\label{app:primer}

This subsection provides intuition for readers new to the GENERIC formalism; it is self-contained, assuming only vector calculus and linear algebra. Although nothing here is needed for the proofs that follow, it should make the main text more accessible.

\paragraph{The problem GENERIC solves.} Classical dynamics offers two complementary pictures, each capturing only part of physics. In Hamiltonian mechanics, motion is reversible and conserves energy: a skew-symmetric operator \(L\) generates dynamics from an energy \(E\) via \(\dot u = L\,\nabla E\), and since \(L\) is skew, \(\dot E = \langle \nabla E, L\,\nabla E\rangle = 0\)—energy is exactly conserved and the dynamics are time-reversible, as in planetary orbits or ideal waves. Gradient flows represent the opposite extreme: an energy is monotonically dissipated through \(\dot u = -\,\nabla E\), yielding \(\dot E = -\lVert\nabla E\rVert^2 \le 0\), as when heat spreads or friction brings a system to rest. Real systems, such as a viscous fluid or a cooling gas, exhibit both behaviors simultaneously: they transport and oscillate while irreversibly generating heat and entropy. GENERIC~\citep{grmela1997dynamics,ottinger2005beyond} provides a framework that carries both aspects at once, keeping them from interfering.

\paragraph{The two-generator equation.} GENERIC employs two potentials—an energy \(E\) and an entropy \(S\)—along with two operators: a skew-adjoint \(L\) for the reversible, Hamiltonian part, and a positive semi-definite \(M\) for the irreversible, dissipative part:
\[
  \partial_t u \;=\; \underbrace{L\,\tfrac{\delta E}{\delta u}}_{\text{reversible}}
                \;+\; \underbrace{M\,\tfrac{\delta S}{\delta u}}_{\text{irreversible}}.
\]
(For a field \(u(x)\), the gradient \(\nabla E\) becomes the variational derivative \(\delta E/\delta u\); \S\ref{app:setting} makes this precise, but it behaves like a gradient.) The first term rotates the state while conserving \(E\); the second slides it downhill in a way that conserves nothing but always increases \(S\).

\paragraph{Why the degeneracy conditions matter.} For this to give clean thermodynamics—energy exactly conserved (first law), entropy never decreasing (second law)—the two channels must not contaminate each other. That is exactly what the degeneracy conditions \(L\,\delta S/\delta u = 0\) and \(M\,\delta E/\delta u = 0\) enforce: the reversible operator produces no entropy, and the irreversible operator changes no energy. A two-line computation (\S\ref{app:laws}) then gives \(\dot E = 0\) and \(\dot S \ge 0\). The technical heart of this paper is making these two conditions hold exactly, at every state and resolution, rather than approximately through a penalty—which is what the projection sandwich of Eq.~\eqref{eq:construction} achieves.

\paragraph{The analogy for a vision reader} If it helps: \(L\) plays the role of a learned skew (rotation-like) operator and \(M\) a learned symmetric PSD (smoothing-like) operator, while \(E\) and \(S\) are two learned scalar "energy" heads. The projections \(I-P_S\) and \(I-P_E\) are rank-one orthogonal projections (as in removing a component along a fixed direction), applied so that each operator is blind to the other's gradient. Everything else in the paper is about carrying this construction faithfully into function space, where it must be independent of grid resolution.

\subsection{Function-space setting}
\label{app:setting}
Let \(\Omega = \mathbb{T}^d\) be the periodic torus and \(\mathcal{H} = L^2(\Omega;\mathbb{R})\) the real Hilbert space with inner product \(\langle f,g\rangle = \int_\Omega f(x)g(x)\,dx\) and norm \(\lVert f\rVert = \langle f,f\rangle^{1/2}\). For a Fréchet-differentiable functional \(F:\mathcal{H}\to\mathbb{R}\), the variational derivative \(\delta F/\delta u\in\mathcal{H}\) is the Riesz representative of the differential, \(\langle \delta F/\delta u,\,\eta\rangle = \lim_{\epsilon\to0}\epsilon^{-1}\big(F[u+\epsilon\eta]-F[u]\big)\) for all \(\eta\in\mathcal{H}\).

On an \(N\)-point grid, \(\mathcal{H}\) is replaced by \(\mathbb{R}^N\) with \(\langle f,g\rangle = \Delta x\sum_i f_i g_i\). Our functionals are mean-pooled, \(F[u] = h(\tfrac1N\sum_i g(u)_i)\), so the Euclidean gradient returned by automatic differentiation, \(\nabla_u F\), equals \(\tfrac{\Delta x}{N}\) times the discrete Riesz representative of \(\delta F/\delta u\). Every property used below (orthogonal projection, skew-adjointness, positive semi-definiteness, the degeneracy identities) is invariant to a positive scalar multiple of \(\delta E/\delta u\) and \(\delta S/\delta u\), so this constant is immaterial and we write \(\delta F/\delta u\) for the autodiff gradient without further comment.

\subsection{Rank-one projections}
\label{app:proj}
For $w\in\mathcal{H}\setminus\{0\}$ define
$P_w v = \frac{\langle w,v\rangle}{\langle w,w\rangle}\,w$.

\begin{lemma}\label{lem:proj}
$P_w$ is the orthogonal projection onto $\mathrm{span}\{w\}$: it is self-adjoint
($P_w^\ast = P_w$) and idempotent ($P_w^2 = P_w$), $P_w w = w$, and
$(I-P_w)w = 0$. Consequently $I-P_w$ is the self-adjoint, idempotent orthogonal
projection onto $w^\perp$.
\end{lemma}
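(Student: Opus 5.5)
The plan is to verify each claimed property by direct computation from the definition $P_w v = \langle w,v\rangle\,w/\langle w,w\rangle$, using only bilinearity and symmetry of the real inner product on $\mathcal{H}$. The condition $w\neq 0$ makes $\langle w,w\rangle>0$, so $P_w$ is a well-defined bounded linear operator. Cauchy--Schwarz gives $\lVert P_w v\rVert \le \lVert v\rVert$, and linearity in $v$ is immediate.

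\textbf{Step 1 (self-adjointness).} For arbitrary $u,v\in\mathcal{H}$, compute
$\langle u, P_w v\rangle = \langle w,v\rangle\langle u,w\rangle/\langle w,w\rangle$.
This expression is symmetric in $u$ and $v$, so it equals $\langle P_w u, v\rangle$.

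\textbf{Step 2 (action on $w$).} Directly, $P_w w = w$.

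\textbf{Step 3 (idempotence).} Since $P_w v$ is a scalar multiple $c\,w$ with $c = \langle w,v\rangle/\langle w,w\rangle$, linearity and Step 2 give $P_w(P_w v) = c\,P_w w = c\,w = P_w v$.

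\textbf{Step 4 (orthogonal projection onto $\mathrm{span}\{w\}$).}
\begin{itemize}
\item The range of $P_w$ is $\mathrm{span}\{w\}$: it is contained in the span by definition and contains $w$ by Step 2.
\item $v - P_w v \perp w$, because $\langle w, v-P_wv\rangle = \langle w,v\rangle - \langle w,v\rangle = 0$.
\end{itemize}
So $P_w$ is the orthogonal projection onto $\mathrm{span}\{w\}$. Equivalently, any self-adjoint idempotent is the orthogonal projection onto its range.

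\textbf{Step 5 (the complement $I-P_w$).}
\begin{itemize}
\item $(I-P_w)w = w - w = 0$ by Step 2.
\item $I-P_w$ is self-adjoint, since both $I$ and $P_w$ are (Step 1).
\item $I-P_w$ is idempotent, since $(I-P_w)^2 = I - 2P_w + P_w^2 = I-P_w$ by Step 3.
\item Its range is $w^\perp$: by Step 4, $(I-P_w)v \in w^\perp$ for every $v$, and for $v\in w^\perp$ we have $P_w v=0$, so $(I-P_w)v=v$.
\end{itemize}
Hence $I-P_w$ is the orthogonal projection onto $w^\perp$.

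No step is a genuine obstacle. The only point needing care is that everything is stated for $w\neq 0$, so the regularized variant with $\varepsilon$ used in practice is outside the lemma's scope. Also, the argument applies equally to the discrete inner product $\Delta x\sum_i f_ig_i$, since it uses only the inner-product axioms.
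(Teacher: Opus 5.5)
Your proposal is correct and follows the paper's proof essentially step for step: self-adjointness from the symmetric expression $\langle w,u\rangle\langle w,v\rangle/\langle w,w\rangle$, idempotence from $P_w w = w$, and $(I-P_w)w = 0$ by direct substitution. Your two-inclusion argument that the range of $I-P_w$ is exactly $w^\perp$ is slightly more careful than the paper, which justifies this in one line by appeal to the range of $P_w$.
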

\begin{proof}
For any $u,v$, $\langle P_w u, v\rangle = \frac{\langle w,u\rangle}{\langle
w,w\rangle}\langle w,v\rangle = \langle u, P_w v\rangle$, so $P_w^\ast=P_w$.
$P_w^2 v = \frac{\langle w,v\rangle}{\langle w,w\rangle}P_w w =
\frac{\langle w,v\rangle}{\langle w,w\rangle}w = P_w v$ since $P_w w = w$.
Finally $(I-P_w)w = w - w = 0$. Self-adjointness and idempotence of $I-P_w$ are
immediate, and its range is $w^\perp$ because $P_w$ has range
$\mathrm{span}\{w\}$.
\end{proof}

\subsection{The base Fourier multipliers}
\label{app:multipliers}
Let $\widehat{(\cdot)}$ denote the Fourier transform on $\mathbb{T}^d$, so that
$\langle f,g\rangle = \sum_k \overline{\widehat f_k}\,\widehat g_k$ (Parseval),
and recall that $f$ is real iff $\widehat f_{-k} = \overline{\widehat f_k}$. Let
$D_L$ and $D_M$ be the diagonal multipliers with symbols
$\widehat{D_L}(k) = \mathrm{i}\,a(k)$ and $\widehat{D_M}(k) = |b(k)|^2$, where
$a(k)\in\mathbb{R}$ is odd ($a(-k)=-a(k)$) and $|b(k)|^2\ge 0$ is even.

\begin{proposition}\label{prop:DL-DM}
$D_L$ maps real fields to real fields and is skew-adjoint,
$\langle f, D_L g\rangle = -\langle D_L f, g\rangle$. $D_M$ maps real fields to
real fields and is self-adjoint and positive semi-definite,
$\langle f, D_M f\rangle \ge 0$.
\end{proposition}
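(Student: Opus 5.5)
The plan is to work entirely on the Fourier side, using the Parseval identity $\langle f,g\rangle=\sum_k\overline{\widehat f_k}\,\widehat g_k$ and the reality criterion $\widehat f_{-k}=\overline{\widehat f_k}$. I will treat a general diagonal multiplier $D$ with symbol $\sigma(k)$, $\widehat{(Dg)}_k=\sigma(k)\widehat g_k$, and specialize at the end. Since the symbols are supported on finitely many modes (or are bounded, in the continuum), $D$ is a bounded operator on $\mathcal H$ and all sums converge. Boundedness follows from $\lVert Dg\rVert\le\sup_k|\sigma(k)|\,\lVert g\rVert$.

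\textbf{Reality.} For real $g$, I will check that $\widehat{(Dg)}_{-k}=\overline{\widehat{(Dg)}_k}$. This amounts to $\sigma(-k)\overline{\widehat g_k}=\overline{\sigma(k)}\,\overline{\widehat g_k}$, so it suffices that $\sigma(-k)=\overline{\sigma(k)}$.
\begin{itemize}
\item For $D_L$: $\sigma(-k)=\mathrm i\,a(-k)=-\mathrm i\,a(k)=\overline{\mathrm i\,a(k)}$, using that $a$ is real and odd.
\item For $D_M$: $\sigma(-k)=|b(-k)|^2=|b(k)|^2=\overline{\sigma(k)}$, using that the symbol is real and even.
\end{itemize}
This is exactly the Hermitian symmetry the irfft enforces in the implementation.

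\textbf{Adjointness.} By Parseval, $\langle f,Dg\rangle=\sum_k\overline{\widehat f_k}\,\sigma(k)\widehat g_k=\sum_k\overline{\overline{\sigma(k)}\widehat f_k}\,\widehat g_k=\langle D^\ast f,g\rangle$, where $D^\ast$ has symbol $\overline{\sigma(k)}$.
\begin{itemize}
\item For $D_L$, $\overline{\mathrm i a(k)}=-\mathrm i a(k)$, so $D_L^\ast=-D_L$, which is skew-adjointness.
\item For $D_M$, the symbol is real, so $D_M^\ast=D_M$.
\end{itemize}

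\textbf{Positivity.} $\langle f,D_Mf\rangle=\sum_k|b(k)|^2|\widehat f_k|^2\ge0$. For completeness I will also note the companion fact $\langle f,D_Lf\rangle=\sum_k \mathrm i\,a(k)|\widehat f_k|^2$. This sum is purely imaginary, yet it equals a real number because $D_Lf$ is real, so it vanishes. Equivalently, pairing the $k$ and $-k$ terms cancels them by oddness of $a$.

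The only delicate point is the hypothesis bookkeeping. In practice $a(k)$ and $b(k)$ are stored only on the half-spectrum used by the real FFT, and the statement assumes oddness and evenness. I will remark that extending the stored values by $a(-k):=-a(k)$ and $b(-k):=\overline{b(k)}$ is precisely what the inverse real FFT does implicitly. There is one subtlety at self-conjugate modes ($k=-k$, e.g.\ $k=0$ or Nyquist), where oddness forces $a(k)=0$. The implementation effectively discards the imaginary part there, so the argument covers the implemented operator. The discrete grid case is identical, with the finite DFT Parseval identity in place of the continuum one.
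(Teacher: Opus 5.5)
Your proposal is correct and follows essentially the same route as the paper. Both arguments work on the Fourier side with Parseval: reality comes from the Hermitian symmetry $\sigma(-k)=\overline{\sigma(k)}$ of the symbol (oddness of $a$, evenness of $|b|^2$), skew- or self-adjointness from conjugating the symbol, and positivity from $\sum_k |b(k)|^2|\widehat f_k|^2\ge 0$. Your general-symbol framing and your remarks on boundedness, on $\langle f, D_L f\rangle=0$, and on self-conjugate modes under the inverse real FFT are sound additions that the paper's proof does not spell out.
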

\begin{proof}
Reality of $D_L f$ follows from $\widehat{D_L f}_{-k} = \mathrm{i}\,a(-k)\widehat
f_{-k} = -\mathrm{i}\,a(k)\overline{\widehat f_k} = \overline{\mathrm{i}\,a(k)\widehat
f_k} = \overline{\widehat{D_L f}_k}$, using that $a$ is odd. By Parseval,
\[
\langle f, D_L g\rangle = \sum_k \overline{\widehat f_k}\,\mathrm{i}\,a(k)\widehat g_k,
\qquad
\langle D_L f, g\rangle = \sum_k \overline{\mathrm{i}\,a(k)\widehat f_k}\,\widehat g_k
= -\sum_k \mathrm{i}\,a(k)\overline{\widehat f_k}\,\widehat g_k,
\]
so $\langle f, D_L g\rangle = -\langle D_L f, g\rangle$. For $D_M$, evenness of
$|b(k)|^2$ gives reality by the same computation, self-adjointness because the
symbol is real, and
$\langle f, D_M f\rangle = \sum_k |b(k)|^2\,|\widehat f_k|^2 \ge 0$.
\end{proof}

In the implementation the inverse real FFT enforces the Hermitian symmetry that
makes $a(k)$ effectively odd and $|b(k)|^2$ even, so these properties hold for
\emph{any} values of the learnable parameters.

\subsection{Structure preservation under the projection sandwich}
\label{app:sandwich}

\begin{lemma}\label{lem:sandwich}
Let $P$ be a self-adjoint projection and $D$ a bounded operator. Then:
(i) if $D$ is skew-adjoint, $(I-P)D(I-P)$ is skew-adjoint; (ii) if $D$ is
self-adjoint and positive semi-definite, $(I-P)D(I-P)$ is self-adjoint and
positive semi-definite.
\end{lemma}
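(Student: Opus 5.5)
The plan is a direct adjoint computation. Write $Q = I-P$. Since $P$ is a self-adjoint projection, $Q$ is also self-adjoint: $Q^\ast = I - P^\ast = I-P = Q$. For the rank-one case used in the paper, this is exactly Lemma~\ref{lem:proj}. Boundedness of $D$, together with the fact that $Q$ is bounded (a self-adjoint projection has norm at most one), makes $QDQ$ a bounded operator on $\mathcal{H}$. Its adjoint therefore exists and can be computed with the usual rule $(ABC)^\ast = C^\ast B^\ast A^\ast$. No domain issues arise, and this is the only place boundedness is used.

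For (i), compute $(QDQ)^\ast = Q^\ast D^\ast Q^\ast = Q(-D)Q = -QDQ$. Equivalently, at the level of inner products, for all $f,g\in\mathcal{H}$,
\[
\langle f, QDQ g\rangle = \langle Qf, D(Qg)\rangle = -\langle D(Qf), Qg\rangle = -\langle QDQ f, g\rangle .
\]
The first and last equalities use only the self-adjointness of $Q$, and the middle one uses the skew-adjointness of $D$ applied to the vectors $Qf$ and $Qg$. I would present the inner-product version, since that is the form in which skewness is used in the energy-conservation argument.

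For (ii), self-adjointness follows the same way: $(QDQ)^\ast = QD^\ast Q = QDQ$. For positivity, take any $f$ and set $h = Qf$. Then
\[
\langle f, QDQ f\rangle = \langle Qf, D Qf\rangle = \langle h, Dh\rangle \ge 0
\]
by positive semi-definiteness of $D$. Idempotence of $Q$ is not needed anywhere, only $Q^\ast = Q$. I would remark on this, since it shows the result also covers the regularized projection $P_w v = \langle w,v\rangle w/(\langle w,w\rangle+\varepsilon)$ used in the implementation. That operator is still self-adjoint but not idempotent.

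I do not expect a genuine obstacle. The one point needing care is to move $Q$ across the inner product on the correct side, so that the hypothesis on $D$ is applied to the pair $(Qf,Qg)$ rather than to $(f,g)$.
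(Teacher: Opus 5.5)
Your proof is correct and is essentially the paper's argument: set $Q=I-P$, use $Q^\ast=Q$ to get $(QDQ)^\ast=QD^\ast Q$, and obtain positivity from $\langle v,QDQv\rangle=\langle Qv,D\,Qv\rangle\ge 0$. Your remark that only $Q^\ast=Q$ is used, and not idempotence, is also accurate. So the argument does extend to the $\varepsilon$-regularized projection, although for that operator exact degeneracy (Proposition~\ref{prop:degeneracy}) no longer holds, since $(I-P_w)w=\tfrac{\varepsilon}{\langle w,w\rangle+\varepsilon}\,w\neq 0$.
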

\begin{proof}
Write $Q=I-P$, which is self-adjoint ($Q^\ast=Q$) by Lemma~\ref{lem:proj}. Then
$(QDQ)^\ast = Q^\ast D^\ast Q^\ast = QD^\ast Q$. (i) If $D^\ast=-D$ then
$(QDQ)^\ast = -QDQ$. (ii) If $D^\ast=D$ then $(QDQ)^\ast = QDQ$, and for any $v$,
$\langle v, QDQ v\rangle = \langle Qv, D\,Qv\rangle \ge 0$ since $D\succeq0$.
\end{proof}

\begin{corollary}\label{cor:LM-structure}
With $L=(I-P_S)D_L(I-P_S)$ and $M=(I-P_E)D_M(I-P_E)$, the operator $L$ is
skew-adjoint and $M$ is self-adjoint positive semi-definite.
\end{corollary}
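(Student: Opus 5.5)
The corollary follows by instantiating Lemma~\ref{lem:sandwich} twice at a fixed but arbitrary state $u$. I will treat the two operators separately and, in each case, check the two hypotheses of the lemma: the outer projection is self-adjoint, and the inner multiplier is bounded with the required symmetry.

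For $L$, take $P=P_S$ with $w=\delta S/\delta u$. The case $w\neq0$ comes first. Lemma~\ref{lem:proj} shows that $P_S$ is a self-adjoint idempotent, so it is a self-adjoint projection. Proposition~\ref{prop:DL-DM} shows that $D_L$ maps real fields to real fields and is skew-adjoint on $\mathcal{H}$. Boundedness needs a separate check. Since $a(k)$ is supported on the fixed band of $m$ lowest modes, Parseval gives $\lVert D_L f\rVert\le \max_{|k|\le m}|a(k)|\,\lVert f\rVert$. Part (i) of Lemma~\ref{lem:sandwich} then yields that $L=(I-P_S)D_L(I-P_S)$ is skew-adjoint.

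For $M$, the argument is the same with $P=P_E$, $w=\delta E/\delta u$, and $D=D_M$. Proposition~\ref{prop:DL-DM} gives that $D_M$ is self-adjoint and positive semi-definite. The band-limited symbol gives the bound $\lVert D_M\rVert\le\max_k|b(k)|^2$. Part (ii) of Lemma~\ref{lem:sandwich} then gives the conclusion for $M$.

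The only real obstacle is the degenerate state where a gradient vanishes, since $P_w$ is undefined for $w=0$. I would adopt the convention that $P_0=0$, which is the $\varepsilon\to$ limit used in the implementation, where $P_w v=\langle w,v\rangle w/(\langle w,w\rangle+\varepsilon)$. With this convention $I-P_0=I$, so $L=D_L$ and $M=D_M$, and the claimed properties hold directly by Proposition~\ref{prop:DL-DM}.

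The $\varepsilon$-regularized map deserves a brief remark. It is still self-adjoint, because it is a nonnegative multiple of $w\langle w,\cdot\rangle$. The proof of Lemma~\ref{lem:sandwich} uses only self-adjointness of $Q$, not idempotence, so skewness and positivity survive regularization too.

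On the discrete grid, the same argument applies verbatim with the weighted inner product $\Delta x\sum_i f_ig_i$. As noted in \S\ref{app:setting}, positive rescaling of the gradients leaves $P_w$ unchanged, so the autodiff normalization does not matter.
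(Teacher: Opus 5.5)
Your proof is correct and follows the paper's route: the paper's proof is the one-line combination of Proposition~\ref{prop:DL-DM} with Lemma~\ref{lem:sandwich}, taking $P=P_S$ for $L$ and $P=P_E$ for $M$. Your extra checks are sound refinements of hypotheses the paper leaves implicit: boundedness of the band-limited multipliers, the $P_0=0$ convention at vanishing gradients, and the observation that Lemma~\ref{lem:sandwich} uses only self-adjointness of $I-P$, so skewness and positivity also survive the $\varepsilon$-regularized projection.
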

\begin{proof}
Combine Proposition~\ref{prop:DL-DM} with Lemma~\ref{lem:sandwich}, taking
$P=P_S$ for $L$ and $P=P_E$ for $M$.
\end{proof}

\subsection{Exact degeneracy}
\label{app:degeneracy}

\begin{proposition}\label{prop:degeneracy}
$L\,\dfrac{\delta S}{\delta u} = 0$ and $M\,\dfrac{\delta E}{\delta u} = 0$ hold
identically, for any parameters and any state $u$.
\end{proposition}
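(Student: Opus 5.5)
The plan is to apply the definitions in Eq.~\eqref{eq:construction} directly and use Lemma~\ref{lem:proj}. Fix an arbitrary state $u$ and arbitrary parameters. Write $w_S=\delta S/\delta u$ and $w_E=\delta E/\delta u$. Assume first that both are nonzero, so that $P_S=P_{w_S}$ and $P_E=P_{w_E}$ are well defined.

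For the first identity, compute $L\,w_S=(I-P_S)D_L\big((I-P_S)w_S\big)$. By Lemma~\ref{lem:proj}, $(I-P_S)w_S=0$. Since $D_L$ is linear (Proposition~\ref{prop:DL-DM}), $D_L 0=0$, and then $(I-P_S)0=0$. Hence $L\,w_S=0$.

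The argument for $M\,w_E=(I-P_E)D_M(I-P_E)w_E$ is identical: the rightmost projection annihilates $w_E$. The proof uses nothing about the symbols $a(k)$ and $b(k)$ beyond linearity of the multipliers. This is why the identities hold for every parameter value and every $u$; the state-dependence enters only through which vector each projection removes.

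The only point needing care is the degenerate case $w_S=0$ or $w_E=0$, where $P_w$ as defined in \S\ref{app:proj} is undefined. I would treat it in one of two ways. Either adopt the convention $P_0=0$, so the sandwich reduces to $D_L$ or $D_M$, and the identity reads $D_L0=0$ or $D_M0=0$, which holds trivially. Or, for the regularized implementation $P_w v=\langle w,v\rangle w/(\langle w,w\rangle+\varepsilon)$, note that $L\,w$ is then $(I-P_S)D_L\big(\tfrac{\varepsilon}{\langle w,w\rangle+\varepsilon}w\big)$. This is not exactly zero, but it is bounded by $\|D_L\|\,\varepsilon\|w\|/(\|w\|^2+\varepsilon)$, so the regularization error is negligible. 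I expect this edge-case bookkeeping to be the main, though minor, obstacle. The exact statement concerns the unregularized operators, so I would state the proposition for that case and remark on the $\varepsilon$-regularized version separately.
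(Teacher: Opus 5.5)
Your argument is correct and is essentially the paper's proof: the rightmost factor $(I-P_S)$ (resp.\ $(I-P_E)$) annihilates $\delta S/\delta u$ (resp.\ $\delta E/\delta u$) by Lemma~\ref{lem:proj}, and linearity of the remaining factors gives zero. Your handling of a vanishing gradient and your bound $\|D_L\|\,\varepsilon\|w\|/(\|w\|^2+\varepsilon)$ for the $\varepsilon$-regularized projection are correct additions; the paper leaves the $w\neq 0$ assumption of Lemma~\ref{lem:proj} implicit and treats the regularized case only in the main text.
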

\begin{proof}
By Lemma~\ref{lem:proj}, $(I-P_S)\,\delta S/\delta u = 0$. Hence
$L\,\delta S/\delta u = (I-P_S)D_L\big[(I-P_S)\,\delta S/\delta u\big]
= (I-P_S)D_L\,0 = 0$. Symmetrically $(I-P_E)\,\delta E/\delta u = 0$ gives
$M\,\delta E/\delta u = 0$.
\end{proof}

\subsection{The first and second laws}
\label{app:laws}

\begin{theorem}\label{thm:laws}
Let $\partial_t u = L\,\delta E/\delta u + M\,\delta S/\delta u$ with $L,M$ as
above. Then the semi-discrete (continuous-time) dynamics conserve energy and
produce entropy:
\[
\frac{dE}{dt} = 0,
\qquad
\frac{dS}{dt} = \Big\langle \tfrac{\delta S}{\delta u},\, M\,\tfrac{\delta S}{\delta u}\Big\rangle \ge 0 .
\]
\end{theorem}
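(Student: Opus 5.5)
The plan is the standard two-line GENERIC computation, using only Corollary~\ref{cor:LM-structure} and Proposition~\ref{prop:degeneracy}.

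\textbf{Chain rule.} Along a differentiable trajectory $u(t)$, the definition of the variational derivative as a Riesz representative (\S\ref{app:setting}) gives
\[
\frac{dE}{dt}=\Big\langle \tfrac{\delta E}{\delta u},\partial_t u\Big\rangle,
\qquad
\frac{dS}{dt}=\Big\langle \tfrac{\delta S}{\delta u},\partial_t u\Big\rangle .
\]
This assumes $E,S$ are Fréchet differentiable, which holds for the smooth networks used. Here $L=L(u)$ and $M=M(u)$ are frozen at the current state. The laws are pointwise-in-time identities, so this state dependence causes no difficulty.

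\textbf{Energy.} Substituting the dynamics splits $dE/dt$ into two terms:
\[
\frac{dE}{dt}=\Big\langle \tfrac{\delta E}{\delta u},\, L\,\tfrac{\delta E}{\delta u}\Big\rangle+\Big\langle \tfrac{\delta E}{\delta u},\, M\,\tfrac{\delta S}{\delta u}\Big\rangle .
\]
The first term vanishes because $L$ is skew-adjoint (Corollary~\ref{cor:LM-structure}): $\langle w,Lw\rangle=-\langle Lw,w\rangle$ forces it to be zero in a real Hilbert space. For the second term, self-adjointness of $M$ moves $M$ across the inner product, giving $\langle M\,\delta E/\delta u,\,\delta S/\delta u\rangle$. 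This is zero because $M\,\delta E/\delta u=0$ by Proposition~\ref{prop:degeneracy}. Hence $dE/dt=0$.

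\textbf{Entropy.} The same substitution gives
\[
\frac{dS}{dt}=\Big\langle \tfrac{\delta S}{\delta u},\, L\,\tfrac{\delta E}{\delta u}\Big\rangle+\Big\langle \tfrac{\delta S}{\delta u},\, M\,\tfrac{\delta S}{\delta u}\Big\rangle .
\]
For the first term, skew-adjointness turns it into $-\langle L\,\delta S/\delta u,\,\delta E/\delta u\rangle$. This vanishes because $L\,\delta S/\delta u=0$ by Proposition~\ref{prop:degeneracy}. The remaining term is the stated entropy production, and it is nonnegative by positive semi-definiteness of $M$ (Corollary~\ref{cor:LM-structure}).

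\textbf{Main subtlety.} The computation itself is routine. The one point that needs a sentence is that each degeneracy condition is stated with the operator acting on the \emph{other} gradient. It therefore has to be transferred to the cross term by the adjoint property: self-adjointness for $M$, and skewness (adjoint $=-L$) for $L$.

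A second small point concerns the autodiff gradient, which differs from the Riesz representative by a positive constant (\S\ref{app:setting}). I would remark that both inner products scale by positive constants under this change. So the conclusions $dE/dt=0$ and $dS/dt\ge0$ are unaffected, up to a positive factor in the displayed production rate. The projections $P_E,P_S$ are also invariant to this rescaling, so $L$ and $M$ themselves do not change.

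Finally, I would note that in the degenerate case where a gradient vanishes, the corresponding projection is taken as the identity. Each claim then holds trivially, since the relevant term already vanishes.
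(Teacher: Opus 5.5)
Your proposal is correct and follows the paper's proof essentially line for line. You use the chain rule, then skew-adjointness to kill $\langle \delta E/\delta u, L\,\delta E/\delta u\rangle$, self-adjointness of $M$ with $M\,\delta E/\delta u=0$ for the energy cross term, skewness with $L\,\delta S/\delta u=0$ for the entropy cross term, and positive semi-definiteness of $M$ for the production; your added remarks on the positive autodiff scaling constant and the vanishing-gradient case are consistent with the paper's function-space setting.
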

\begin{proof}
By the chain rule $dE/dt = \langle \delta E/\delta u,\,\partial_t u\rangle$.
Expanding,
\[
\frac{dE}{dt}
= \Big\langle \tfrac{\delta E}{\delta u}, L\,\tfrac{\delta E}{\delta u}\Big\rangle
+ \Big\langle \tfrac{\delta E}{\delta u}, M\,\tfrac{\delta S}{\delta u}\Big\rangle .
\]
The first term vanishes because $L$ is skew-adjoint (Corollary~\ref{cor:LM-structure}):
$\langle v, Lv\rangle = -\langle Lv, v\rangle = -\langle v, Lv\rangle$ forces
$\langle v,Lv\rangle=0$. The second vanishes because $M\,\delta E/\delta u=0$
(Proposition~\ref{prop:degeneracy}) and $M$ is self-adjoint, so
$\langle \delta E/\delta u, M\,\delta S/\delta u\rangle = \langle M\,\delta
E/\delta u,\,\delta S/\delta u\rangle = 0$. Likewise
\[
\frac{dS}{dt}
= \Big\langle \tfrac{\delta S}{\delta u}, L\,\tfrac{\delta E}{\delta u}\Big\rangle
+ \Big\langle \tfrac{\delta S}{\delta u}, M\,\tfrac{\delta S}{\delta u}\Big\rangle ;
\]
the first term equals $-\langle L\,\delta S/\delta u,\,\delta E/\delta u\rangle =
0$ by skew-adjointness and Proposition~\ref{prop:degeneracy}, and the second is
$\ge 0$ by positive semi-definiteness of $M$ (Corollary~\ref{cor:LM-structure}).
\end{proof}

\begin{proposition}[Discrete energy drift]\label{prop:drift}
Let $u_{t+1} = u_t + \Delta t\, f(u_t)$ with $f = L\,\delta E/\delta u + M\,\delta
S/\delta u$ the explicit Euler update, and suppose $E$ is $C^2$ with Hessian
bounded by $\beta$ on the trajectory. Then
$\big|E[u_{t+1}] - E[u_t]\big| \le \tfrac12\beta\,\Delta t^2\,\lVert f(u_t)\rVert^2 = \mathcal{O}(\Delta t^2)$.
\end{proposition}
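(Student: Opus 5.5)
The natural route is a second-order Taylor expansion of $E$ along the segment from $u_t$ to $u_{t+1}$. The first-order term is then killed by the continuous-time identity of Theorem~\ref{thm:laws}, evaluated at the frozen state $u_t$.

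Set $h=\Delta t\, f(u_t)$ and $\varphi(s)=E[u_t+s h]$ for $s\in[0,1]$. Since $E$ is $C^2$, Taylor's theorem with Lagrange (or integral) remainder gives
\[
E[u_{t+1}]-E[u_t] = \varphi'(0) + \tfrac12\varphi''(\xi) = \Big\langle \tfrac{\delta E}{\delta u}(u_t),\, h\Big\rangle + \tfrac12\,\big\langle h,\, \mathrm{Hess}\,E(u_t+\xi h)\,h\big\rangle
\]
for some $\xi\in(0,1)$.

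The first-order term is $\Delta t\,\langle \delta E/\delta u, L\,\delta E/\delta u + M\,\delta S/\delta u\rangle$, with everything evaluated at $u_t$. This is exactly the quantity shown to vanish in the proof of Theorem~\ref{thm:laws}. The first piece vanishes by skew-adjointness of $L$ (Corollary~\ref{cor:LM-structure}). The second vanishes by self-adjointness of $M$ together with $M\,\delta E/\delta u=0$ (Proposition~\ref{prop:degeneracy}). One point should be stressed: $L=L(u_t)$ and $M=M(u_t)$ are built from the gradients at $u_t$, and the inner product is taken against the gradient at that same $u_t$. The identity is therefore exact at the start of the step, and the explicit step inherits it for the linear term.

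For the remainder, I interpret "Hessian bounded by $\beta$ on the trajectory" as $\lVert \mathrm{Hess}\,E(v)\rVert_{\mathrm{op}}\le\beta$ for all $v$ on the segment $[u_t,u_{t+1}]$. This is the only genuine subtlety, and I would state it explicitly, or assume a convex neighbourhood containing the segments. With that reading, Cauchy--Schwarz gives $|\langle h, \mathrm{Hess}\,E\, h\rangle|\le\beta\lVert h\rVert^2=\beta\Delta t^2\lVert f(u_t)\rVert^2$, and the bound $\tfrac12\beta\Delta t^2\lVert f(u_t)\rVert^2$ follows.

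Finally, I would remark that the norm and inner product can be taken either in $L^2$ or in the discrete inner product, consistent with the scaling remark in App.~\ref{app:setting}. That scaling changes only the value of $\beta$, not the form of the estimate.
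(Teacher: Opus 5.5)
Your proposal is correct and takes essentially the same route as the paper's proof: a second-order Taylor expansion of $E$ along the Euler step. The linear term $\Delta t\,\langle \delta E/\delta u, f\rangle$ vanishes at $u_t$ by the skewness and degeneracy identity of Theorem~\ref{thm:laws}, and the remainder is bounded by $\tfrac12\beta\Delta t^2\lVert f(u_t)\rVert^2$. Reading the Hessian bound as holding on the segment $[u_t,u_{t+1}]$ is exactly what the paper's phrase ``on the trajectory'' must mean for the Lagrange remainder to apply, so this is a useful clarification rather than a departure.
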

\begin{proof}
By Taylor's theorem, $E[u_t+\Delta t f] = E[u_t] + \Delta t\langle \delta
E/\delta u, f\rangle + R$ with $|R|\le \tfrac12\beta\Delta t^2\lVert f\rVert^2$.
The first-order term $\langle \delta E/\delta u, f\rangle = dE/dt$ vanishes by
Theorem~\ref{thm:laws}, leaving $|E[u_{t+1}]-E[u_t]| = |R|$.
\end{proof}

Thus energy conservation is \emph{exact in continuous time} (the first-order
term is identically zero, verified numerically to $\sim\!10^{-13}$ in
\S\ref{sec:guarantees}); the explicit update incurs only the standard
$\mathcal{O}(\Delta t^2)$ integrator error, which a higher-order integrator
reduces at the cost discussed in \S\ref{sec:limitations}.

\subsection{Gauge freedom and gauge-invariance of the diagnostics}
\label{app:gauge}

\begin{definition}\label{def:gauge}
Two tuples $(E,S,L,M)$ and $(E',S',L',M')$ are \emph{gauge-equivalent} on a
domain $\mathcal{U}\subseteq\mathcal{H}$ if they generate the same flow,
\[
L(u)\,\frac{\delta E}{\delta u} + M(u)\,\frac{\delta S}{\delta u}
=
L'(u)\,\frac{\delta E'}{\delta u} + M'(u)\,\frac{\delta S'}{\delta u}
=: f(u) \qquad \text{for all } u\in\mathcal{U}.
\]
\end{definition}

\begin{proposition}\label{prop:gauge-inv}
The dissipation diagnostic
$r_{\mathrm{mech}}(u) = -\langle u, f(u)\rangle/(\lVert u\rVert\,\lVert f(u)\rVert)$
and the ground-truth rate $\Pi^\star$ are gauge-invariant: gauge-equivalent
tuples yield identical values. Moreover, if the flow conserves the quadratic
energy $Q[u]=\tfrac12\lVert u\rVert^2$ (a reversible flow), then
$r_{\mathrm{mech}}(u)=0$ for all $u$.
\end{proposition}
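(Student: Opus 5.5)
The plan is to observe that both quantities are functions of the flow $f$ alone, or of data alone, and never of the individual ingredients $(E,S,L,M)$. Gauge invariance then reduces to unpacking Definition~\ref{def:gauge}. The reversible claim is a one-line chain-rule computation on $Q$.

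\textbf{Step 1: invariance of $r_{\mathrm{mech}}$.} I will write $r_{\mathrm{mech}}(u) = -\langle u, f(u)\rangle/(\lVert u\rVert\,\lVert f(u)\rVert)$. The inner product and the two norms are evaluated on the state $u$ and on the flow vector $f(u)$. No variational derivative of $E$ or $S$, and no operator $L$ or $M$, enters except through their combination $f(u)$. If two tuples are gauge-equivalent on $\mathcal{U}$, then by Definition~\ref{def:gauge} they produce the same $f(u)$ at every $u\in\mathcal{U}$. The numerator and denominator therefore coincide term by term, and so do the ratios.

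I will make the convention for degenerate points explicit. When $u=0$ or $f(u)=0$, the ratio is undefined for both tuples simultaneously, since $f$ is shared. Whatever convention is adopted there (for instance, setting $r_{\mathrm{mech}}=0$) is applied identically to both, so invariance still holds. The bound $r_{\mathrm{mech}}\in[-1,1]$ follows from Cauchy--Schwarz, although it is not needed for invariance.

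\textbf{Step 2: invariance of $\Pi^\star$.} Here $\Pi^\star=(Q[u]-Q[u^\star_{t+1}])/Q[u]$ depends only on the reference trajectory and on the fixed functional $Q[u]=\tfrac12\lVert u\rVert^2$. It contains no learned quantity at all, so it is trivially identical for any two tuples, gauge-equivalent or not. I will note that this is a stronger statement than the gauge invariance claimed.

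\textbf{Step 3: the reversible case.} Since $Q$ is Fréchet differentiable on $\mathcal{H}$ with $\delta Q/\delta u = u$, which is the Riesz representative from \S\ref{app:setting}, the chain rule along the flow gives $\tfrac{d}{dt}Q = \langle u, f(u)\rangle$. Suppose the flow conserves $Q$ along every trajectory through every $u$, which I will interpret as $\tfrac{d}{dt}Q=0$ at each state. Then $\langle u,f(u)\rangle=0$ for all $u$, so the numerator of $r_{\mathrm{mech}}$ vanishes and $r_{\mathrm{mech}}(u)=0$ wherever it is defined.

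The only subtle point, and the step I expect to need care, is this interpretation of ``conserves $Q$''. Conservation along trajectories must be converted into the pointwise identity $\langle u, f(u)\rangle=0$. This is immediate if conservation is assumed at every initial state, because each $u$ is the initial point of its own trajectory. I would state that hypothesis explicitly, and again use the degenerate-point convention where $f(u)=0$ or $u=0$. In the discrete setting, the same argument goes through with the grid inner product $\Delta x\sum_i$. The scalar rescaling of gradients discussed in \S\ref{app:setting} does not affect the zero.
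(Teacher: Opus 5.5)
Your proposal is correct and follows the paper's proof essentially step for step. You argue, as the paper does, that $r_{\mathrm{mech}}$ depends only on the pair $(u,f(u))$, which gauge-equivalent tuples share by Definition~\ref{def:gauge}; that $\Pi^\star$ never references $(E,S,L,M)$; and that the chain rule $\tfrac{d}{dt}Q=\langle u,f(u)\rangle$ makes the numerator vanish under conservation. Your added remarks on the degenerate points $u=0$ or $f(u)=0$, and on reading ``conserves $Q$'' as a pointwise identity at every state, are sensible clarifications the paper leaves implicit.
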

\begin{proof}
$r_{\mathrm{mech}}$ is a function of the pair $(u, f(u))$ alone; by
Definition~\ref{def:gauge} gauge-equivalent tuples share the same $f$, hence the
same $r_{\mathrm{mech}}$. $\Pi^\star$ is computed from the reference trajectory
and does not reference $(E,S,L,M)$ at all. For the reversible case,
$\tfrac{d}{dt}Q[u] = \langle u, \partial_t u\rangle = \langle u, f(u)\rangle$;
if $Q$ is conserved this is zero, so the numerator of $r_{\mathrm{mech}}$
vanishes.
\end{proof}

\begin{remark}
By contrast, the dissipative fraction $\rho_M = \lVert M\,\delta S/\delta
u\rVert/(\lVert L\,\delta E/\delta u\rVert+\lVert M\,\delta S/\delta u\rVert)$ is
\emph{not} gauge-invariant: it depends on the individual generators $L\,\delta
E/\delta u$ and $M\,\delta S/\delta u$, which gauge-equivalent tuples may
apportion differently while preserving their sum $f$. This is why we report
$\rho_M$ only as an appendix mechanism diagnostic and base all thermodynamic
claims on $r_{\mathrm{mech}}$.
\end{remark}

\subsection{Resolution and dimension independence}
\label{app:resolution}
Let $\Pi_N:\mathcal{H}\to V_N$ be the orthogonal projection onto the span of the
first $N$ Fourier modes (the grid of resolution $N$), and let the operator band
$m$ satisfy $2m \le N$. Write $\Phi$ for one GENERIC-FNO step, with the
variational derivatives taken as $L^2$ gradients, i.e.\ the autodiff gradient
scaled by $N^d/|\Omega|$ (Appendix~\ref{app:setting}).

\begin{proposition}\label{prop:resinv}
For any band-limited field $u$ with $\widehat u_k = 0$ for $|k|>K$, and any two
resolutions $N,N'$ with $2m \le N \le N'$ and $K \le m$, the step commutes with
prolongation up to the aliasing of the pointwise nonlinearities in the
functional networks: $\Phi_{N'}(\iota_{N\to N'}u) = \iota_{N\to N'}\Phi_N(u)$
whenever the functionals are band-limited, where $\iota_{N\to N'}$ is
zero-padding in Fourier (band-limited interpolation). The identities of
Proposition~\ref{prop:degeneracy} and Theorem~\ref{thm:laws} hold at every
resolution unconditionally.
\end{proposition}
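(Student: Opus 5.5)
The proof splits into two parts: the unconditional part (identities at every resolution) and the commutation part (the step commutes with prolongation when the functionals are band-limited).

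\textbf{Unconditional part.} This is almost immediate. At any fixed resolution $N$, the grid space $V_N$ with inner product $\langle f,g\rangle=\Delta x\sum_i f_ig_i$ is a finite-dimensional real Hilbert space. On it, the discrete FFT satisfies a Parseval identity of the same form used in Proposition~\ref{prop:DL-DM}. The inverse real FFT enforces Hermitian symmetry on the grid's frequency set, and $2m\le N$ guarantees that the band of $m$ modes fits. So the symbols $\mathrm{i}a(k)$ and $|b(k)|^2$ define a skew-adjoint $D_L$ and a PSD $D_M$ on $V_N$. Lemma~\ref{lem:proj}, Lemma~\ref{lem:sandwich} and Corollary~\ref{cor:LM-structure} are purely Hilbert-space statements, so they apply verbatim. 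Proposition~\ref{prop:degeneracy} and Theorem~\ref{thm:laws} then follow as before. Rescaling the gradients by $N^d/|\Omega|$ is a positive scalar, under which every property is invariant (Appendix~\ref{app:setting}). So no hypothesis on $u$ or on the functionals is needed here.

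\textbf{Commutation part.} I would track each ingredient of $\Phi$ under zero-padding $\iota=\iota_{N\to N'}$.

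\emph{Step 1: the inner product.} Because $\iota$ is Fourier zero-padding and the $L^2$ inner product equals $\sum_k\overline{\hat f_k}\hat g_k$ on band-limited fields, $\iota$ is an isometry: $\langle\iota f,\iota g\rangle_{N'}=\langle f,g\rangle_N$. This uses the $\Delta x$-weighted discrete inner product, which reproduces the continuum integral exactly for trigonometric polynomials within the Nyquist band.

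\emph{Step 2: the gradients.} I would interpret ``the functionals are band-limited'' as follows: $E_\theta,S_\phi$ evaluated on $V_N$ agree with the continuum functionals restricted to band-limited fields, and their $L^2$ gradients at $u$ lie in $V_N$. Granting this, the $L^2$-scaled discrete gradients satisfy
\[
\frac{\delta E_{N'}}{\delta u}(\iota u)=\iota\,\frac{\delta E_N}{\delta u}(u),
\qquad
\frac{\delta S_{N'}}{\delta u}(\iota u)=\iota\,\frac{\delta S_N}{\delta u}(u),
\]
since both sides are the Riesz representative of the same differential under the isometry of Step 1. This is where the $N^d/|\Omega|$ scaling is essential. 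Without it, the autodiff gradient shrinks by $(N/N')^d$, which is exactly the $1/2^3$ contraction seen in \S\ref{sec:3d}.

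\emph{Step 3: the projections.} Since $P_w$ is built only from inner products and the vector $w$, Steps 1--2 give $P_{\iota w}\,\iota=\iota\,P_w$.

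\emph{Step 4: the multipliers.} $D_L$ and $D_M$ act on the fixed band of $m\le N/2$ modes and zero the rest, so $D\,\iota=\iota\,D$.

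\emph{Composition.} Composing the sandwich $(I-P)D(I-P)$ applied to $\iota$ of the gradient yields $\Phi_{N'}(\iota u)=\iota\Phi_N(u)$.

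\textbf{Main obstacle.} The hard step is Step 2: making precise what ``band-limited functionals'' means and showing the discrete gradient equals the restricted continuum gradient. The mean-pooling $\frac1N\sum_i g(u)_i$ equals $\frac1{|\Omega|}\int g(u)$ only if $g(u)$ has no content aliasing onto the zero mode at resolution $N$. Likewise, the FNO layers inside $g$ commute with padding only if every intermediate pointwise nonlinearity output stays below Nyquist.

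I would state this as the hypothesis itself: every intermediate activation of $g_\theta,g_\phi$ on $u$ is resolved on $V_N$. Under that hypothesis, trigonometric quadrature is exact, and the backward pass inherits the same property. The hypothesis $K\le m$ keeps $u$ itself in the operator band. The residual seen numerically in \S\ref{sec:3d} is then precisely the violation of this hypothesis, which matches the proposition's ``up to aliasing'' qualifier.
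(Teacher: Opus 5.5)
Your proposal is correct and takes essentially the same route as the paper's proof sketch. The multipliers commute with zero-padding because they act on a fixed band. The projections commute because they are ratios of inner products, which is also why the identities hold unconditionally at each fixed grid. The gradients agree across grids only under the band-limited hypothesis together with the $N^d/|\Omega|$ scaling.

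You are more explicit than the paper in two places: the isometry of $\iota_{N\to N'}$, and the extra assumption that the gradient lies in $V_N$, which the Riesz argument needs because on its own it only fixes the component of the fine-grid gradient in $\iota_{N\to N'}V_N$. Your closing claim that the backward pass automatically inherits resolvedness from the forward activations is asserted rather than shown, and is best simply folded into the hypothesis.
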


\begin{proof}[Proof sketch]
The multipliers $D_L,D_M$ act diagonally on modes $|k|\le m$ and as zero
outside, identically at any $N\ge 2m$; zero-padding adds only modes $|k|>m$,
which both $\Phi_N$ and $\Phi_{N'}$ leave untouched. The spectral-convolution
and mean-pooling layers of the functional networks are defined per-mode and by
the resolution-independent spatial average, so for band-limited functionals
$\delta E/\delta u$ and $\delta S/\delta u$ agree as $L^2$ functions across
$N,N'$; this is where the $L^2$ scaling enters, since the raw autodiff gradient
carries the grid factor $|\Omega|/N^d$ of Appendix~\ref{app:setting} and would
shrink the update by that factor per refinement. The projections $P_E,P_S$ are
ratios of inner products and are invariant to the grid measure under either
convention, which is why the identities hold unconditionally. Hence every
band-limited component of $\Phi$ commutes with $\iota_{N\to N'}$; the pointwise
nonlinearities are the only components that are not band-limited, and their
aliased content is the residual.
\end{proof}

Proposition~\ref{prop:resinv} is the formal content of the zero-shot
super-resolution result (\S\ref{sec:superres}): under the $L^2$ scaling, a model
trained at one resolution applies unchanged at any finer one, and the
structural guarantees of Theorem~\ref{thm:laws} and
Proposition~\ref{prop:degeneracy}, holding pointwise at every $u$, transfer with
it under any scaling. Both statements are checked directly in
\S\ref{sec:superres}: in 2D (Table~\ref{tab:superres}) and in 3D
(Table~\ref{tab:3d-identities}), where with the $L^2$ scaling the $64^3$ update
agrees with the prolonged $32^3$ update to within $8.5\%$ (random
initialization) and $7.1\%$ (trained) of its norm---the aliasing residual---and
without it the update shrinks by exactly $1/2^3$ per refinement while every
identity is unchanged. The aliasing of the pointwise nonlinearities is also the
source of the coarse-grid effect analyzed in \S\ref{sec:limitations}.

\section{1D Fourier Neural Operator Results}
\label{app:1dfno}

Table~\ref{tab:1d-full} reports the complete five-seed 1D FNO study referenced in \S\ref{sec:accuracy}, including one-step errors and the damped-wave row. The 1D FNO is the smallest-capacity backbone we examine (GENERIC-FNO uses $47$K parameters versus $71$K for the baselines) and the one on which the structural prior helps least: GENERIC wins Burgers in every seed, ties FNO on heat while losing to EP-FNO, and loses advection in every seed. Appendix~\ref{app:capacity} rules out capacity as the cause, and Appendix~\ref{app:underdiss} traces most of the deficit to the instance normalization in the functional networks. The damped-wave row substantiates the partial-observation claim of \S\ref{sec:accuracy}: at $n_x=64$, with only the displacement $u$ observed and not its velocity, every model---including the unconstrained FNO---collapses to a rollout error of approximately $0.48$, so wave is excluded from the main-text tables.

\begin{table}[H]
\centering
\caption{\textbf{1D FNO, five-seed accuracy} (relative $L^2$, mean~$\pm$~std).
GENERIC-FNO: $47$K parameters; FNO and EP-FNO: $71$K. Bold = best per PDE on
rollout. Wave entries are the partial-observation collapse (all models
$\approx 0.48$).}
\label{tab:1d-full}
\begin{tabular}{llcc}
\toprule
PDE & Model & $L^2$ (1-step) & $L^2$ (rollout) \\
\midrule
\multirow{4}{*}{Heat}
 & FNO         & $0.016\pm0.003$ & $0.082\pm0.012$ \\
 & EP-FNO      & $0.013\pm0.002$ & $\mathbf{0.066\pm0.009}$ \\
 & GENERIC-FNO & $0.022\pm0.015$ & $0.081\pm0.019$ \\
  & Persistence & $0.010$ & $0.054$ \\      
\midrule
\multirow{4}{*}{Advection}
 & FNO         & $0.031\pm0.002$ & $\mathbf{0.160\pm0.023}$ \\
 & EP-FNO      & $0.032\pm0.003$ & $0.162\pm0.019$ \\
 & GENERIC-FNO & $0.090\pm0.045$ & $0.362\pm0.069$ \\
  & Persistence & $0.037$ & $0.201$ \\      
\midrule
\multirow{4}{*}{Burgers}
 & FNO         & $0.012\pm0.004$ & $0.070\pm0.033$ \\
 & EP-FNO      & $0.012\pm0.005$ & $0.066\pm0.023$ \\
 & GENERIC-FNO & $0.008\pm0.003$ & $\mathbf{0.042\pm0.015}$ \\
  & Persistence & $0.004$ & $0.019$ \\      
\midrule
\multirow{4}{*}{Wave}
 & FNO         & $0.083\pm0.010$ & $0.490\pm0.050$ \\
 & EP-FNO      & $0.082\pm0.010$ & $0.490\pm0.041$ \\
 & GENERIC-FNO & $0.073\pm0.012$ & $\mathbf{0.479\pm0.045}$ \\
 & Persistence & $0.059$ & $0.439$ \\      
\bottomrule
\end{tabular}
\end{table}

\subsection{1D-FNO capacity sweep}
\label{app:capacity}

To investigate whether the 1D-FNO losses on heat and advection (Appendix~\ref{app:1dfno}) reflect a backbone-capacity limitation, we scale the backbone over four capacity levels and test whether the gap to the unconstrained FNO diminishes. In step, we increase the FNO/EP-FNO width across $\{16,32,64,128\}$ (the paper uses $32$) and the GENERIC functional width \texttt{width\_func} across $\{12,24,48,96\}$ (the paper uses $24$), keeping both model families proportionally scaled and spanning $17$K--$1.1$M parameters. With all other settings and the \texttt{train\_model}/\texttt{evaluate\_model} code identical to the 1D protocol of Appendix~\ref{app:1dfno} ($150$ epochs, $200$ train / $50$ test trajectories, single seed), we report the $10$-step autoregressive rollout $L^2$ error.

\begin{table}[H]
\centering
\caption{\textbf{1D-FNO capacity sweep} (rollout $L^2$, single seed). Params
are FNO\,/\,GENERIC. The rightmost column is the GENERIC/FNO error ratio; the
paper's configuration is L1. The ratio is \emph{non-monotone} in width on every
PDE and essentially flat from L1 upward, so enlarging the functional networks
does not close the gap---the losses are not a capacity effect.}
\label{tab:capacity}
\begin{tabular}{llccccc}
\toprule
PDE & Params (FNO/GEN) & FNO & EP-FNO & GENERIC-FNO & GEN/FNO \\
\midrule
\multirow{4}{*}{Heat}
 & L0\ \ $17$K\,/\,$12$K   & 0.112 & 0.111 & \textbf{0.074} & 0.66 \\
 & L1\ \ $71$K\,/\,$47$K   & \textbf{0.054} & 0.054 & 0.084 & 1.56 \\
 & L2\ \ $283$K\,/\,$185$K & 0.063 & \textbf{0.063} & 0.079 & 1.25 \\
 & L3\ \ $1.1$M\,/\,$739$K & 0.073 & \textbf{0.072} & 0.091 & 1.26 \\
\midrule
\multirow{4}{*}{Advection}
 & L0\ \ $17$K\,/\,$12$K   & 0.140 & \textbf{0.139} & 0.212 & 1.52 \\
 & L1\ \ $71$K\,/\,$47$K   & \textbf{0.118} & 0.118 & 0.276 & 2.34 \\
 & L2\ \ $283$K\,/\,$185$K & 0.116 & \textbf{0.113} & 0.261 & 2.25 \\
 & L3\ \ $1.1$M\,/\,$739$K & \textbf{0.118} & 0.118 & 0.243 & 2.07 \\
\midrule
\multirow{4}{*}{Burgers}
 & L0\ \ $17$K\,/\,$12$K   & 0.256 & 0.255 & \textbf{0.031} & 0.12 \\
 & L1\ \ $71$K\,/\,$47$K   & \textbf{0.060} & 0.061 & 0.090 & 1.50 \\
 & L2\ \ $283$K\,/\,$185$K & \textbf{0.054} & 0.055 & 0.078 & 1.46 \\
 & L3\ \ $1.1$M\,/\,$739$K & 0.061 & \textbf{0.059} & 0.089 & 1.46 \\
\bottomrule
\end{tabular}
\end{table}

Across the three test cases, the GENERIC/FNO ratio traces a non-monotone path before settling: on heat it moves through \(0.66\), \(1.56\), \(1.25\), and \(1.26\); on advection through \(1.52\), \(2.34\), \(2.25\), and \(2.07\); and on Burgers through \(0.12\), \(1.50\), \(1.46\), and \(1.46\). In each instance the sequence becomes flat once the backbone reaches L1. Raising the number of functional parameters by a factor of sixteen between L1 and L3 shifts the ratio by no more than a few percent. Consequently, the 1D losses do not stem from the expressiveness of \(E_\theta\) or \(S_\phi\); instead, Appendix~\ref{app:underdiss} traces most of the shortfall to an artifact of the instance normalization employed in those networks. Two cells at level L0 deviate sharply, but this reflects the smallest FNO width rather than any systemic feature—an under-parameterized FNO handles Burgers poorly (yielding \(0.256\)), whereas the same small FNO performs comparatively well on heat. The pattern worth attending to is therefore the L1–L3 stretch, which remains essentially unchanged. Since these results come from single seeds, and given the seed variability of the 1D backbone documented in Appendix~\ref{app:1dfno}, the conclusion should be drawn from the level-to-level trajectory, not from any isolated cell.

\subsection{Numerical stability of the rank-one projections as the learned gradients vanish}
\label{app:stability}

Near a critical point of \(E_\theta\) or \(S_\phi\), where \(\langle\delta E/\delta u,\delta E/\delta u\rangle\) or \(\langle\delta S/\delta u,\delta S/\delta u\rangle\) tends to zero, the denominators in the projections approach zero as noted in \S\ref{sec:operators}. The projection itself is computed as \(P_w v = \langle w,v\rangle\,w/(\langle w,w\rangle+\varepsilon)\), with \(\varepsilon = 10^{-12}\) in single precision. To characterize its limiting behavior, we run four probes: (a) the operator in isolation, sweeping \(\lVert w\rVert\) across fourteen orders of magnitude; (b) within the model, letting the trained functional gradients shrink via \(\tau \to 0\); (c) within the model, at states genuinely close to equilibrium, \(u = A\,u_0\) as \(A \to 0\); and (d) a 2000-step heat rollout progressing toward equilibrium. A fifth probe (e) pinpoints the origin of a scale effect uncovered by (c). Every in-model probe employs a GENERIC-FNO trained on 1D heat.

\paragraph{(a) Operator level.}
For an arbitrary \(w\), Cauchy–Schwarz ensures \(\lVert(I-P_w)v\rVert \le \lVert v\rVert\), meaning the projection acts as a contraction and cannot magnify the input; it also guarantees that \((I-P_w)v\) approaches \(v\) smoothly as \(w\) goes to zero. Table~\ref{tab:stab-a} verifies both features: the ratio never exceeds \(1\), and the correction term \(\lVert P_w v\rVert\) decays along with \(\lVert w\rVert\). When \(\lVert w\rVert^2\) falls well below \(\varepsilon\), the projection effectively becomes the identity, which causes no harm—the direction it would eliminate is itself nearly zero, so the degeneracy identities hold in a trivial manner.

\begin{table}[H]
\centering
\caption{\textbf{(a) Operator-level stability.} $v,w$ unit-normalized;
$\lVert w\rVert$ swept. Ratio $\lVert(I-P_w)v\rVert/\lVert v\rVert\le 1$
everywhere; correction $\to 0$; all finite.}
\label{tab:stab-a}
\begin{tabular}{lccc}
\toprule
$\lVert w\rVert$ & $\lVert(I-P_w)v\rVert/\lVert v\rVert$ & $\lVert P_w v\rVert$ & $\lVert(I-P_w)w\rVert/\lVert w\rVert$ \\
\midrule
$10^{0}$   & 0.986272 & $1.65\!\times\!10^{-1}$ & $0.00\!\times\!10^{0}$ \\
$10^{-2}$  & 0.986272 & $1.65\!\times\!10^{-1}$ & $0.00\!\times\!10^{0}$ \\
$10^{-4}$  & 0.986272 & $1.65\!\times\!10^{-1}$ & $1.00\!\times\!10^{-4}$ \\
$10^{-5}$  & 0.986274 & $1.63\!\times\!10^{-1}$ & $9.90\!\times\!10^{-3}$ \\
$10^{-6}$  & 0.989722 & $8.26\!\times\!10^{-2}$ & $5.00\!\times\!10^{-1}$ \\
$10^{-7}$  & 0.999731 & $1.63\!\times\!10^{-3}$ & $9.90\!\times\!10^{-1}$ \\
$10^{-10}$ & 1.000000 & $2.33\!\times\!10^{-10}$ & $1.00\!\times\!10^{0}$ \\
$10^{-12}$ & 1.000000 & $0.00\!\times\!10^{0}$ & $1.00\!\times\!10^{0}$ \\
$10^{-14}$ & 1.000000 & $0.00\!\times\!10^{0}$ & $1.00\!\times\!10^{0}$ \\
\bottomrule
\end{tabular}
\end{table}

\paragraph{(b) In-model, gradients scaled by $\tau$.}

The exact construction, when supplied with $\tau\,\delta E/\delta u$ and $\tau\,\delta S/\delta u$, drives $\langle w, w\rangle$ to zero. Linearity in $\tau$ is exact throughout---the $\lVert\text{update}\rVert/\tau$ column remains constant---and stability holds down to $\tau = 10^{-9}$, where absolute degeneracy residuals stay at or below $10^{-14}$ (Table \ref{tab:stab-b}). Only once $\lVert\tau\,\delta E\rVert^2$ drops below $\varepsilon$ does the dimensionless ratio $r_E$ climb to $\mathcal{O}(0.1)$; yet in that regime the update magnitude is under $5\times10^{-6}$, and the absolute energy drift $\lvert\langle\delta E/\delta u, \partial_t u\rangle\rvert$ sits near $10^{-14}$. The floor therefore substitutes boundedness for a vanishing normalized ratio exactly where the update has already become negligible, and lowering that floor further requires either double precision or a smaller $\varepsilon$.

\begin{table}[H]
\centering
\caption{\textbf{(b) In-model, gradients scaled by $\tau$.} Update
$\propto\tau$; degeneracy residuals $|\langle\delta E,M\delta S\rangle|$,
$|\langle\delta S,L\delta E\rangle|$ at machine level; entropy production
$\min\,\mathrm{d}S/\mathrm{d}t\ge 0$; all finite. Base
$\lVert\delta E\rVert=1.2\!\times\!10^{-2}$,
$\lVert\delta S\rVert=5.8\!\times\!10^{-1}$.}
\label{tab:stab-b}
\begin{tabular}{lccccc}
\toprule
$\tau$ & $\lVert\text{upd}\rVert$ & $\lVert\text{upd}\rVert/\tau$ & $r_E$ & $|\langle\delta E,M\delta S\rangle|$ & $\min\,\mathrm{d}S/\mathrm{d}t$ \\
\midrule
$10^{0}$  & $5.58\!\times\!10^{-2}$ & $5.58\!\times\!10^{-2}$ & $1.29\!\times\!10^{-8}$ & $2.77\!\times\!10^{-12}$ & $8.77\!\times\!10^{-5}$ \\
$10^{-1}$ & $5.58\!\times\!10^{-3}$ & $5.58\!\times\!10^{-2}$ & $1.71\!\times\!10^{-7}$ & $4.03\!\times\!10^{-13}$ & $8.77\!\times\!10^{-7}$ \\
$10^{-2}$ & $5.58\!\times\!10^{-4}$ & $5.58\!\times\!10^{-2}$ & $1.69\!\times\!10^{-5}$ & $3.98\!\times\!10^{-13}$ & $8.77\!\times\!10^{-9}$ \\
$10^{-3}$ & $5.57\!\times\!10^{-5}$ & $5.57\!\times\!10^{-2}$ & $1.65\!\times\!10^{-3}$ & $3.84\!\times\!10^{-13}$ & $8.79\!\times\!10^{-11}$ \\
$10^{-4}$ & $5.48\!\times\!10^{-6}$ & $5.48\!\times\!10^{-2}$ & $5.05\!\times\!10^{-2}$ & $9.67\!\times\!10^{-14}$ & $9.32\!\times\!10^{-13}$ \\
$10^{-5}$ & $5.48\!\times\!10^{-7}$ & $5.48\!\times\!10^{-2}$ & $1.19\!\times\!10^{-1}$ & $3.80\!\times\!10^{-15}$ & $8.95\!\times\!10^{-15}$ \\
$10^{-6}$ & $5.48\!\times\!10^{-8}$ & $5.48\!\times\!10^{-2}$ & $1.22\!\times\!10^{-1}$ & $4.18\!\times\!10^{-17}$ & $8.94\!\times\!10^{-17}$ \\
$10^{-7}$ & $5.48\!\times\!10^{-9}$ & $5.48\!\times\!10^{-2}$ & $1.22\!\times\!10^{-1}$ & $4.18\!\times\!10^{-19}$ & $8.94\!\times\!10^{-19}$ \\
$10^{-9}$ & $5.48\!\times\!10^{-11}$ & $5.48\!\times\!10^{-2}$ & $1.22\!\times\!10^{-1}$ & $4.18\!\times\!10^{-23}$ & $8.94\!\times\!10^{-23}$ \\
\bottomrule
\end{tabular}
\end{table}

\paragraph{(c) Near-equilibrium states.} Table~\ref{tab:stab-c} evaluates the trained model at $u=A\,u_0$ as $A\to 0$, where $u=0$ is the true fixed point. The structure holds at every amplitude: $\max r_E=1.5\times10^{-8}$, $\min\,\mathrm{d}S/\mathrm{d}t\ge0$, all finite. The gradient and update norms, however, grow as $A\to 0$. This is not due to the projections; rather, it stems from the instance normalization in the functional backbones, which rescales activations by their standard deviation and renders the learned functionals nearly scale-invariant in $u$. Consequently, $\lVert\delta E/\delta u\rVert$ scales as $1/\lVert u\rVert$, capped by the InstanceNorm $\varepsilon=10^{-5}$. Probe~(e) confirms the source.

\begin{table}[H]
\centering
\caption{\textbf{(c) Near-equilibrium sweep} on the trained heat model,
$u=A\,u_0$. Structure holds at every amplitude; the growth of
$\lVert\delta E\rVert,\lVert\delta S\rVert,\lVert\text{update}\rVert$ as
$A\to0$ is the normalization effect diagnosed in (e).}
\label{tab:stab-c}
\begin{tabular}{lcccccc}
\toprule
$A$ & $\lVert\delta E\rVert$ & $\lVert\delta S\rVert$ & $\lVert\text{update}\rVert$ & $r_E$ & $|\langle\delta E,M\delta S\rangle|$ & $\min\,\mathrm{d}S/\mathrm{d}t$ \\
\midrule
$10^{0}$  & $1.20\!\times\!10^{-2}$ & $5.78\!\times\!10^{-1}$ & $5.58\!\times\!10^{-2}$ & $1.29\!\times\!10^{-8}$ & $2.77\!\times\!10^{-12}$ & $8.77\!\times\!10^{-5}$ \\
$10^{-1}$ & $1.20\!\times\!10^{-1}$ & $5.73\!\times\!10^{0}$  & $5.61\!\times\!10^{-1}$ & $1.20\!\times\!10^{-8}$ & $1.94\!\times\!10^{-10}$ & $9.46\!\times\!10^{-3}$ \\
$10^{-2}$ & $9.89\!\times\!10^{-1}$ & $4.48\!\times\!10^{1}$  & $4.26\!\times\!10^{0}$  & $1.17\!\times\!10^{-8}$ & $6.85\!\times\!10^{-8}$  & $5.39\!\times\!10^{-1}$ \\
$10^{-3}$ & $9.49\!\times\!10^{0}$  & $4.02\!\times\!10^{2}$  & $4.04\!\times\!10^{1}$  & $1.50\!\times\!10^{-8}$ & $6.54\!\times\!10^{-6}$  & $7.96\!\times\!10^{1}$ \\
$10^{-4}$ & $1.17\!\times\!10^{2}$  & $6.31\!\times\!10^{3}$  & $3.77\!\times\!10^{2}$  & $7.31\!\times\!10^{-9}$ & $9.19\!\times\!10^{-4}$  & $2.62\!\times\!10^{4}$ \\
$10^{-5}$ & $1.36\!\times\!10^{3}$  & $5.03\!\times\!10^{4}$  & $1.81\!\times\!10^{3}$  & $9.58\!\times\!10^{-9}$ & $3.27\!\times\!10^{-2}$  & $1.23\!\times\!10^{6}$ \\
$10^{-6}$ & $2.29\!\times\!10^{3}$  & $3.01\!\times\!10^{5}$  & $7.42\!\times\!10^{3}$  & $1.20\!\times\!10^{-8}$ & $1.71\!\times\!10^{-1}$  & $7.52\!\times\!10^{6}$ \\
\bottomrule
\end{tabular}
\end{table}

\paragraph{(e) Diagnosis: InstanceNorm, not the projections.}

The cause is isolated by repeating the amplitude sweep on a fresh randomly initialized model, with and without InstanceNorm in the functional nets (Table~\ref{tab:stab-e}; training is irrelevant to this scaling). In the absence of InstanceNorm, the gradient stays constant in \(A\) and the update shrinks with the field, as a gradient flow should; with InstanceNorm present, both the gradient and the update grow as \(1/A\). Since the rank-one projections are identical across the two columns, they are never the source of the behavior. Consequently, if near-equilibrium accuracy matters, the fix is to remove InstanceNorm from the functional nets—this is also the diagnosed cause of the long-time heat under-dissipation reported in \S\ref{sec:longhorizon} (Appendix~\ref{app:underdiss}).

\begin{table}[H]
\centering
\caption{\textbf{(e) InstanceNorm diagnosis} on a fresh random-init heat model.
With InstanceNorm, $\lVert\delta E\rVert$ and the update grow as $1/A$; without
it they stay bounded and the update shrinks with the field. The projections are
identical in both.}
\label{tab:stab-e}
\begin{tabular}{lcccc}
\toprule
$A$ & $\lVert\delta E\rVert$ (IN) & $\lVert\delta E\rVert$ (no IN) & $\lVert\text{upd}\rVert$ (IN) & $\lVert\text{upd}\rVert$ (no IN) \\
\midrule
$10^{0}$  & $2.57\!\times\!10^{-3}$ & $1.44\!\times\!10^{-5}$ & $5.66\!\times\!10^{-4}$ & $2.75\!\times\!10^{-6}$ \\
$10^{-1}$ & $2.22\!\times\!10^{-2}$ & $1.43\!\times\!10^{-5}$ & $6.03\!\times\!10^{-3}$ & $3.21\!\times\!10^{-7}$ \\
$10^{-2}$ & $2.38\!\times\!10^{-1}$ & $1.43\!\times\!10^{-5}$ & $7.30\!\times\!10^{-2}$ & $3.23\!\times\!10^{-8}$ \\
$10^{-3}$ & $1.56\!\times\!10^{0}$  & $1.43\!\times\!10^{-5}$ & $3.98\!\times\!10^{-1}$ & $3.36\!\times\!10^{-9}$ \\
$10^{-4}$ & $5.10\!\times\!10^{0}$  & $1.43\!\times\!10^{-5}$ & $1.62\!\times\!10^{0}$  & $8.01\!\times\!10^{-10}$ \\
$10^{-5}$ & $2.02\!\times\!10^{1}$  & $1.43\!\times\!10^{-5}$ & $9.73\!\times\!10^{0}$  & $3.76\!\times\!10^{-10}$ \\
$10^{-6}$ & $1.51\!\times\!10^{0}$  & $1.43\!\times\!10^{-5}$ & $2.09\!\times\!10^{1}$  & $3.38\!\times\!10^{-10}$ \\
\bottomrule
\end{tabular}
\end{table}

\paragraph{(d) Long rollout toward equilibrium.} Over a 2000-step heat rollout on the trained model, no NaN or Inf appeared at any step. Throughout the decay of the field, the update norm holds steady at \(O(10^{-2})\) and \(r_E\) remains near \(10^{-8}\). The physical energy \(\tfrac12\lVert u\rVert^2\) falls during the first few hundred steps and then rises slightly—reflecting the under-dissipation discussed in \S\ref{sec:longhorizon}—while all structural identities stay intact. Consequently, the projections remain well-defined and bounded over the full trajectory.

\medskip
\noindent\emph{Summary.}  As contractions (\(\lVert I-P_w\rVert\le 1\)), the rank-one projections shrink smoothly toward the identity when their direction vanishes, and they deliver finite, bounded updates across fourteen decades of gradient norm, in-model down to \(\tau=10^{-9}\), and through a 2000-step rollout. The projections thus introduce no instability. Near equilibrium, the sole amplitude-dependent scale effect stems from the InstanceNorm in the functional networks—an architectural choice unrelated to the construction itself—which we flag as a limitation (\S\ref{sec:limitations}) and examine in Appendix~\ref{app:underdiss}.

\subsection{Instance normalization in the functional networks: necessity and cost}
\label{app:underdiss}

According to probes (c) and (e) of Appendix~\ref{app:stability}, the instance normalization in the \(E,S\) backbones forces \(\lVert\delta E/\delta u\rVert\) to scale as \(1/\lVert u\rVert\). To quantify both the benefit and the drawback of this choice, this appendix uses a single ablation: GENERIC-noIN, which matches GENERIC-FNO except that every InstanceNorm in \(E_\theta,S_\phi\) is replaced by the identity at initialization (with operators, projections, and heads unchanged), and which is trained under the same protocol as the model of record.

\paragraph{Necessity at 2D resolution.} Training fails for the no-IN model at \(128\times128\). Its loss stays pinned at the initial value, the per-step change in the learned functionals is below single precision (\(\Delta E=\Delta S=0\) to \(10^{-8}\)), and on advection the energy-tracking error is exactly zero—the hallmark of the identity map \(u_{t+1}=u_t\), which conserves \(Q\) exactly on a reversible flow. Its rollout errors (\(0.095\) heat, \(0.170\) advection, \(0.017\) Burgers; three seeds) coincide with the persistence values in Table~\ref{tab:acc-2d}. The reason is the \(\sim\!200\times\) smaller gradient scale at initialization (probe (e)): the optimizer never escapes the do-nothing basin. Hence, normalization is indispensable for the 2D functional networks to learn.

\paragraph{Cost, measured where training succeeds.} In 1D, the networks are small enough to train without normalization, which isolates the artifact's cost. Table~\ref{tab:noin-1d} presents five seeds under the protocol of Appendix~\ref{app:1dfno}. Removing IN improves GENERIC-FNO on every PDE in every seed—by \(45\times\) on heat, \(70\times\) on advection, and \(4\times\) on Burgers—and cuts heat monotonicity violations from \(21\%\) of steps to \(0\%\). The reversible diagnostic also sharpens: \(r_{\mathrm{mech}}\) on advection is \(0.008\pm0.007\), with \(\rho_M\le0.03\). Table~\ref{tab:underdiss} confirms the same trend over 200 steps: the no-IN model matches the exact heat decay to within \(1\%\) at every checkpoint, whereas the model of record plateaus and drifts above its initial energy.

\begin{table}[H]
\centering
\caption{\textbf{1D-FNO normalization ablation} (rollout $L^2$, five seeds,
repo protocol). GENERIC-noIN is better than GENERIC-FNO in five of five seeds
on every PDE.}
\label{tab:noin-1d}
\begin{tabular}{lcccc}
\toprule
PDE & FNO & EP-FNO & GENERIC-FNO & GENERIC-noIN \\
\midrule
Heat      & $0.082\pm0.012$ & $0.066\pm0.009$ & $0.081\pm0.019$ & $\mathbf{0.002\pm0.000}$ \\
Advection & $0.160\pm0.023$ & $0.162\pm0.019$ & $0.362\pm0.069$ & $\mathbf{0.005\pm0.001}$ \\
Burgers   & $0.070\pm0.033$ & $0.066\pm0.023$ & $0.042\pm0.015$ & $\mathbf{0.011\pm0.006}$ \\
Wave      & $0.490\pm0.050$ & $0.490\pm0.041$ & $0.479\pm0.045$ & $\mathbf{0.396\pm0.029}$ \\
\midrule
Heat, $Q$-increasing steps (\%) & 9.3 & 8.2 & 21.0 & 0.0 \\
\bottomrule
\end{tabular}
\end{table}

\begin{table}[H]
\centering
\caption{\textbf{1D heat, $200$-step rollout} (single seed; training horizon
$20$). Physical energy $Q/Q_0$ against the exact solution. The model of record
conserves the learned $E_\theta$ at every step yet lets $Q$ rise above its
initial value; the no-IN model tracks the truth.}
\label{tab:underdiss}
\begin{tabular}{lccccccc}
\toprule
Model & $L^2$@20 & $L^2$@200 & $Q/Q_0$@20 & @50 & @100 & @200 & $Q$-inc.\ (\%) \\
\midrule
Ground truth & -- & -- & 0.748 & 0.546 & 0.396 & 0.281 & 0.0 \\
FNO          & 0.190 & 4.192 & 0.711 & 0.547 & 0.601 & 1.306 & 38.7 \\
EP-FNO       & 0.190 & 3.604 & 0.708 & 0.537 & 0.585 & 1.498 & 37.9 \\
GENERIC-FNO  & 0.250 & 3.337 & 0.901 & 0.890 & 1.020 & 1.117 & 44.9 \\
GENERIC-noIN & \textbf{0.004} & \textbf{0.052} & 0.749 & 0.548 & 0.399 & 0.285 & 0.0 \\
\bottomrule
\end{tabular}
\end{table}

One 1D no-IN run (Burgers, seed 4) reports \(r_E=0.3\), with an absolute energy drift of \(8\times10^{-10}\) per step. This places the run in the \(\varepsilon\)-floor regime of Appendix~\ref{app:stability}(b), where \(\lVert\delta E/\delta u\rVert^2\) falls below \(10^{-12}\). In that regime the normalized ratio is undefined, but the update itself is negligible, so this is not a degeneracy violation.

\paragraph{Reading.}
Normalization serves two linked purposes: it makes the 2D functionals trainable, and it caps their accuracy near equilibrium—both effects stem from the same mechanism. The ablation quantifies the cost of the current normalization choice and points toward the remedy, namely a normalization that supplies a usable gradient scale at initialization without rendering the functionals scale-invariant in \(u\). No such normalization is proposed here, and the model of record used throughout the paper remains unchanged.

\section{Parameter-Matched DeepONet Ablation}
\label{app:donmatch}

The accuracy gains reported for DeepONet in Section \ref{sec:accuracy} (Table \ref{tab:acc-don}) were achieved by a GENERIC-DeepONet variant that carried an additional $E,S$ branch, giving it roughly three times the parameter count of the baseline ($175$K versus $58$K). To exclude model capacity as the source of the improvement, both architectures were retrained under a single protocol at two matched budgets, approximately $58$K and $175$K, with matching performed in both directions: GENERIC-DeepONet was shrunk to the baseline's budget, and the baseline was grown to GENERIC-DeepONet's budget. Parameter counts were aligned to within about $2\%$ (baseline: $58{,}113$ and $173{,}409$; GENERIC: $56{,}790$ and $174{,}582$). Table \ref{tab:donmatch} shows that GENERIC-DeepONet produced lower $10$-step rollout error at every PDE and both budgets, winning all $6/6$ matched comparisons. Shrinking GENERIC-DeepONet to $58$K barely altered its error, whereas enlarging the baseline to $175$K did not close the gap and slightly worsened it on Burgers. At both budgets, the unconstrained DeepONet's error matched the persistence baseline (Table \ref{tab:acc-don}), meaning it failed to learn field advancement at any size tested. The advantage is thus structural, not a capacity artifact.

\begin{table}[H]
\centering
\caption{\textbf{Parameter-matched DeepONet ablation} ($10$-step rollout $L^2$,
single matched run per cell; lower is better, best per budget in bold). Both models
are trained under the same protocol at two matched parameter budgets. GENERIC-DeepONet
wins at every PDE and both budgets, and the baseline does not catch up when grown to
$175$K, so its advantage reflects the structure rather than the parameter count. The
gaps far exceed the five-seed variation in Table~\ref{tab:acc-don}.}
\label{tab:donmatch}
\begin{tabular}{lcccc}
\toprule
 & \multicolumn{2}{c}{$\approx$58K params} & \multicolumn{2}{c}{$\approx$175K params} \\
\cmidrule(lr){2-3}\cmidrule(lr){4-5}
PDE & DeepONet & GENERIC & DeepONet & GENERIC \\
\midrule
Heat      & 0.055 & \textbf{0.004} & 0.052 & \textbf{0.004} \\
Advection & 0.209 & \textbf{0.010} & 0.190 & \textbf{0.009} \\
Burgers   & 0.024 & \textbf{0.009} & 0.037 & \textbf{0.015} \\
\bottomrule
\end{tabular}
\end{table}

\section{Gauge-Dependent Mechanism Diagnostics}
\label{app:mechanism}

All thermodynamic conclusions in the main text rest on the gauge-invariant rate \(r_{\mathrm{mech}}\) (Section~\ref{sec:gauge}, Table~\ref{tab:gauge}). Here, for completeness, we introduce the two mechanism diagnostics that operate directly on the learned \((E,S,L,M)\) decomposition, and we clarify why they are relegated to supporting color: each one is gauge-dependent.

The dissipative fraction captures the share of the raw update that passes through the irreversible channel:
\begin{equation}
  \rho_M = \frac{\lVert M\,\delta S/\delta u\rVert}
                {\lVert L\,\delta E/\delta u\rVert + \lVert M\,\delta S/\delta u\rVert} \in [0,1],
\end{equation}
Meanwhile, the normalized learned-entropy production evaluates how closely the entropy gradient aligns with the dynamics, via the cosine of their angle:
\begin{equation}
  r_S = \frac{\langle \delta S/\delta u,\, \partial_t u\rangle}
             {\lVert \delta S/\delta u\rVert\,\lVert \partial_t u\rVert}.
\end{equation}
Because both diagnostics are invariant under rescaling of \(S\), they might seem suitable for reporting as physical observables—but that impression is misleading. Appendix~\ref{app:gauge} (Remark following Proposition~\ref{prop:gauge-inv}) proves why: \(\rho_M\) hinges on the separate contributions \(L\,\delta E/\delta u\) and \(M\,\delta S/\delta u\), which gauge-equivalent tuples can redistribute while preserving their total, and \(r_S\) hinges on the orientation of \(\delta S/\delta u\), which the gauge transformation also shifts.

Empirical evidence confirms this. For Burgers on the 2D FNO backbone, the dissipative fraction \(\rho_M\) ranges from \(0.001\) in one seed to \(1.000\) in another—the identical flow is represented as nearly fully reversible in one instance and nearly fully irreversible in the other—whereas the rollout error (\(\approx 0.028\)) and the gauge-invariant \(r_{\mathrm{mech}}\) (\(\approx 0.008\)) stay essentially constant across those seeds. In other words, the apparent mechanism is an artifact of the gauge choice, whereas the dissipation of physical energy is not. This is precisely the prediction of Proposition~\ref{prop:gauge-inv}, and it explains why the testable thermodynamic statements are confined to the main-text \(r_{\mathrm{mech}}\) table rather than appearing here.

\begin{table}[H]
\centering
\caption{\textbf{Gauge-\emph{dependent} mechanism diagnostics} $\rho_M$ and $r_S$
per PDE and backbone (five-seed mean~$\pm$~std). These values shift under the
entropy gauge and are reported only to illustrate the learned channel split; see
the gauge-invariant $r_{\mathrm{mech}}$ in Table~\ref{tab:gauge} for the
load-bearing result. The large seed-to-seed variation is itself a symptom of the
gauge freedom: for 2D-FNO Burgers, $\rho_M$ ranges from $0.001$ to $1.000$ across
seeds at essentially unchanged accuracy and $r_{\mathrm{mech}}$.}
\label{tab:mechanism}
\begin{tabular}{llcc}
\toprule
Backbone & PDE & $\rho_M$ & $r_S$ \\
\midrule
\multirow{3}{*}{2D FNO}
 & Heat      & $0.9995\pm0.0002$ & $0.49\pm0.10$ \\
 & Advection & $0.000\pm0.000$ & $0.000\pm0.000$ \\
 & Burgers   & $0.71\pm0.40$ & $0.57\pm0.29$ \\
\midrule
\multirow{3}{*}{1D FNO}
 & Heat      & $0.71\pm0.26$ & $0.33\pm0.08$ \\
 & Advection & $0.04\pm0.03$ & $0.03\pm0.02$ \\
 & Burgers   & $0.41\pm0.30$ & $0.35\pm0.27$ \\
\midrule
\multirow{3}{*}{1D DeepONet}
 & Heat      & $0.91\pm0.06$ & $0.68\pm0.11$ \\
 & Advection & $0.10\pm0.14$ & $0.03\pm0.02$ \\
 & Burgers   & $0.51\pm0.14$ & $0.32\pm0.05$ \\
\bottomrule
\end{tabular}
\end{table}

\section{Training and Reproducibility Details}
\label{app:repro}

\paragraph{Data.} Reference trajectories are generated with a spectral solver from band-limited random initial fields, each constrained to a fixed maximum wavenumber per PDE. The 2D experiments use $150$ training samples with $15$ time steps, while the 1D experiments use $200$ samples with $20$ time steps. The primary 2D benchmark is trained and evaluated at $128\times128$ resolution. For the super-resolution study (§\ref{sec:superres}), training occurs at $64\times64$, with zero-shot evaluation up to $256\times256$ on data band-limited below the coarse-grid Nyquist frequency; because the underlying continuous dynamics are identical across resolutions, the results are directly comparable.

\paragraph{Models and training.} All models are optimized with AdamW (weight decay $10^{-4}$) under a cosine learning-rate schedule. The loss is mean-squared one-step error, augmented by a short autoregressive rollout term (3 steps in 1D, 2 in 2D) applied to a random 25–30\% of batches after a warm-up period, with gradient clipping at 1.0. GENERIC models receive no supervision on $E$ or $S$ and no degeneracy penalty, since those conditions hold by construction. Each run draws 200 (1D) or 150 (2D) trajectories, split 80/20 into training and test sets. One seed is fixed per run, and within a run all models draw their initializations from a single continuing random stream, so models compared under the same seed share data but not initialization. For the headline benchmarks (Tables~\ref{tab:acc-2d}--\ref{tab:acc-1d}), the variational derivative is supplied by the raw autodiff gradient. Because the learned multipliers absorb the constant relating this gradient to the $L^2$ gradient at a fixed grid, the two differ only by that constant, to which the projections are invariant. The $L^2$ scaling $N^d/|\Omega|$, by contrast, is what the super-resolution study (\S\ref{sec:superres}) and the 3D verification employ; as Proposition~\ref{prop:resinv} demands, grid consistency of the step requires it (Appendix~\ref{app:setting}). Parameter counts are as follows: the 2D FNO and EP-FNO baselines have 2.10M parameters, GENERIC-FNO has 1.00M; the 1D FNO baselines have 71K, GENERIC has 47K; the DeepONet baseline has 58K, and GENERIC-DeepONet has 175K (due to its second functional branch). The energy and entropy functionals share the backbone family of their respective baseline and add a spatial-mean pooling layer followed by a two-layer MLP head; the operators $L$ and $M$ act on a fixed low-frequency band. Training runs for 120 epochs in 2D and 150 epochs in 1D.

\paragraph{Long-horizon protocol (\S\ref{sec:longhorizon}).}
Each model is trained at $128\times128$ on $96$ trajectories of $15$ steps for $50$ epochs with Adam (learning rate $10^{-3}$, batch $8$), minimizing the relative $L^2$ error accumulated over $K=2$ rolled-out steps; EP-FNO adds its energy penalty with unit weight. Rollouts are evaluated on $8$ held-out trajectories integrated for $200$ steps, with error normalized by the initial reference norm, so a prediction that decays to zero scores exactly $1$. FNO-L is an FNO of width $128$ and $12$ layers ($37.8$M parameters), the smallest in a fixed ladder whose measured training-step time under this protocol reaches that of GENERIC-FNO ($385$ vs.\ $303$\,ms on the same GPU); the U-Net is a four-level residual encoder--decoder (base width $32$, $1.9$M parameters) with circular padding for the periodic domain. Single run per cell.

\paragraph{Seeds and evaluation.} Every reported value is the mean ± standard deviation over five seeds, with both the sampled data and model initialization resampled per seed. Checkpoints are saved only on the first seed. Accuracy is measured as the relative $L^2$ error of a 10-step autoregressive rollout on held-out trajectories. Structural residuals ($r_E$, degeneracy) and the gauge-invariant $r_{\mathrm{mech}}$ are computed on held-out initial conditions as defined in §\ref{sec:gauge}. With only five seeds, the reported standard deviations are indicative rather than precise variance estimates; we rely on them only where the effect is large relative to the spread—for example, the order-of-magnitude difference in seed variance on 2D heat (Table~\ref{tab:acc-2d})—and the headline accuracy and long-horizon gaps far exceed that spread. 

\paragraph{Computational cost.} The overhead beyond a plain spectral operator comes from reverse-mode autodiff through the learned functionals: each step requires two backward passes to compute $\delta E/\delta u$ and $\delta S/\delta u$. This cost is intrinsic to the method, not an implementation detail. At the $128^2$ configuration (Table~\ref{tab:compute}; batch size 8, single GPU), the deployed forward pass costs approximately $4.4\times$ that of a plain FNO forward, a training step approximately $10\times$ (because the loss backpropagates through the functional-gradient graph, making the step a double-backward, which dominates and does not parallelize on GPU, where a plain FNO step takes only ${\sim}10$~ms), and peak training memory approximately $2.3\times$---all at half the parameters ($1.00$M vs.\ $2.10$M). We report this plainly as the price of exact variational derivatives; disabling \texttt{create\_graph} at inference saves only about $7\%$, since the dominant cost is the functional gradients themselves, not graph retention. Two factors limit the practical impact: training is one-time and the absolute step cost is small (${\sim}0.1$~s), and the projection machinery itself is $O(N)$ and negligible beside the FFTs (\S\ref{sec:discretization}). The functional-gradient cost scales with the number of fields; cheaper $E,S$ gradients (via shared backbones or a fused double-backward) are natural directions for future work.

\begin{table}[H]
\centering
\caption{\textbf{Computational cost} at the 2D $128\times128$ config (batch~$8$,
single GPU; latencies averaged over 30 iterations after warmup). ``Forward'' is the
deployed inference latency (\texttt{create\_graph=False}); the training step is a
double-backward through the learned functionals. GENERIC-FNO's overhead is real
(forward $\approx\!4.4\times$, step $\approx\!10\times$, peak memory
$\approx\!2.3\times$ a plain FNO), though it uses half the parameters and the
absolute step cost is small ($\approx\!0.1$\,s).}
\label{tab:compute}
\begin{tabular}{lcccc}
\toprule
Model & Params & Forward (ms) & Train step (ms) & Peak mem (MB) \\
\midrule
FNO         & 2{,}102{,}529 & 3.0  & 9.6  & 358 \\
EP-FNO      & 2{,}102{,}529 & 2.8  & 9.8  & 392 \\
GENERIC-FNO & 1{,}001{,}958 & 13.0 & 98.6 & 809 \\
\bottomrule
\end{tabular}
\end{table}

\paragraph{Code.}
Exact architectures, wavenumber bands, learning rates, and batch sizes are fixed in the
accompanying code, available as an anonymized repository at
\url{https://anonymous.4open.science/r/GENERIC-FNO-F414/}, which reproduces every table
and figure---including the five-seed wrappers, the capacity sweep, the projection-stability
probes, the normalization ablation, the super-resolution, long-horizon (with compute-matched baselines), 3D-verification, and Euler-versus-RK4 studies. A de-anonymized release will follow upon acceptance.

\section{Supplementary Figures}
\label{app:figures}

This appendix collects five figures that visualize, rather than re-tabulate, the
exactness, gauge, ordering, integrator, and learned-functional results.
All are generated by the released script from the same five-seed runs.

\begin{figure}[H]
\centering
\includegraphics[width=0.82\linewidth]{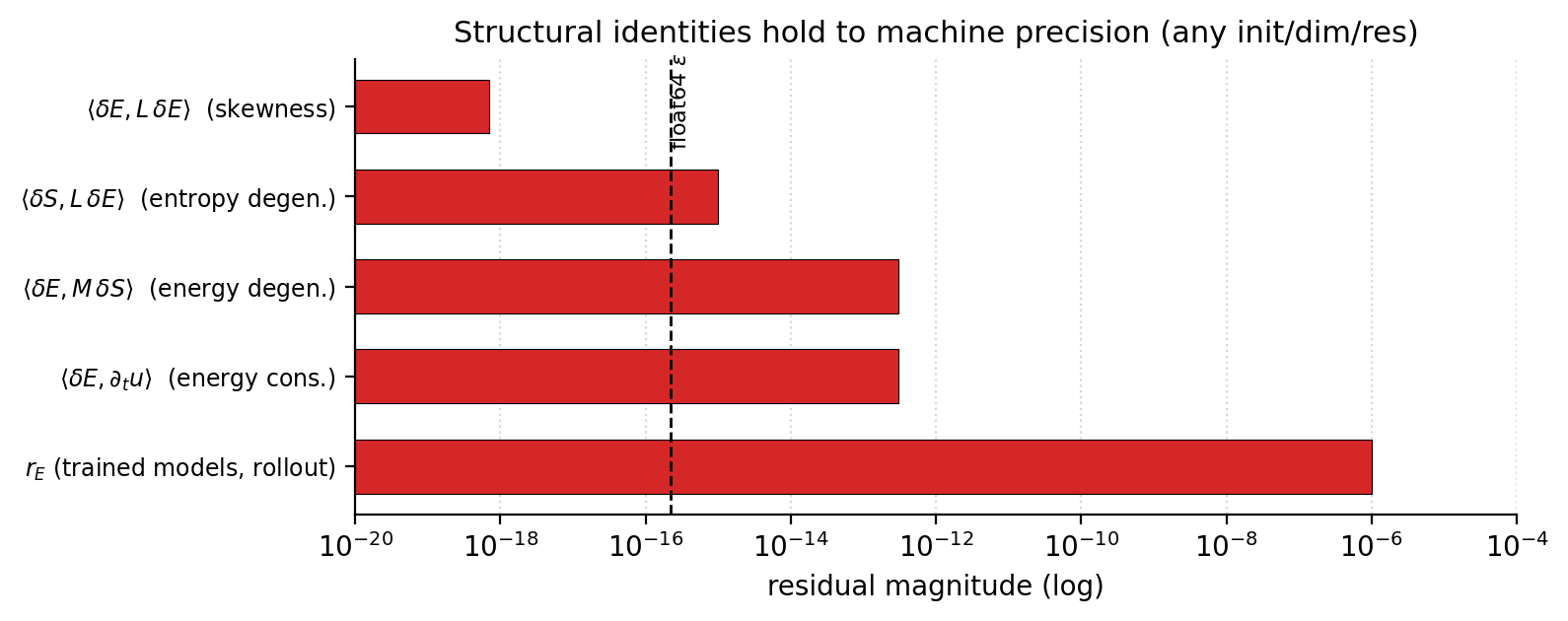}
\caption{\textbf{The structural identities hold to machine precision.}
Magnitude (log scale) of the GENERIC residuals at random initialization on a
$16\times16$ grid: reversible skewness $\langle\delta E,L\,\delta E\rangle$, the
two degeneracy conditions $\langle\delta S,L\,\delta E\rangle$ and
$\langle\delta E,M\,\delta S\rangle$, energy conservation
$\langle\delta E,\partial_t u\rangle$, and the trained-model energy residual
$r_E$. All sit at or near the float64 machine epsilon (dashed line), confirming
that degeneracy is enforced by construction rather than approximately
(\S\ref{sec:guarantees}, Appendix~\ref{app:proofs}). Only $r_E$, which also
reflects the explicit-Euler step, rises to $\sim\!10^{-6}$.}
\label{fig:degeneracy}
\end{figure}

\begin{figure}[H]
\centering
\includegraphics[width=\linewidth]{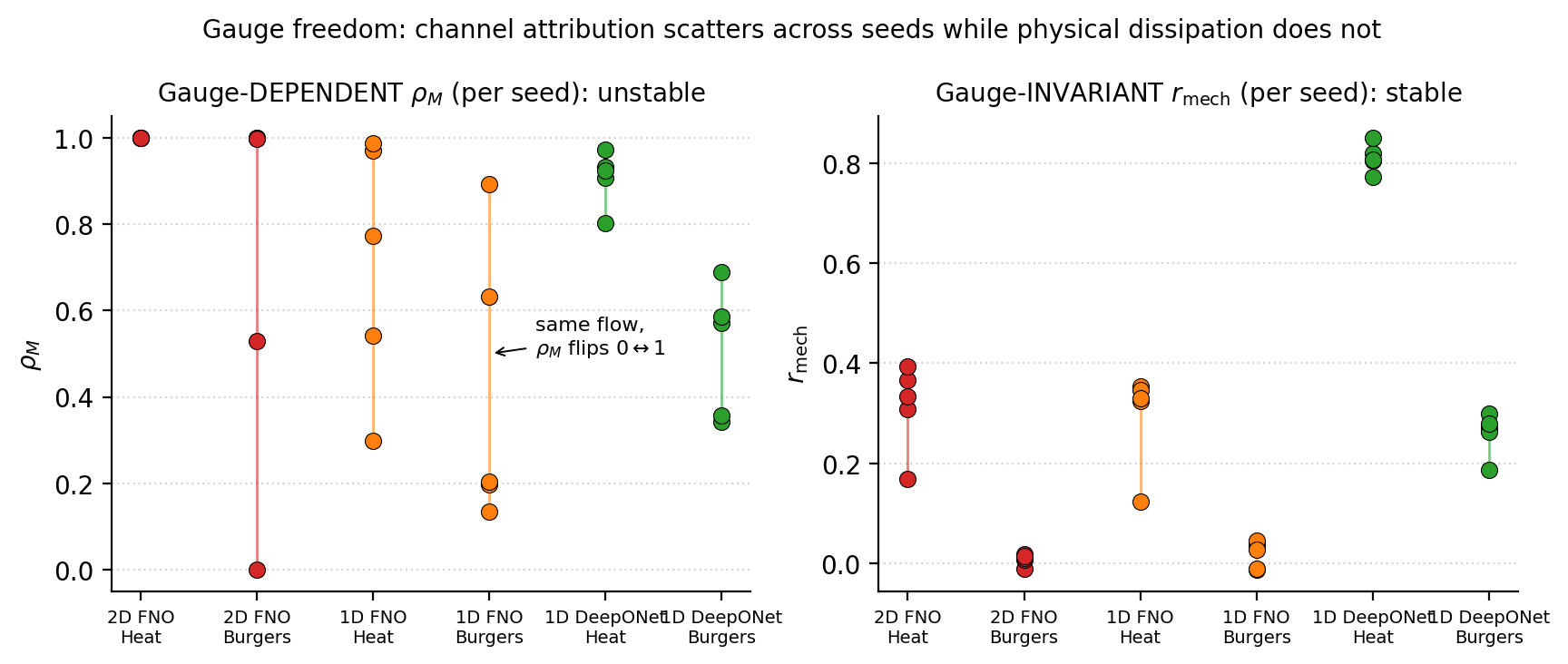}
\caption{\textbf{Gauge freedom, made visual.} Per-seed values of the
gauge-\emph{dependent} dissipative fraction $\rho_M$ (left) and the
gauge-\emph{invariant} dissipation rate $r_{\mathrm{mech}}$ (right), for the
non-reversible PDEs across all three backbones. The channel attribution $\rho_M$
scatters wildly across seeds---for 2D-FNO Burgers it spans the full range from
$0.002$ to $1.000$, i.e.\ the model represents the \emph{same} flow once as almost
purely reversible and once as almost purely irreversible---whereas
$r_{\mathrm{mech}}$, computed from the fixed physical energy, stays stable for that
same flow. This is the empirical signature of the non-uniqueness proved in
Appendix~\ref{app:gauge}, and the reason thermodynamic claims rest on
$r_{\mathrm{mech}}$ (\S\ref{sec:gauge}, Appendix~\ref{app:mechanism}).}
\label{fig:gauge_variance}
\end{figure}

\begin{figure}[H]
\centering
\includegraphics[width=0.6\linewidth]{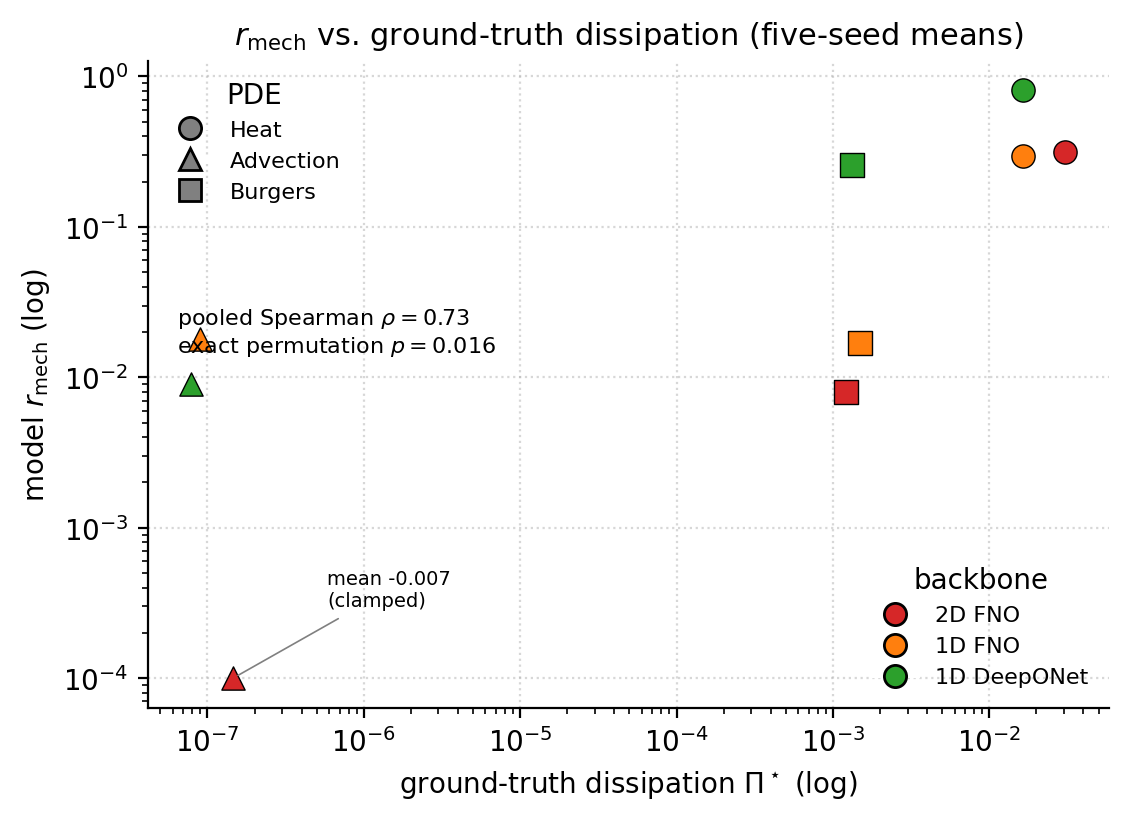}
\caption{\textbf{$r_{\mathrm{mech}}$ against the ground-truth dissipation.}
Each point is one (backbone, PDE) pair: the model's gauge-invariant rate
$r_{\mathrm{mech}}$ against the ground-truth rate $\Pi^\star$ (both log scale;
markers denote PDE, colors denote backbone; five-seed means). The reversible
advection points cluster at the bottom-left ($\approx 0$; the 2D-FNO advection
mean of $-0.007$ is clamped to the axis floor) and heat sits top-right in every
backbone. Within backbones the ordering is exact on the 2D FNO and DeepONet
(Spearman $\rho=1$) and not on the 1D FNO ($\rho=0.5$, Burgers and advection
within noise); pooled over all nine points $\rho=0.73$ with exact permutation
$p=0.016$ (\S\ref{sec:thermo}). The comparison is ordinal: $r_{\mathrm{mech}}$
and $\Pi^\star$ use different normalizations, so only the ranking---not the
absolute offset between the axes---is meaningful.}
\label{fig:rmech_ordering}
\end{figure}

\begin{figure}[H]
\centering
\includegraphics[width=\linewidth]{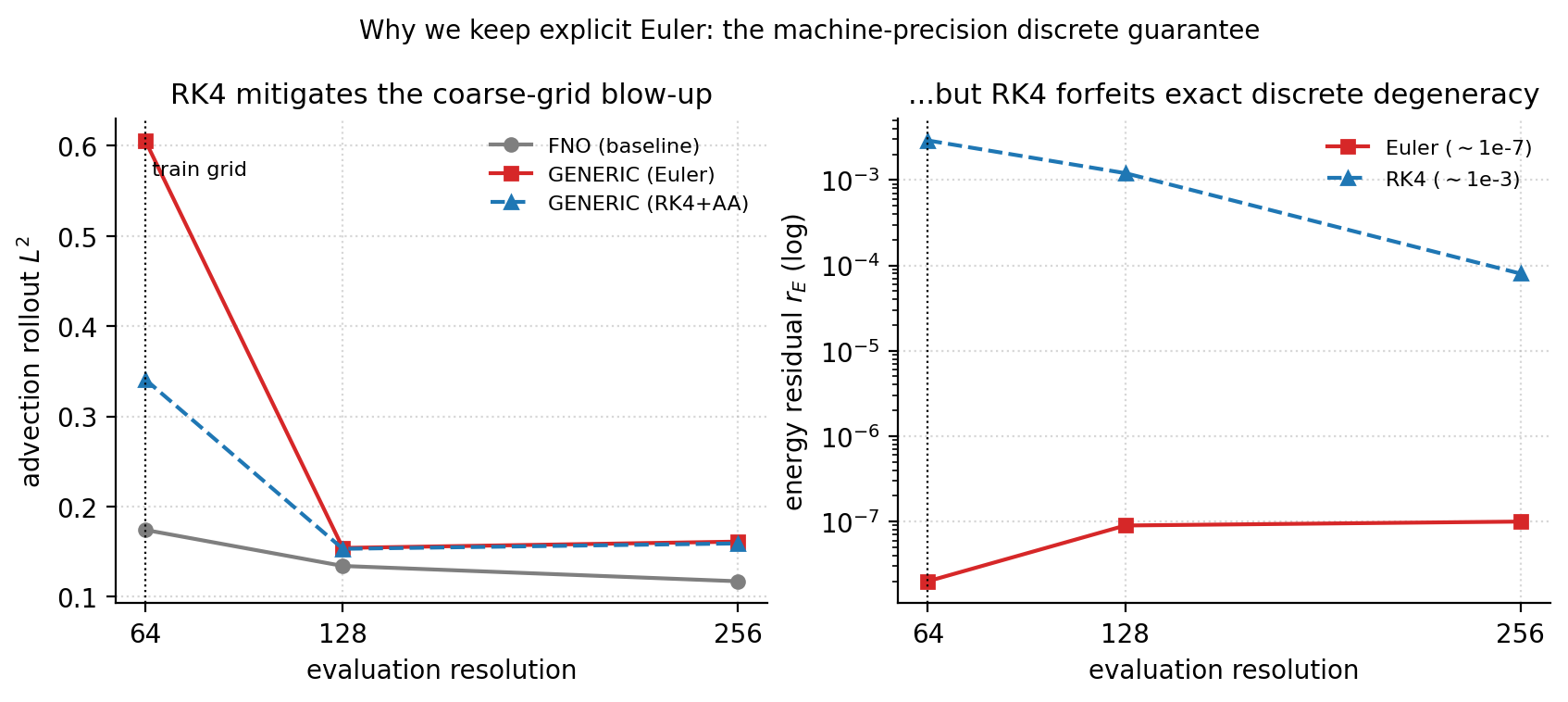}
\caption{\textbf{Why we keep explicit Euler.} Advection trained at $64$ and
evaluated to $256$. \emph{Left:} rollout $L^2$ for the FNO baseline, GENERIC with
explicit Euler, and GENERIC with RK4 + anti-aliasing. RK4 roughly halves the
coarse-grid blow-up (Euler $0.61\to$ RK4 $0.34$ at the train grid) and the two
integrators agree at finer grids. \emph{Right:} the per-step energy residual
$r_E$, which RK4 raises from $\sim\!10^{-7}$ (Euler, exact discrete degeneracy)
to $\sim\!10^{-3}$. We retain Euler so the discrete update conserves energy to
machine precision (\S\ref{sec:limitations}).}
\label{fig:euler_rk4}
\end{figure}

\begin{figure}[H]
\centering
\includegraphics[width=\linewidth]{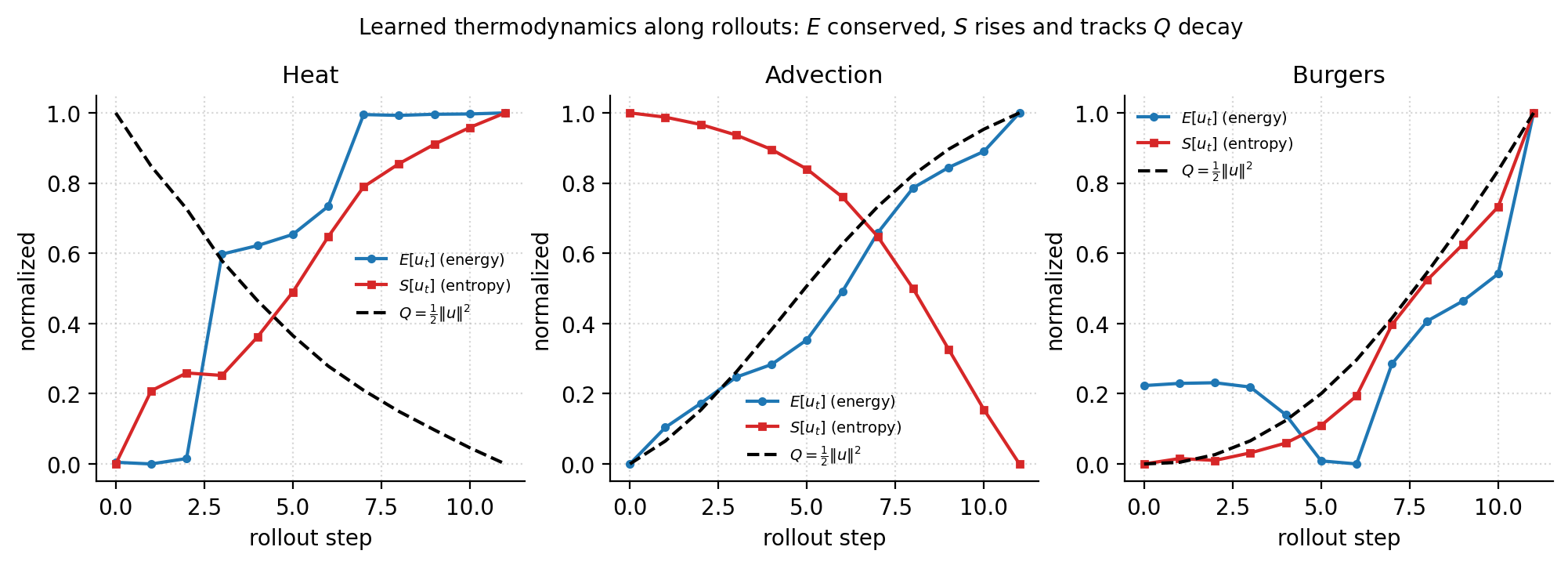}
\caption{\textbf{Learned thermodynamics along a rollout (illustrative, single
realization).} For one trained model per PDE we plot the learned energy
$E_\theta[u_t]$, the learned entropy $S_\phi[u_t]$, and the fixed mechanical
energy $Q=\tfrac12\|u_t\|^2$ along a trajectory, each affinely rescaled to $[0,1]$
within its panel for display. In this realization $E_\theta$ stays flat
(conserved) while $S_\phi$ rises monotonically and tracks the decay of $Q$ for the
dissipative and mixed PDEs, and both are flat for reversible advection---so the
unsupervised functionals behave like a physical energy and entropy. We show this
only as an interpretive aid: the absolute values, scale, and per-channel
attribution of $E_\theta,S_\phi$ are gauge-dependent (\S\ref{sec:gauge}), so the
figure is \emph{not} a gauge-invariant claim. The falsifiable, gauge-invariant
statements are the dissipation diagnostics of Table~\ref{tab:gauge}
(\S\ref{sec:thermo}).}
\label{fig:interpretability}
\end{figure}

\end{document}